\date{}
\definecolor{darkblue}{rgb}{0, 0, 0.5}
\theoremstyle{plain}
\newtheorem{theorem}{Theorem}[section]
\newtheorem{proposition}[theorem]{Proposition}
\newtheorem{lemma}[theorem]{Lemma}
\newtheorem{corollary}[theorem]{Corollary}
\theoremstyle{definition}
\newtheorem{definition}[theorem]{Definition}
\newtheorem{assumption}[theorem]{Assumption}
\newtheorem{example}[theorem]{Example}
\theoremstyle{remark}
\newtheorem{remark}[theorem]{Remark}
\renewcommand{\cal}[1]{\mathcal{#1}}
\newcommand{\R}{\mathbb{R}}
\renewcommand{\P}{\mathop{\mathbb{P}}}
\newcommand{\E}{\mathop{\mathbb{E}}}
\newcommand{\mrk}{{\sf Watermark}}
\newcommand{\extract}{{\sf Detect}}
\newcommand{\ek}{{\sf k}}
\newcommand{\aux}{{\sf aux}}
\newcommand{\ed}{{\sf ED}}
\newcommand{\fullres}[2]{$#1_{#2}$}
\def\cA{\mathcal{A}}
\def\cD{\mathcal{D}}
\def\cM{\mathcal{M}}
\def\cV{\mathcal{V}}
\newcommand{\method}{\textsc{Unigram-Watermark}\xspace}
\newcommand{\baseline}{KGW+23\xspace}
\newcommand{\yw}[1]{\textit{\textcolor{purple}{[yuxiang]: #1}}}
\definecolor{GoogleGreen}{RGB}{60,186,84}
\definecolor{GoogleRed}{RGB}{219,50,54}
\newcommand{\red}[1]{\textcolor{GoogleRed}{#1}} 
\newcommand{\green}[1]{\textcolor{GoogleGreen}{#1}} 
\title{Provable Robust Watermarking for AI-Generated Text}
\author{%
  % David S.~Hippocampus\thanks{Use footnote for providing further information
  %   about author (webpage, alternative address)---\emph{not} for acknowledging
  %   funding agencies.} \\
  Xuandong Zhao \quad Prabhanjan Ananth \quad Lei Li \quad Yu-Xiang Wang \\
  UC Santa Barbara\\
  \texttt{\{xuandongzhao,prabhanjan,leili,yuxiangw\}@cs.ucsb.edu} \\
  % examples of more authors
  % \And
  % Coauthor \\
  % Affiliation \\
  % Address \\
  % \texttt{email} \\
  % \AND
  % Coauthor \\
  % Affiliation \\
  % Address \\
  % \texttt{email} \\
  % \And
  % Coauthor \\
  % Affiliation \\
  % Address \\
  % \texttt{email} \\
  % \And
  % Coauthor \\
  % Affiliation \\
  % Address \\
  % \texttt{email} \\
}
\begin{document}

\maketitle

\begin{abstract}
% As large language models (LLMs) generate text closer to human written, 
% it becomes incredibly difficult to detect machine-generated text. 
We study the problem of watermarking large language models (LLMs) generated text 
— one of the most promising approaches for addressing the safety challenges of LLM usage. 
In this paper, we propose a rigorous theoretical framework to quantify the effectiveness and robustness of LLM watermarks.
% Specifically, the framework quantifies 
% the quality drop of watermarked LLM, the detection correctness, and its robustness under evasion attacks. 
We propose a robust and high-quality watermark method, \method, by extending an existing approach with a simplified fixed grouping strategy. 
We prove that our watermark method enjoys guaranteed generation quality, correctness in watermark detection, and is robust against text editing and paraphrasing. 
Experiments on three varying LLMs and two datasets verify that our \method achieves superior detection accuracy and comparable generation quality in perplexity, thus promoting the responsible use of LLMs. Code is available at \url{https://github.com/XuandongZhao/Unigram-Watermark}.

% To address this challenge, we present \method, a robust and high-quality solution designed to ascertain whether a piece of text originates from a specific model. Our approach extends existing watermarking strategies and employs a fixed group design to enhance robustness against editing and paraphrasing attacks. We show that our watermarked language model enjoys strong provable guarantees on generation quality, correctness in detection, and security against evasion attacks. Experimental results on various large language models (LLMs) and diverse datasets demonstrate that our method achieves superior detection accuracy and comparable generation quality in perplexity, thus promoting the responsible use of LLMs. 
% Code is available at \url{https://github.com/XuandongZhao/GPTWatermark} 
\end{abstract}

{\setstretch{1.0}
% \small
\tableofcontents
}
\newpage

\section{Introduction}
\label{sec:intro}
Generative Artificial Intelligence (AI) \citep{brown2020language, ramesh2022hierarchical,saharia2022photorealistic, OpenAI2023GPT4TR} has achieved significant progress in recent years, spanning from computer vision (CV) to natural language processing (NLP). Large language models (LLMs) such as ChatGPT \citep{OpenAI2022ChatGPT} can generate coherent and contextually relevant long-form text in response to user-specified prompts. However, the ease of using LLMs has raised concerns about their potential misuse \citep{zellers2019defending, Weidinger2021EthicalAS, StokelWalker2022AIBC}. For example, LLMs could be used to generate fake news, contaminate web content, or assist in academic dishonesty. Additionally, the proliferation of synthetic data from LLMs poses challenges for training new models, as synthetic data needs to be detected and excluded before model training \citep{Radford2022RobustSR, Carlini2023PoisoningWT}. 

There are two main camps of existing attempts to address these challenges. One camp, inspired by \citet{turing1950computing}, aims at generically distinguishing machine-generated text from that of the humans \citep{Gehrmann2019GLTRSD,Mitchell2023DetectGPTZM,Hovy2016TheEI,zellers2019defending,OpenAI2023Detect}. These works primarily leverage hand-crafted or learned ``statistical patterns'' of generated text, thus their performance is not robust to distribution changes (e.g., by prompting / conditioning), prone to biases \citep{Liang2023GPTDA}, and vulnerable to adversarial attacks. %Moreover, recent research \citep{Sadasivan2023CanAT} presents an impossibility result arguing that, as language models improve over time, AI-generated text increasingly resembles human-generated text, hence rendering any classifiers ineffective.

The other camp advocates active intervention by injecting carefully-designed watermarks to machine-generated text \citep{kirchenbauer2023watermark, Zhao2023ProtectingLG}.  The watermarking approach does not search for statistical patterns (which could be hit-or-miss), but rather deliberately \emph{plant} subtle but distinctive patterns within the content to enable downstream detection. Compared to the passive detection approaches, the watermarking methods aim at determining whether the text is coming from a \emph{specific} language model rather than solving the Turing test generically. As a result, watermarking approaches are robust to distribution-shift and can essentially \emph{prove} --- rather than \emph{predict} --- the origin of the suspect text.

The most notable challenge for the watermarking approach is that the planted patterns could be post-processed away. As an example, \citet{kirchenbauer2023watermark}'s soft watermarking method divides the vocabulary into a ``green list'' and a ``red list'' based on the prefix token, and subtly increases the probability of choosing from the green list. If the watermarked sentence is edited by changing every other token into its synonym, then it is no longer possible to determine the green/red lists for each candidate token, thus ruining the detector. One could also simply paraphrase the sentence as a whole using another off-the-shelf LLM.

In this paper, we take a first stab at formally defining robustness in the context of watermarking LLMs. Our contributions are fourfold.
\begin{enumerate}[leftmargin=*]
    \item We devise a rigorous theoretical framework for quantifying the performance drop, the correctness of detection, and the security property against post-processing. %edits.
    \item We propose to simplify the scheme of \citet{kirchenbauer2023watermark} by using a fixed Green-Red split consistently %The detection algorithm does not require the prompt or access to the LLM to work.
    and show that the new watermark, named \method, is \emph{twice as robust} to edits as the baseline, provably.
    \item We prove that the watermarked LLM is close to the original LLM (in all Renyi divergences) and show that the Type I/Type II errors of the detection algorithm decay exponentially as the suspect text length gets longer and more diverse. 
    %(under weak high-entropy assumptions). 
    %, as well as robustness guarantee that shows \method is \emph{twice as robust} to edits as the baseline. 
    
    %We generalize the watermarking schemes of \citet{kirchenbauer2023watermark} and show that the simplest watermark in this family that uses a fixed Green-Red split consistently, named \method, is \emph{twice as robust} to edits as the baseline, provably.
    \item We conduct experiments utilizing various large language models on diverse datasets. The results indicate that our method achieves superior detection accuracy and improved robustness against different attacks, thus promoting the responsible use of LLMs.
\end{enumerate}

To the best of our knowledge, we are the first to 
%formulate the LLM watermarking as a cryptographic problem and to 
obtain provably robust guarantees for watermarks for LLMs against arbitrary edits. 

\textbf{Related work.} We build upon the work of \citet{kirchenbauer2023watermark} in which the family of $K$-gram (statistical) watermark was proposed\footnote{Note the changed name. \citet{kirchenbauer2023watermark} referred to its (unnamed) soft-watermark that determines the green/red list using a prefix of length $(K-1)$. We think $K$-gram watermark is the most concise and informative name for this family.}. The main method we consider chooses $K=1$, thus its name \method. Our work provides formal theoretical guarantees to this family of $K$-gram watermark. For the sake of a clean presentation, we focus on the case when $K=1$ and discuss the applicability of our results for $K>1$ in the discussion section. Our work is independent of the concurrent work of cryptographic watermarks \citep{aaronson,christ2023undetectable}. In particular, \citet{aaronson}'s proprietary work can also be viewed as an alternative $K$-gram watermark, but uses a cryptographic approach for measuring utility drop, which results in a different kind of tradeoff. We defer detailed discussion to an extended discussion of the related work in Appendix \ref{sec:related}. Technically, the main theoretical tool we used for analyzing dependent random variables and their concentration tightly is due to \citet{albert2019concentration}, the instantiation to our problem is new and nontrivial.

\section{Problem setup and method}
\label{sec:pre}
We start with an overview of the language model watermarking problem. The definitions and notations introduced in this section will be used throughout the paper.

\textbf{Language models.}
A language model (LM) $\mathcal{M}$ is a statistical model that describes the probability of a sequence of words occurring in a sentence. Common neural language models (e.g., GPT-2/3 \citep{Radford2019LanguageMA, brown2020language}) are designed for next-word prediction which typically uses a transformer neural network \citep{vaswani2017attention}.
The LM has a ``vocabulary'' $\mathcal{V}$ with $N:=|\mathcal{V}| = 50,000$ tokens or more \citep{Radford2019LanguageMA, Liu2019RoBERTaAR}. Let $\boldsymbol{x}$ be an input prompt. $\boldsymbol{y} := \left[y_1, \ldots, y_n\right]$ are $n$ tokens generated by $\mathcal{M}$. During inference, $\mathcal{M}$ receives the input prompt $\boldsymbol{x}$ as the prefix of generation. It iteratively computes logit scores $\boldsymbol{\ell}_t$ for every next token. The logits transform into a probability distribution via soft-(arg)max function $\mathbf{p}_t[v]= \frac{\exp \left(\boldsymbol{\ell}_t[v] \right)}{\sum{i \in \cV} \exp \left(\boldsymbol{\ell}_t[i]\right)} \text{ for all }v\in\cV$. The LM then samples the next token from this distribution: $y_t \sim \mathbf{p}_t$.

% \yw{This theorem statement is incompatible with $\mrk$ from the definition.  @Xuandong, please separate the watermark algorithm --- which is just Line 3 with the generation part of it.  At the moment, it seems that you are generating a new green list for each prompt $x$.} \xd{Shall I write another algorithm called watermark? Then we have three algorithms: watermarking, watermarking text generation, watermarking text detection.}

\subsection{Definition of language model watermarking}
In the language model watermarking problem, the objective for the model owner is to embed a secret message known as ``watermark''  within the generated sequence $\boldsymbol{y}$ for a given prompt $\boldsymbol{x}$. There are two desired requirements for watermarking. First, the quality of the watermarked model should be comparable to the quality of the original, un-watermarked model. Second, an adversary needs to modify sufficiently many AI-generated text in order to evade detection. 

\begin{definition}[Edit distance]\label{def:ed}
The edit distance, denoted as $\ed(\boldsymbol{y}, \boldsymbol{z})$, quantifies the number of basic operations required to transform a sequence $\boldsymbol{y}$ into another sequence $\boldsymbol{z}$. These operations include ``insertion'', ``deletion'', and ``replacement'' of tokens.
\end{definition}

\begin{definition}[Language model watermarking]\label{def:wm}
A \textbf{language model watermarking scheme} consists of two probabilistic polynomial-time algorithms $(\mrk,\extract)$: 
\begin{itemize}[leftmargin=*]
\item $\mrk(\mathcal{M})$: Let $\mathcal{M}$ be a language model and let ${\bf p}_t:=\P_{\cM(\boldsymbol{x})}[ y_t = \cdot | \boldsymbol{y}_{1:t-1} ]$ be the \emph{conditional} probability distribution of $t$-th token on ${\cal V}$ generated by $\mathcal{M}$. This algorithm produces a new model $\hat{\mathcal{M}}$ with a new conditional distribution $\hat{{\bf p}}_t := \P_{\hat{\cM}(\boldsymbol{x})}[ y_t = \cdot | \boldsymbol{y}_{1:t-1} ]$ on $\mathcal{V}$. Additionally, it outputs a detection key $\ek$ associated with $\hat{\mathcal{M}}$. The watermark could contain certain randomness. 
\item $\extract(\ek, \boldsymbol{y})$: This algorithm takes input detection key $\ek$ and sequence $\boldsymbol{y}$, then outputs 1 (indicating it was generated by $\hat{\cal{M}}$) or 0 (indicating it was \emph{not} generated by $\hat{\cal{M}}$). 
\end{itemize}
We require the following \textbf{three correctness properties} to hold:
\begin{itemize}[leftmargin=*]
\item $\omega$-Quality of watermarked output, for $\omega \in \mathbb{R}$: Assume the original language model ${\cal M}$ generates a probability vector $\mathbf{p}_t$ for the token at position $t$. The watermarked model $\hat{{\cal M}}$ predicts the token at position $t$ using the modified probability vector $\hat{\mathbf{p}}_t$. 
It is required that the distance between the two probability distributions satisfies: $D\left(\hat{\mathbf{p}}_t \| \mathbf{p}_t \right) \leq \omega$ for any fixed prompts and prefixes.
% It is required that the expected cross-entropy between the two distributions satisfies:$${\mathbb{E}} \sum_{v\in \mathcal{V}} -\hat{\mathbf{p}}_t[v] \ln \left(\mathbf{p}_t[v]\right) \leq(1+(\alpha-1) \gamma) P^*$$where $P^*=-\sum_{v\in \mathcal{V}} \mathbf{p}_t[v] \ln \left(\mathbf{p}_t[v] \right)$ is the entropy of ${\cal M}$ and the above expectation is over the randomness of $\mrk$.

% \item $\alpha_{(\boldsymbol{x},{\cal M})}$-Type I error: for any algorithm ${\cal A}$, $\prob\left[1 \leftarrow \extract(\ek, \boldsymbol{y})\ :\ \boldsymbol{y} \leftarrow {\cal A}(\boldsymbol{x}) \right] \leq \alpha_{(\boldsymbol{x},{\cal M})}$. 
% \yw{Should we specify what is random above? ($y$ and $k$ are random, $x$ is not?) I am also not entirely happy with the notations. If we are to stick to this, we should define them above, e.g., what do $\leftarrow$, $\prob[A : B]$ mean}.
\item 
$\alpha_{\boldsymbol{y}}$-Type I error (``No false positives''): for any fixed $\boldsymbol{y}$ (i.e., independent to $\ek$), it holds that 
%algorithm ${\cA}$ that takes $\boldsymbol{x}$ and certain auxiliary information $\aux$ as input,
% $\small \P\left[ \extract(\ek, \boldsymbol{y})=1 \ ;\ %\substack{
% (\hat{\cM},\ek)\sim \mrk(\cM)%\\\boldsymbol{y} \sim\cA(\boldsymbol{x},\aux)} 
% \right]\leq \alpha_{\boldsymbol{y}}$.
$$\P\left[ \extract(\ek, \boldsymbol{y})=1 \ ;\ %\substack{
(\hat{\cM},\ek)\sim \mrk(\cM)%\\\boldsymbol{y} \sim\cA(\boldsymbol{x},\aux)} 
\right]\leq \alpha_{\boldsymbol{y}}.$$
% \item $\beta_{(\boldsymbol{x},{\cal M})}$-Type II error:
% $\prob\left[0 \leftarrow \extract(\ek, \boldsymbol{y})\ :\ \boldsymbol{y} \leftarrow \hat{M}(\boldsymbol{x}) \right] \leq \beta_{(\boldsymbol{x},{\cal M})}$. 
% \yw{To avoid any confusion, the dependence of $\hat{M}$ on $\ek$ should be made explicit?}
\item $\beta_{(\boldsymbol{x},{\cal M})}$-Type II error (``No false negatives''): 
% $\small \P\left[ \extract(\ek, \boldsymbol{y})=0 \ ;\  \substack{(\hat{\cM},\ek)\sim \mrk(\cM)\\\boldsymbol{y} \sim\hat{\cM}(\boldsymbol{x})}\right]\leq \beta_{(\boldsymbol{x},{\cal M})}.$
$$ \P\left[ \extract(\ek, \boldsymbol{y})=0 \ ;\  \substack{(\hat{\cM},\ek)\sim \mrk(\cM)\\\boldsymbol{y} \sim\hat{\cM}(\boldsymbol{x})}\right]\leq \beta_{(\boldsymbol{x},{\cal M})}.$$

%\item $(\nu,\nu')$-Correctness of detection: For any prompt $\boldsymbol{x} \in {\cal V}^*$, suppose $\boldsymbol{y} \leftarrow \hat{\mathcal{M}}(\boldsymbol{x})$ (watermarked model). Then, $\extract(\ek, \boldsymbol{y})$ outputs 1 with probability at least $\nu$. For any $\boldsymbol{x} \in {\cal V}^*$, suppose $\boldsymbol{y} \leftarrow \mathcal{M}(\boldsymbol{x})$ (un-watermarked model). Then, $\extract(\ek, \boldsymbol{y})$ outputs 0 with probability at least $\nu'$.
\end{itemize}
We also require the following \textbf{security property} (parameterized by $\epsilon\geq 0$ and $\eta(\boldsymbol{y},\ek,\epsilon)$):
\begin{itemize}[leftmargin=*]
    \item For any adversary ${\cal A}$ that postprocesses $\boldsymbol{y}$ with auxiliary information $\aux$ and any prompt $\boldsymbol{x} \in {\cal V}^*$
\begin{align*}
\P\left[\extract(\ek, \boldsymbol{y}_{\cA})=1 \text{ or } \ed(\boldsymbol{y},\boldsymbol{y}_{{\cal A}}) \geq \eta(\ek,\boldsymbol{y},\epsilon) \middle| \substack{\boldsymbol{y}, \ek, \\
\extract(\ek, \boldsymbol{y})=1}  \ ;\ \substack{(\hat{\cM},\ek)\sim \mrk(\cM)\\
\boldsymbol{y} \sim\hat{\cM}(\boldsymbol{x})\\
\boldsymbol{y}_{\cA}\sim \cA(\boldsymbol{y},\aux)} \right] \geq 1-\epsilon.
\end{align*}
\end{itemize}
\end{definition}

\begin{remark}[Discussion on Definition~\ref{def:wm}]
Informally, our definition allows us to formally quantify the essential properties of a language model watermarking scheme including its generation quality relative to the input LM, the accuracy of detection in terms of both false positives and false negatives, as well as the robustness to attacks. 

The security property, in particular, states the following: suppose a malicious adversary intends to evade the detection algorithm, then the adversarial answer, to some input prompt $\boldsymbol{x}$, should be far away (in edit distance) from any AI-generated answer. %In instances where the adversary does not possess prior knowledge of the answer to $\boldsymbol{x}$, 
In other words, the optimal strategy to evade the detection algorithm would necessitate executing a minimum number of insert/delete/replacement operations, captured by the function $\eta(\cdot)$ in Definition~\ref{def:wm}. This conceptually suggests that the adversary must exert considerable effort to successfully elude detection.

Admittedly, there are other attacks where edit distance does not capture either the effort or the utility loss. For example, if one prompts an unwatermarked LLM to paraphrase $\boldsymbol{y}$ then the number of edits can be large but the semantic meaning is retained. However, edit distance is a natural metric that smoothly interpolates the gray zone between the world where $\boldsymbol{y}_{\cA}=\boldsymbol{y}$ in which it should clearly be caught and the other world where $\boldsymbol{y}_\cA$ is \emph{independently created} without using $\hat{\cM}$ in which it would be a false positive if $\extract$ returns $1$. 

\end{remark}

% \begin{remark}
% Depending on whether $\ek$ is public or private, we can have either public or private watermarking.  
% \end{remark}

\subsection{Threat models}
\textbf{Adversary's objective.} The primary objective of the adversary is to render the watermark detection algorithm ineffective. Specifically, the adversary aims to produce a $\boldsymbol{y}_{\mathcal{A}}$ such that $\extract(\ek, \boldsymbol{y}_{\mathcal{A}}) = 0$ while at the same time, $\boldsymbol{y}_{\mathcal{A}}$ is a minor modification of an AI-generated text $\boldsymbol{y}$. 

\textbf{Adversary's capabilities.} 
We consider an adversary with black-box input-output access to the language model. This adversary has the capacity to modify the sequence within a \textit{bounded edit distance}. Given an input prompt $\boldsymbol{x}$, the watermarked language model generates a text output $\boldsymbol{y} \leftarrow \hat{\mathcal{M}}(\boldsymbol{x})$. The adversary, equipped with arbitrary side-information and computational resources, can then produce a modified output $\boldsymbol{y}_{\mathcal{A}}$ such that the edit distance between the original and modified output, $\ed(\boldsymbol{y},\boldsymbol{y}_{\mathcal{A}})$, is bounded, i.e. $\ed(\boldsymbol{y},\boldsymbol{y}_{\mathcal{A}}) < \eta$.

\subsection{Method}
\label{sec:method}

% TODO: appendix 
\begin{algorithm}[tbh]
   \caption{\method: $\mrk$}
   \label{alg:wm}
\begin{algorithmic}[1]
   \STATE {\bfseries Input:} random number generator $F$, green list size $\gamma \in (0, 1)$, watermark strength $\delta$.
   \STATE Randomly generate a watermark key $\ek$ using $F$.\\
   \STATE Use watermark key to partition the vocabulary of $\mathcal{M}$ into a ``green list'' $G\subset \cV$ of size $\gamma|\cV|$, and a ``red list'' $R = G^c$.\\ %of size $(1-\gamma)|\mathcal{V}|$. \\

\STATE  Define a new language model $\hat{\cM}$ where for $t$ and any prefix $[\boldsymbol{x}, \boldsymbol{y}_{1:t-1}]$, the resulting logits satisfy 
\begin{equation}\label{eq:wm}
    \hat{\boldsymbol{\ell}}_t[v] := \boldsymbol{\ell}_t[v] + \delta \mathbf{1}(v\in G),
\end{equation}
where $\mathbf{1}(\cdot)$ is the indicator function and  the logit vector $\boldsymbol{\ell}_t \in \mathbb{R}^{|\mathcal{V}|}$ is obtained by the passing the same prefix to $\cM$.
% $\hat{p}_t$ for any $t$, and any prefix $[\boldsymbol{x}, \boldsymbol{y}_{1:t-1}]$ satisfies that 
%    \begin{equation}\label{eq:wm}
%     \hat{\mathbf{p}}_t[v]= \begin{cases}\frac{\exp \left(\boldsymbol{\ell}_t[v]+\delta\right)}{\sum_{i \in R} \exp \left(\boldsymbol{\ell}_t[i]\right)+\sum_{i \in G} \exp \left(\boldsymbol{\ell}_t[i]+\delta\right)}, & v \in G \\ \frac{\exp \left(\boldsymbol{\ell}_t[v]\right)}{\sum_{i \in R} \exp \left(\boldsymbol{\ell}_t[i]\right)+\sum_{i \in G} \exp \left(\boldsymbol{\ell}_t[i]+\delta\right)}, & v \in R .\end{cases}
% \end{equation}
% where the logit vector $\boldsymbol{\ell}_t \in \mathbb{R}^{|\mathcal{V}|}$ are obtained by the passing the same prefix to $\cM$.
   \STATE {\bfseries Output:} watermark key $\ek$, watermarked language model $\hat{\cM}$.
   % \yw{@Prabhanjan, is there a separate watermark key $\ek$ or $G$ is the key? $G$ is certainly all that matters.} \pnote{you can think of $\ek$ to be $G$ or it could be the randomness that specifies the partition.}
\end{algorithmic}
\end{algorithm}

\begin{algorithm}[tbh]
   \caption{\method: $\extract$}
   \label{alg:detect}
\begin{algorithmic}[1]
   \STATE {\bfseries Input:} suspect text $\boldsymbol{y}$, watermark detection key $\ek$, %language model $\mathcal{M}$, 
   threshold $\tau$. \\
   \STATE {\bfseries Output:} 1 or 0 (whether the text is watermarked).
   \STATE Use the watermark detection key $\ek$ to find the ``green list'' $G$. %of the language model $\mathcal{M}$.
   \STATE %Tokenize the suspect text $\boldsymbol{y}$ using the tokenizer of the language model $\mathcal{M}$.\yw{Tokenization is abstracted away and never mentioned. no point mentioning it now.} 
   Calculate the number of green list tokens $|\boldsymbol{y}|_G = \sum_{t=1}^n \mathbf{1}(y_t \in G)$ in $[y_1, \dots, y_n]$.
   \STATE %Assume the null hypothesis is \textit{$H_0$: The text sequence is generated with no knowledge of the green list rule}. \yw{assumption has no place in an algorithm block}
   Compute the $z$-statistic:
   \begin{equation}\label{eq:z}
        z_{\boldsymbol{y}}=\left(|\boldsymbol{y}|_G-\gamma n\right) / \sqrt{n \gamma(1-\gamma)}.
   \end{equation}
   \STATE {\bfseries if} $z_{\boldsymbol{y}} > \tau$ {\bfseries then return}  1, i.e.,  ``The suspect text is watermarked.'' 
   \STATE {\bfseries else return} 0, i.e., ``The suspect text is not watermarked.'' 
\end{algorithmic}
\end{algorithm}

Now let us instantiate Definition~\ref{def:wm} with concrete algorithms. We will focus on \method{} --- a variant of the $K$-gram watermark proposed by \citet{kirchenbauer2023watermark} but with a choice of $K=1$. Pseudocodes of our approach $\mrk$ and $\extract$ are provided in Algorithm \ref{alg:wm} and \ref{alg:detect}. 
%We present our watermarking scheme \method based on Definition \ref{def:wm}. The outline of is provided in Algorithm \ref{alg:wm} and \ref{alg:detect}. 
In Algorithm \ref{alg:wm}, we randomly partition the vocabulary into two distinct sets: the green list with $\gamma N$ tokens and the red list with the remaining tokens. In $\hat{\cM}$, the logits of the language model for the green list tokens are increased by $\delta$ while the logits for tokens in the red list remain unchanged. Then at detection time (Algorithm~\ref{alg:detect}), we count the number of green tokens in the suspect text, normalize the test-statistic, then make a calibrated decision on whether we think the suspect text is generated from $\hat{\cM}$ or not.  We show the examples of real prompts and watermarked outputs in Table \ref{tab:examples}.

%This aligns with the approach in \cite{kirchenbauer2023watermark}, with the objective of adaptively increasing the probability of generating tokens from the green list relative to those from the red list.  
% Generation from the watermarked language model follows simply by passing the returned $\hat{\cM}$ to Algorithm~\ref{alg:text_generation}.

% \yw{We should probably define the generalized family of \citet{kirchenbauer2023watermark}'s watermark that chooses a length $\tau$-prefix.  Our proposed watermark is with $\tau = 0$, \citet{kirchenbauer2023watermark} is with $\tau$.  We can prove that increasing $\tau$ means less robust.}

The watermarking procedure is parameterized by two \emph{watermark strength parameters}  $\gamma,\delta$. $\gamma$ determines the fraction of the vocabulary included in the green list. We typically set $\gamma$ to be a constant, e.g., $1/3$ or $0.5$.   
 $\delta$ specifies the increase in the logits associated with the green list tokens. The larger $\delta$ is, the lower the quality of the watermarked LM, but the easier it is to detect. 
 % The closer $\gamma$ is to $0.5$, the more tokens we modify. 
 %\begin{remark}[Watermark Strength Parameters$\gamma,\delta$]
%The larger the $\gamma$, the better is the quality of the watermarked model in relation to the original model. On the other hand, larger $\gamma$ also means that lower $\eps$ in Definition~\ref{def:wm}.
 %$\omega$  means lower quality of the watermarked model but on the other hand, $\eps$ will be higher (i.e., the scheme is more secure).  
%\end{remark}

%The critical difference from \cite{kirchenbauer2023watermark} is that our algorithm employs a fixed partition, as opposed to using the hash of previously generated tokens as a random seed in their soft watermarking algorithm. 
%As a result, unlike \cite{kirchenbauer2023watermark}, our work guarantees information-theoretic security in the so-called plain model without relying upon cryptographic tools. In addition, we provide a formal definition of watermarking as well as a proof of the properties, which has not been done in \cite{kirchenbauer2023watermark}.

Our \method{} enjoys all good properties of the general $K$-gram watermark from \citet{kirchenbauer2023watermark}. It runs in linear time and does not require access to the language model or the prompt used for generation. It is also intuitively robust to cropping and minor edits. 

Overall, the proposed watermarking scheme requires almost no overhead in its implementation, is extremely simple, and is easy to maintain. The big question is: 
\begin{center}
\texttt{How well does this watermark scheme work?}
\end{center}
The remainder of this paper provides answers to this question with provable guarantees (Section~\ref{sec:theory_main}) on the properties from Definition~\ref{def:wm} and extensive experiments (Section~\ref{sec:exp}).

Before that, let us address two burning questions that a knowledgeable reader may have. %highlight a few interesting aspects of the watermark intuitively.

\textbf{Why choosing $K=1$?}  Recall that the general $K$-gram watermark works in the same way as ours, but randomly generates a different Green list for each prefix of length $K-1$. In contrast, choosing $K=1$ means we have a consistent green list for every new token the language model generates. The main advantage of choosing $K=1$ is that it is the most robust choice within this family --- and we believe robustness is the single most important feature of a watermarking scheme in practice. 

\textbf{Robustness to other attacks.} Besides the robustness to edits, which we will prove in Section~\ref{sec:theory_main} and compare to that of $K\geq 2$. \method{} is also resilient to many other kinds of generation time attacks that people can apply such as reversing, shuffling, as well as the ``Emoji insertion attack'' that will completely break the watermark for $K\geq 2$ but not for $K=1$. We provide a detailed discussion of this in Appendix~\ref{sec:tricky_attacks}.

\textbf{The price for robustness?} \citet{kirchenbauer2023watermark} did not consider the choice of $K=1$ for an obvious reason. The watermark is now so simple that an attacker who observes the generated text may learn to guess the consistent green list. This is an issue for $K\geq 2$ too but certainly more so for $K=1$. There is a robustness-learnability tradeoff as we adjust $K$ which deserves a more rigorous treatment in future work. That said, we are ready to argue for biasing towards robustness. Why? We argue that in practice, it could be surprisingly difficult for an attacker to construct a meaningful attack when they do not have access to the original LM. We provide a more detailed experimental study with a faithful practical attack in Appendix~\ref{sec:guess_green}. Moreover, there are alternative ways to get around this issue by refreshing the green list once in a while.

\begin{table}[t]
% \small
\centering
\resizebox{\textwidth}{!}{
\begin{tabular}{p{2.8cm} | m{13cm}}
\toprule
\centering Prompt & Is it possible to fail several times and succeed once?\\
\midrule
\makecell{\textbf{LLaMA-13B} \\ (un-watermarked)  \\ PPL: 12.19 \\ $z$-score: -2.425}  & \green{Of \red{course} it \red{is,} and \red{that is} how we \red{improve}.\red{\textbackslash n}S\red{aying} "\red{I} can\red{'t do that}" \red{is} never a \red{good} thing.\red{\textbackslash n}Sometimes we \red{think} we\red{'ve tried} all we can and \red{that} "\red{isn't enough}". \red{That is the time when} we ask for \red{help}.\red{\textbackslash nThe} root \red{of} all ev\red{ils is to be} a secret. Hon\red{esty} and \red{self}-crit\red{ic}ism \red{is necessary} for \red{improvement}. \red{\textbackslash nThe measure of} intelligence \red{is the} ability \red{to} change.} [continues...] \\
\midrule
\makecell{\textbf{LLaMA-13B} \\ (watermarked) \\ PPL: 12.47 \\ $z$-score: 11.085} &  \green{When most people are confront\red{ed with failure,} they cannot imagine \red{such} a thing happening. When one \red{faces} business reverses and bank\red{rupt}cy, it seems impossible. When we are rejected it looks as if we are going \red{to} be rejected forever. However\red{,} it does not need \red{to} be this \red{way}. The human spirit simply will not give \red{up}. When we face \red{failure} and it looks \red{like} we}  [continues...]  \\
\bottomrule
\end{tabular}
}
% \vspace{-2mm}
\caption{Comparison of un-watermarked and watermarked text using the LLaMA-13B model. Green and red tokens are color-coded respectively. \method produces watermarked text of similar quality without noticeable degradation, yet with significant differences in $z$-scores for watermark detection.}
% \vspace{-1mm}
\label{tab:examples}
\end{table}

% \begin{table}[htbp]
% \centering
% \begin{tabularx}{\linewidth}{|X|l|}
% \hline
% \textbf{Long Text} & \textbf{Multirows}\\
% \hline
% This is an example of a very, very, very long text that will be automatically adjusted to fit within the column width. & \multirow{2}{*}{This text spans over 2 rowsfdafd}\\
% \hline
% This is another example of a very, very, very long text that will be automatically adjusted to fit within the column width. & \\
% \hline
% \end{tabularx}
% \caption{Table with long text and multirows}
% \end{table}
\section{Main theoretical results}\label{sec:theory_main}
%\yw{I propose we replace this section with ``A summary of theoretical results.'' with some remarks, then defer the actual formal theorem statements and proofs to the appendix.}\xd{I think it is a good approach. Or we move all experiment parts to the appendix and only show one result.}
In this section, we present the quality, correctness, and security properties of \method as described in Definition~\ref{def:wm}. %We will start with the security property which ensures the resilience of the watermark to a wide family of evasion attacks.

\subsection{Quality guarantee of \method}
%The following theorem demonstrates 
We first show that the distance between the original probability vector $\mathbf{p}_t$ and the watermarked probability vector $\hat{\mathbf{p}}_t$ are very close to each other in any Renyi-divergence. %in almost all popular metrics of probabilistic distances (and divergence). 

% \begin{theorem}[Theorem 4.3 \cite{kirchenbauer2023watermark}]\label{thm:quality}
% Assuming the original (non-watermarked) language model generates a probability vector $\mathbf{p}_t$ for the token at position $t$. The watermarked model predicts the token at position $t$ using modified probability vector $\hat{\mathbf{p}}_t$. The expected cross-entropy of the two distributions, considering the randomness of the green list partition, is given by:
% $$
% \underset{Green, Red}{\mathbb{E}} \sum_{v\in \mathcal{V}} -\hat{\mathbf{p}}_t[v] \ln \left(\mathbf{p}_t[v]\right) \leq(1+(\alpha-1) \gamma) P^*
% $$
% where $P^*=-\sum_{v\in \mathcal{V}} \mathbf{p}_t[v] \ln \left(\mathbf{p}_t[v] \right)$ is the entropy of the original model.
% \end{theorem}
\begin{theorem}\label{thm:quality}
Consider $\boldsymbol{h}$ as the input to the language model at step $t$, denoted as $\boldsymbol{h} = [\boldsymbol{x}, \boldsymbol{y}_{1:t-1}]$. Fix green list $G$. Let $\delta$ represent the watermark strength. For any $\boldsymbol{h}$, the $\alpha$-th order Renyi-divergence between the watermarked probability distribution $\hat{\mathbf{p}}_t = \hat{\mathbf{p}}_t(\cdot|\boldsymbol{h})$ at time step $t$ and the original probability distribution $\mathbf{p}_t= \mathbf{p}_t(\cdot|\boldsymbol{h})$ satisfies:
$$\forall \boldsymbol{h}, \max \big(D_{\alpha}\big(\hat{\mathbf{p}}_t \| \mathbf{p}_t\big), D_{\alpha}\big(\mathbf{p}_t \| \hat{\mathbf{p}}_t\big) \big) \leq \min\{ \delta, \alpha\delta^2/8\}.$$
\end{theorem}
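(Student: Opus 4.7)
The starting observation is that $\hat{\mathbf{p}}_t$ is an exponential tilt of $\mathbf{p}_t$. Concretely, setting $X_v := \delta \mathbf{1}(v \in G)$ and $Z := \sum_v \mathbf{p}_t(v) e^{X_v} = 1 + (e^\delta - 1)\mathbf{p}_t(G)$, the softmax definition of $\hat{\mathbf{p}}_t$ gives $\hat{\mathbf{p}}_t(v)/\mathbf{p}_t(v) = e^{X_v}/Z$ for every $v$. Since $X_v \in \{0, \delta\}$ and $Z \in [1, e^\delta]$, both $\hat{\mathbf{p}}_t(v)/\mathbf{p}_t(v)$ and its reciprocal lie in $[e^{-\delta}, e^\delta]$. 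The $\infty$-Renyi (max) divergence is the logarithm of the essential supremum of the density ratio, so this immediately yields $D_\infty(\hat{\mathbf{p}}_t \| \mathbf{p}_t) \leq \delta$ and $D_\infty(\mathbf{p}_t \| \hat{\mathbf{p}}_t) \leq \delta$; monotonicity of $D_\alpha$ in $\alpha$ extends these bounds to every finite order, establishing the $\delta$ branch of the min.

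For the $\alpha \delta^2/8$ branch I introduce the log-MGF $\phi(\lambda) := \log \mathbb{E}_{\mathbf{p}_t}[e^{\lambda X}]$ of the $\{0,\delta\}$-valued random variable $X$. The tilting identity $\hat{\mathbf{p}}_t(v) = \mathbf{p}_t(v) e^{X_v}/e^{\phi(1)}$ then reduces both divergences to closed form, $(\alpha - 1)D_\alpha(\hat{\mathbf{p}}_t \| \mathbf{p}_t) = \phi(\alpha) - \alpha \phi(1)$ and $(\alpha - 1)D_\alpha(\mathbf{p}_t \| \hat{\mathbf{p}}_t) = \phi(1-\alpha) + (\alpha - 1)\phi(1)$. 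The analytic engine is the standard identity $\phi''(\lambda) = \mathrm{Var}_{\tilde{\mathbf{p}}_\lambda}[X]$ where $\tilde{\mathbf{p}}_\lambda$ is the $\lambda$-tilted measure; because $X$ stays in $[0,\delta]$ under every tilt, Popoviciu's inequality gives $\phi''(\lambda) \leq \delta^2/4$ \emph{uniformly} in $\lambda$. I then rewrite each closed form as an integral of $\phi''$. For the first, the substitution $s = \alpha t$ gives $\phi(\alpha) - \alpha\phi(1) = \alpha \int_0^1 [\phi'(\alpha t) - \phi'(t)]\,dt$, and expanding $\phi'(\alpha t) - \phi'(t) = \int_t^{\alpha t} \phi''(s)\,ds \leq (\alpha - 1)t\cdot \delta^2/4$ produces the bound $\alpha(\alpha - 1)\delta^2/8$. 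An analogous rearrangement of the second closed form, with integration range $[-(\alpha - 1)u, u]$, yields the same bound. Dividing by $\alpha - 1$ in either case delivers $\alpha \delta^2/8$, and the $\max$ in the theorem statement then takes the stated $\min$.

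The principal obstacle is sharpening the $\alpha$-dependence to linear rather than quadratic. A one-shot Hoeffding argument $\phi(\alpha) \leq \alpha \mu + \alpha^2 \delta^2/8$ (with $\mu = \mathbb{E}_{\mathbf{p}_t}[X]$) paired with the Jensen lower bound $\phi(1) \geq \mu$ only yields $\alpha^2 \delta^2/[8(\alpha - 1)]$, which blows up as $\alpha \downarrow 1$ and is strictly looser than $\alpha\delta^2/8$. The improvement comes from exploiting that the $\delta^2/4$ variance bound holds at \emph{every} point of $\phi''$, not merely at $\lambda = 0$, which is exactly what the integral-of-$\phi''$ route cashes in. A secondary bookkeeping point is that the Renyi divergence is traditionally defined for $\alpha \in (0,1) \cup (1, \infty)$: the $\alpha > 1$ calculation proceeds as written, the $\alpha \in (0,1)$ case needs the orientation of one integration interval reversed together with a sign flip in $\alpha - 1$ upon division, and the $\alpha = 1$ (KL) case follows by continuity.
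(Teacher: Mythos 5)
Your proof is correct and reaches the same bound, but by a genuinely different, self-contained route. The paper's proof identifies that $\hat{\mathbf{p}}_t$ and $\mathbf{p}_t$ satisfy the $\delta$-Bounded Range condition of Dong et al.\ and then cites two lemmas from the differential privacy literature as black boxes: (i) bounded range implies $(\delta,0)$-indistinguishability, giving $D_\infty \le \delta$, and (ii) the Cesar--Rogers lemma that $\delta$-bounded range implies $(\delta^2/8)$-zero-concentrated DP, giving $D_\alpha \le \alpha\delta^2/8$. You instead recognise the watermarked distribution as an exponential tilt of the original by the $\{0,\delta\}$-valued statistic $X_v = \delta\mathbf{1}(v\in G)$, and re-derive the zCDP-style bound from first principles via the log-MGF $\phi$: the closed forms $(\alpha-1)D_\alpha(\hat{\mathbf{p}}_t\|\mathbf{p}_t) = \phi(\alpha)-\alpha\phi(1)$ and $(\alpha-1)D_\alpha(\mathbf{p}_t\|\hat{\mathbf{p}}_t)=\phi(1-\alpha)+(\alpha-1)\phi(1)$ are correct, $\phi''(\lambda) = \mathrm{Var}_{\tilde{\mathbf{p}}_\lambda}[X]\le \delta^2/4$ by Popoviciu is the right uniform curvature bound, and the double-integral reduction (the $\alpha\int_0^1[\phi'(\alpha t)-\phi'(t)]\,dt$ trick, using $\phi(0)=0$) correctly cashes the linear-in-$\alpha$ factor. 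Your self-diagnosis of why a one-shot Hoeffding bound on $\phi(\alpha)$ would be too weak is also accurate. Two incidental observations in favour of your write-up: your $D_\infty$ argument via $Z\in[1,e^\delta]$ directly gives the tight factor $e^{\pm\delta}$, whereas the paper's first displayed inequality uses the looser $e^{\pm 2\delta}$ before correcting itself through the bounded-range machinery; and you explicitly handle $\alpha\in(0,1)$, which the cited zCDP lemma only states for $\alpha>1$. The trade-off is length and self-containment: the paper's proof is a few lines and leans on known DP results, while yours is longer but would stand without any external citation.
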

The proof, deferred to the appendix, leverages a surprising connection to modern techniques in the differential privacy literature \citep{dwork2006calibrating,dong2020optimal}.

% \begin{corollary}
% For any prompt $\boldsymbol{x}$, any fixed green list $G$, the Renyi-divergence between the probability distribution of the watermarked sequence and the original sequence satisfies:
% \begin{align*}
% & \forall \boldsymbol{x}, \max\{D_{\alpha}\big(\hat{\mathbf{p}}(\boldsymbol{y}_{1:n}|\boldsymbol{x}) \| \mathbf{p}(\boldsymbol{y}_{1:n}|\boldsymbol{x})\big), \\
% & ~~~~~~~~~~D_{\alpha}\big(\mathbf{p}(\boldsymbol{y}_{1:n}|\boldsymbol{x})\|\hat{\mathbf{p}}(\boldsymbol{y}_{1:n}|\boldsymbol{x}) \big) \}\leq \min\{n\delta,n\alpha \delta^2/8\}
% \end{align*}
% % $$\forall \boldsymbol{x}, \max\{D_{\mathrm{KL}}\big(\hat{\mathbf{p}}(\boldsymbol{y}_{1:n}|\boldsymbol{x}) \| \mathbf{p}(\boldsymbol{y}_{1:n}|\boldsymbol{x})\big), D_{\mathrm{KL}}\big(\mathbf{p}(\boldsymbol{y}_{1:n}|\boldsymbol{x})\|\hat{\mathbf{p}}(\boldsymbol{y}_{1:n}|\boldsymbol{x}) \big) \}\leq \alpha \min\{n\delta,n\delta^2/8\}$$
% \end{corollary}
% \yw{To avoid any confusion, and the negative impression on how the bound grows linearly with $n$, maybe we should skip the corollary above in a submitted version?}
\begin{remark}[KL-divergence and other probability distance metrics]
Renyi-divergence is very general.
Kullback-Leibler-divergence and chi-square divergence are directly implied by the $\alpha$-Renyi divergence bound 
of $\min\{\delta, \alpha\delta^2/8\}$ by choosing $\alpha=1$ and $\alpha = 2$ respectively and swap $
\hat{\mathbf{p}}$ and $\mathbf{p}$. Hellinger distance can be obtained by choosing $\alpha=0.5$. By Pinsker's inequality, we get a Total Variation distance bound of $\min\{\sqrt{\delta/2}, \delta/4\}$.  Moreover, by choosing $\alpha \rightarrow \infty$, we obtain an upper bound of  $\delta$ for a very strong multiplicative guarantee known as max-divergence. The resulting two distributions $\hat{\mathbf{p}}$ and $\mathbf{p}$ are referred to by cryptographers as $(\delta,0)$-\emph{indistinguishable}, which says that for any measurable event $S$,  the log-odds ratio satisfies 
$$-\delta \leq \log\frac{\hat{\mathbf{p}}_t(y_t\in S| \boldsymbol{h})}{\mathbf{p}_t(y_t\in S|\boldsymbol{h})} \leq \delta. $$
\end{remark}

To summarize, our result shows that Algorithm~\ref{alg:wm} produces $\hat{\cM}$ that satisfies $\omega$-quality of watermarked output with $\omega$ (as a function of $\delta$) for almost all commonly used probability distance $D$.
%the watermarked model and the original model are very similar in almost any probability distance that is commonly used, 

\subsection{Type I error of \method}
\begin{theorem}[No false positives (short version of Theorem~\ref{thm:no_false_positive})]%\label{thm:no_false_positive}
Consider $\boldsymbol{y} = \boldsymbol{y}_{1:n}$ as any \emph{fixed} text.  Define $C_{\max}(\boldsymbol{y}) :=\max_{i\in[N]} \sum_{j=1}^n \mathbf{1}(y_j= i)$ and $V(\boldsymbol{y}) := \frac{1}{n}\sum_{i=1}^N (\sum_{j=1}^n \mathbf{1}(y_j= i))^2$. With probability $1-\alpha$ (over only the randomness of $G$): $$z_{\boldsymbol{y}} \leq \sqrt{\frac{64 V(\boldsymbol{y}) \log(9/\alpha) }{1-\gamma}} + \frac{16C_{\max}(\boldsymbol{y}) \log(9/\alpha)}{\sqrt{n \gamma(1-\gamma)}}.$$
\end{theorem}
% %Let $N =: |\cV|$ and $G\subset |\cV|$ satisfying $|G| = \gamma N$. $G$ is selected through Algorithm~\ref{alg:wm}, using a uniform random choice. Let $|\boldsymbol{y}|_G$ denote the number of tokens in $G$ and $z_{\boldsymbol{y}}:= \frac{|\boldsymbol{y}|_G - \gamma n}{\sqrt{n \gamma(1-\gamma)}}$ as in Algorithm~\ref{alg:detect}. 
% Then the following statements hold true:
% \begin{enumerate}[leftmargin=*]
%     \item Assume $n\geq 1$, then \quad $\E[|\boldsymbol{y}|_G | \boldsymbol{y}] = \gamma n \quad \text{ and }\quad \E[ z_{\boldsymbol{y}} | \boldsymbol{y}] = 0.$
%     \item Define $C_{\max}(\boldsymbol{y}) :=\max_{i\in[N]} \sum_{j=1}^n \mathbf{1}(y_j= i)$ and $V(\boldsymbol{y}) := \frac{1}{n}\sum_{i=1}^N (\sum_{j=1}^n \mathbf{1}(y_j= i))^2$, then 
%     %with probability $1-\alpha$ (over only the randomness of $G$), 
%  %    $$
%  % \P\left[|\boldsymbol{y}|_G \geq \gamma n + \sqrt{64\gamma n V\log(9/\alpha)} + 16C_{\max} \log(9/\alpha) \middle| \boldsymbol{y}\right]\leq \alpha
%  %    $$
%  %    or equivalently (when $n\geq 1$)
%     $$
%         \P\left[z_{\boldsymbol{y}} \geq  \sqrt{\frac{64 V(\boldsymbol{y}) \log(9/\alpha) }{1-\gamma}} + \frac{16C_{\max}(\boldsymbol{y}) \log(9/\alpha)}{\sqrt{n \gamma(1-\gamma)}} \middle| \boldsymbol{y}\right] \leq \alpha.
%     $$
%\end{enumerate}

The theorem says that the $z$-score for any sufficiently diverse text is  $\tilde{O}(1)$ and it is applicable to any text not generated by the watermarked LM $\hat{\cM}$.  
\begin{remark}[Controlling false positive rate]\label{rmk:setting_tau}
The theorem implies that  if we choose $\tau > \sqrt{\frac{64 V \log(9/\alpha) }{1-\gamma}} + \frac{16C_{\max} \log(9/\alpha)}{\sqrt{n \gamma(1-\gamma)}}$, then the false-positive rate is smaller than $\alpha$. Note that $V$ and $C_{\max}$ can be computed directly from $\boldsymbol{y}$, allowing us to choose an input-dependent $\tau$ as a function of $V, C_{\max}$ that achieves a $\alpha$-Type I error guarantee with a fixed $\alpha$ for all inputs. In particular, the Type I error $\alpha$ decreases exponentially as we increase the threshold $\tau$.
\end{remark}

\subsection{Type II error of \method}
To bound the Type II error, i.e., false negative rates, we need to make certain assumptions about $\mathbf{p}$ of the language model and the prompt $\boldsymbol{x}$. These assumptions include a \textbf{``on-average high entropy''} assumption and a \textbf{``homophily''} condition.  We will provide a detailed definition and discussion of these assumptions in Appendix~\ref{sec:highentropy} and Appendix~\ref{sec:homophily}.

The ``on-average high entropy'' assumption requires the probability of the roll-out text to be ``sufficiently diverse'' on average. It is related but different from the ``spike entropy'' assumption used by \citet{kirchenbauer2023watermark}.  The ``homophily'' assumption is new to this paper. It is an assumption about the distribution induced by the state-transitions of the language model $\cM$, which says that increasing the probability of a green-list token at time $t$ does not decrease the probability of seeing that token in the future. This may seem counter-intuitive, but we will give concrete examples in Appendix~\ref{sec:homophily}  to show why this is fundamental for any statistical watermark to work effectively.

\begin{theorem}[Only true positive (informal version of Theorem~\ref{thm:only_true_detection})]
Assume ``average-high entropy'' and ``homophilly'' to be valid with appropriate parameters, 
%$$\E[z_{\boldsymbol{y}}] \asymp \frac{(e^\delta-1)\sqrt{n \gamma(1-\gamma)}}{1 + (e^\delta-1)\gamma}.$$
and in addition $n \geq \tilde{\Omega}(\log(1/\beta)/\delta^2)$, then with probability $1-\beta$,
$$z_{\boldsymbol{y}} \geq  \Omega\left((e^\delta-1)\sqrt{n \gamma(1-\gamma)}\right).$$
\end{theorem}
\begin{remark}
    The bounds on Type I/II error together say that $z_{\boldsymbol{y}} \asymp \delta\sqrt{n}$ if $\boldsymbol{y}$ is from $\hat{\cM}$ while $z_{\boldsymbol{y}} \asymp O(1)$ otherwise, i.e., there is a large margin between them so we can choose $\tau$ in between. Also, the $\alpha$ and $\beta$ parameters decay exponentially as the $n$ gets larger. 
\end{remark}

\subsection{Security property of \method}

We demonstrate the robustness of our watermarking scheme against editing attempts through Theorem \ref{thm:fixed}. As a baseline of comparison, we also obtain new robustness guarantees for the soft watermarking method proposed in \cite{kirchenbauer2023watermark}. The detailed proof is deferred to the Appendix \ref{sec:appendix_proof}.

% \xd{I am not sure how to state k correctly. Is it just integer \eta or function \eta(|u|).} \pnote{if in the result, k is always a fixed integer then you can define k can just be a constant function. Defining k to be a function is more general than defining k as an integer.}

% Let $\boldsymbol{u} = \left[u_1, \ldots, u_n\right]$ denote the modified text generated by the adversary $\mathcal{A}$, and  If the adversary following the definition \ref{def:wm}, meaning that $\ed{(\boldsymbol{y},\boldsymbol{u})} < \eta(|\boldsymbol{u}|)$, the following holds:

% \xd{The theorem is for $\eta$. But $\eta$ is a function.}
% \begin{theorem}[Robustness to editing]\label{thm:fixed} 
% Let $\boldsymbol{y} = \left[y_1, \ldots, y_n\right]$ represent the watermarked sequence. 
% Suppose the adversary $\mathcal{A}$ follows the definition \ref{def:wm} and outputs a modified text $\boldsymbol{u}$. Then $\extract(\ek, \boldsymbol{u}) = 1$ when the following holds:
% $$
% \sum_{t=1}^n \mathbf{1}(u_t \in \text{Green})  \geq \sum_{t=1}^n \mathbf{1}(y_t \in \text{Green}) - \eta(|\boldsymbol{y}|). 
% $$
% \end{theorem}

\begin{theorem}[Robustness to editing] \label{thm:fixed}
Let $\boldsymbol{y} = \left[y_1, \ldots, y_n\right]$ represent the watermarked sequence. 
Suppose the adversary $\mathcal{A}$ follows Definition \ref{def:wm} and outputs a modified text $\boldsymbol{u} = \left[u_1, \ldots, u_m\right]$. Following Equation \ref{eq:z}, we calculate $z$-score $z_{\boldsymbol{y}}$ and $z_{\boldsymbol{u}}$. Assume edit distance between $\boldsymbol{y}$ and $\boldsymbol{u}$ (denoted as $\eta$) satisfies $\eta< n$. Then we have 
$$
z_{\boldsymbol{u}} \geq z_{\boldsymbol{y}} - \max\left\{ \frac{(1+\gamma/2) \eta}{\sqrt{n}},  \frac{(1-\gamma/2)\eta}{\sqrt{n-\eta}} \right\}.
% \sum_{t=1}^m \mathbf{1}(u_t \in \text{Green})  \geq \sum_{t=1}^m \mathbf{1}(y_t \in \text{Green}) - \eta(|\boldsymbol{u}|). 
$$
In particular, when $\eta \leq \frac{2\gamma n}{(1+\gamma/2)^2}$, we can drop the second term in the max.
\end{theorem}
This theorem bounds the changes to our test $z$-score when $\eta$ edits are performed. As we established for a high-entropy sequence, $z_{\boldsymbol{y}}$ typically grows in $O((e^\delta-1)\sqrt{n})$, which means that when $\delta$ is a constant, with an appropriate choice of $\tau$, the watermark is robust up to $O(n)$ arbitrary edits!  

Finally, compared to \citet{kirchenbauer2023watermark}'s watermark, ours is twice as robust (see Appendix~\ref{sec:analyze_kirchenbauer}).

\section{Experiment}
\label{sec:exp}
% In this section, we aim to conduct experiments to demonstrate the performance of watermark detection, the quality of watermarked text, and the robustness against various attacking schemes, as compared to the baseline method.
In this section, we aim to conduct experiments to evaluate watermark detection performance, watermarked text quality, and robustness against attacks compared to the baseline. 
% Due to page limitations, a
Additional experiment results including different parameters, white-box attacks, scaled language models, etc. are deferred to Appendix \ref{sec:appendix_exp}.

\subsection{Experiment setting}

% We empirically evaluate the watermark design \method using three state-of-the-art public language models of varying sizes and model families: GPT2-XL with 1.5B parameters \citep{Radford2019LanguageMA}, OPT-1.3B \citep{Zhang2022OPTOP}, and LLaMA-7B \citep{Touvron2023LLaMAOA}. The models are tested on two long-form text datasets, OpenGen and LFQA \citep{Krishna2023ParaphrasingED}, using human-written prefixes or questions as prompts and the corresponding suffixes or answers as the human-written text. Further details of the experimental setup are provided in the Appendix \ref{sec:more_exp_setup}.
\textbf{Datasets and prompts.} We utilize two long-form text datasets: OpenGen and LFQA. OpenGen, collected by \cite{Krishna2023ParaphrasingED}, consists of 3K two-sentence chunks sampled from the validation split of WikiText-103 \citep{Merity2016PointerSM}. The subsequent 300 tokens serve as the human-written continuation. LFQA is a long-form question-answering dataset created by \cite{Krishna2023ParaphrasingED} by scraping questions from Reddit, posted between July and December 2021, across six domains. \cite{Krishna2023ParaphrasingED} randomly select 500 questions from each domain and pair them with their corresponding longest human-written answers, resulting in 3K QA pairs. In our experiments, we use the questions as prompts and the corresponding answers as human-written text.

\textbf{Language models.} We conduct experiments using three state-of-the-art public language models of varying sizes from different model families: GPT2-XL with 1.5B parameters \citep{Radford2019LanguageMA}, OPT-1.3B \citep{Zhang2022OPTOP}, and LLaMA-7B \citep{Touvron2023LLaMAOA}. Nucleus Sampling \citep{Holtzman2019TheCC} is employed as the default decoding algorithm to introduce randomness while maintaining human-like text output. The models are loaded from the Huggingface library \citep{Wolf2019HuggingFacesTS}, and the \texttt{generate} API function is used to adjust the logits distribution of the language model.

\begin{figure}[t]
\centering	
\begin{subfigure}[t]{.47\linewidth}
\centering\includegraphics[width=1.0\linewidth]{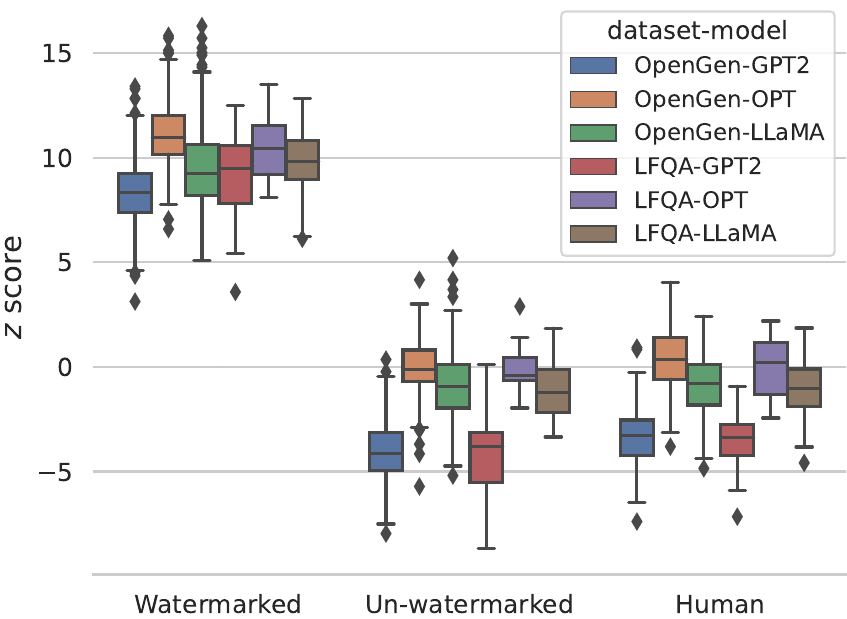}
\caption{$z$-scores of watermarked and un-watermarked machine-generated text, along with the $z$-score of human-generated text. The watermarked text $z$-score surpasses the empirical threshold of $z=6.0$.}
    % \label{}
\end{subfigure}
\quad
\begin{subfigure}[t]{.47\linewidth}
\centering\includegraphics[width=1.0\linewidth]{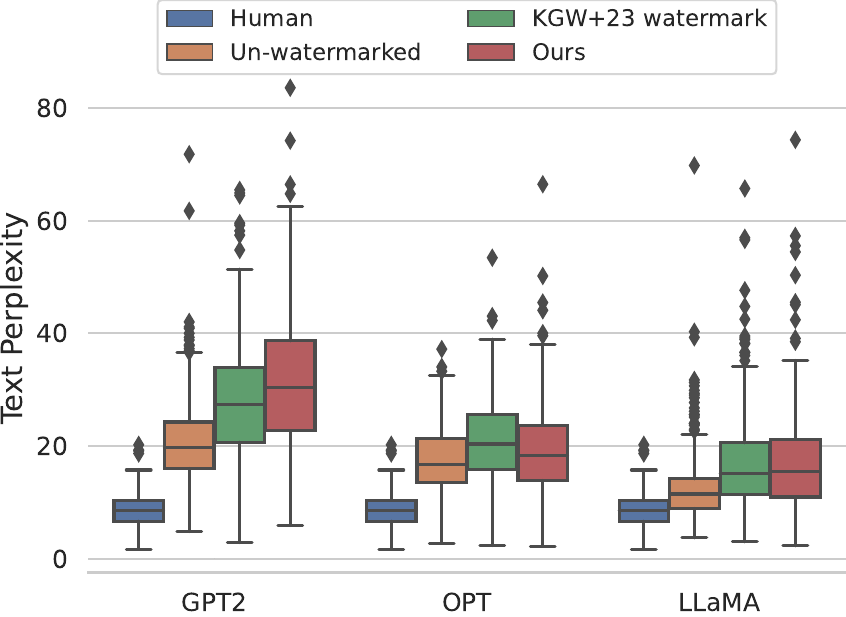}
\caption{Text perplexity comparison (evaluated by GPT-3) between human-generated text and text generated by various models on the OpenGen dataset.}
    % \label{}
\end{subfigure}\qquad
% \vspace{-0.5em}
\caption{$z$-score comparison and text perplexity comparison.}
% \vspace{-1em}
\label{fig:z_ppl}
\end{figure}

\textbf{Evaluation methods.}
Maintaining a low false positive rate is crucial to prevent misclassifying un-watermarked text as watermarked. To ensure this, we set the false positive rates at 1\% and 10\% for all detection algorithms and adjust the detection threshold accordingly. We report true positive rate (TPR), F1 score, and ROC curves. GPT3 (\texttt{text-davinci-003})
% with 175 billion parameters and reinforcement learning from human feedback
\citep{Ouyang2022TrainingLM}, is used as the oracle model for perplexity evaluation. The experiments are conducted on Nvidia A100 GPUs.

\begin{table}[t]
\centering
\resizebox{0.9\textwidth}{!}{
\begin{tabular}{llcccccccc}
\toprule
& &\multicolumn{4}{c}{\textbf{OpenGen}} & \multicolumn{4}{c}{\textbf{LFQA}}\\
Setting & Method & \multicolumn{2}{c}{1\% FPR} & \multicolumn{2}{c}{10\% FPR}& \multicolumn{2}{c}{1\% FPR} & \multicolumn{2}{c}{10\% FPR}\\
\cmidrule(r){3-10}
 & & TPR & F1 & TPR & F1 & TPR & F1 & TPR & F1 \\
\midrule
\multirow{2}{*}{No attack} & \baseline  & 1.000 & 0.995 & 1.000 & 0.952 & 1.000 & 0.995 & 1.000 & 0.952 \\ 
 & \cellcolor[gray]{.85}\method & \cellcolor[gray]{.85}1.000 & \cellcolor[gray]{.85}0.995 & \cellcolor[gray]{.85}1.000 & \cellcolor[gray]{.85}0.952 & \cellcolor[gray]{.85}1.000 & \cellcolor[gray]{.85}0.995 & \cellcolor[gray]{.85}1.000 & \cellcolor[gray]{.85}0.952\\
\midrule
\multirow{2}{*}{ChatGPT} & \baseline  & 0.565 & 0.704 & 0.853 & 0.747 & 0.327 & 0.453 & 0.673 & 0.490 \\ 
 & \cellcolor[gray]{.85}\method & \cellcolor[gray]{.85}0.866 & \cellcolor[gray]{.85}0.910 & \cellcolor[gray]{.85}0.961 & \cellcolor[gray]{.85}0.818 & \cellcolor[gray]{.85}0.442 & \cellcolor[gray]{.85}0.568 & \cellcolor[gray]{.85}0.865 & \cellcolor[gray]{.85}0.584 \\
\midrule
\multirow{2}{*}{DIPPER-1} & \baseline  & 0.386 & 0.546 & 0.738 & 0.720 & 0.372 & 0.534 & 0.740 & 0.767 \\ 
 & \cellcolor[gray]{.85}\method & \cellcolor[gray]{.85}0.729 & \cellcolor[gray]{.85}0.830 & \cellcolor[gray]{.85}0.922 & \cellcolor[gray]{.85}0.837 & \cellcolor[gray]{.85}0.639 & \cellcolor[gray]{.85}0.770 & \cellcolor[gray]{.85}0.909 & \cellcolor[gray]{.85}0.865\\
\midrule
\multirow{2}{*}{DIPPER-2} & \baseline  & 0.490 & 0.646 & 0.810 & 0.769 & 0.432 & 0.595 & 0.845 & 0.839 \\ 
 & \cellcolor[gray]{.85}\method & \cellcolor[gray]{.85}0.777 & \cellcolor[gray]{.85}0.862 & \cellcolor[gray]{.85}0.941 & \cellcolor[gray]{.85}0.852 & \cellcolor[gray]{.85}0.693 & \cellcolor[gray]{.85}0.810 & \cellcolor[gray]{.85}0.948 & \cellcolor[gray]{.85}0.894\\
\midrule
\multirow{2}{*}{BART} & \baseline  & 0.342 & 0.505 & 0.667 & 0.759 & 0.457 & 0.617 & 0.783 & 0.836 \\ 
 & \cellcolor[gray]{.85}\method & \cellcolor[gray]{.85}0.590 & \cellcolor[gray]{.85}0.730 & \cellcolor[gray]{.85}0.861 & \cellcolor[gray]{.85}0.857 & \cellcolor[gray]{.85}0.656 & \cellcolor[gray]{.85}0.784 & \cellcolor[gray]{.85}0.885 & \cellcolor[gray]{.85}0.897\\
\bottomrule
\end{tabular}
}
% \vspace{1em}
\caption{Performance comparison of our method (\method) and the soft watermarking method proposed in \cite{kirchenbauer2023watermark} (denoted as KGW+23). Both methods employ LLaMA-7B with nucleus sampling, utilizing $\delta=2.0$ and $\gamma=0.5$. We use ChatGPT, DIPPER, and BART for paraphrasing the watermarked text as paraphrasing attacks. True positive rate and F1 score are presented for fixing the false positive rates at 1\% and 10\%. When there is no attack, both methods exhibit perfect watermark detection. Nevertheless, when subjected to paraphrasing attacks, \method consistently outperforms \baseline.}
% \vspace{-1em}
\label{table:main}
\end{table}

\subsection{Watermarking results}
% \begin{wrapfigure}{r}{0.4\textwidth}
% \vspace{-4mm}
% \centering
% \includegraphics[width=0.4\textwidth]{fig/z.pdf}
% \vspace{-4mm}
% \caption{$z$-score comparison: watermarked and un-watermarked machine-generated text, and human text. The watermarked text $z$-score surpasses the empirical threshold of $z=6.0$.}
% \label{fig:z}
% \vspace{-4mm}
% \end{wrapfigure}
% We use a watermark strength of $\delta = 2.0$ and a green list ratio of $\gamma = 0.5$, with different watermark keys $\ek$ for different models. 
% Stronger watermarks can be achieved for a smaller $\gamma$ and a larger $\delta$. 
% From two datasets, we generate 500 watermarked and 500 unwatermarked sentences using GPT2-XL, OPT-1.3B, and LLaMA-7B. We also have human-written text available for each prompt.
% % We label them as ``watermarked'' and ``un-watermarked'' respectively. We also have corresponding human-written text for each prompt, referred to as "human".
% All sentences are cropped to a length of 200 tokens. $z$-scores are calculated for hypothesis testing as shown in Algorithm \ref{alg:detect} between different sentence groups. The results (Figure \ref{fig:z_ppl}a) indicate a clear distinction between watermarked and non-watermarked text, with a default threshold of $z$-score $=6.0$.
% % A default threshold of $z$-score $=6.0$ can be used to determine if a text is watermarked. 
% For a fair comparison with \cite{kirchenbauer2023watermark}, we also set $\delta = 2.0$ and $\gamma = 0.5$ for their method. 

We use a watermark strength of $\delta = 2.0$ and a green list ratio of $\gamma = 0.5$. We also use different watermark keys $\ek$ for different models. Stronger watermarks can be achieved for shorter sequences for a smaller $\gamma$ and a larger $\delta$. From the two datasets, we generate 500 watermarked sentences and 500 un-watermarked sentences using three different models (GPT2-XL, OPT-1.3B, and LLaMA-7B). We label them as ``watermarked'' and ``un-watermarked'' respectively. We also have corresponding human-written text for each prompt, referred to as "human". All sentences are cropped to a length of 200 tokens. $z$-scores are calculated for hypothesis testing as shown in Algorithm \ref{alg:detect} between different sentence groups. The results (Figure \ref{fig:z_ppl}a) indicate a clear distinction between watermarked and non-watermarked text. A default threshold of $z$-score $=6.0$ can be used to determine if a text is watermarked. For a fair comparison with \cite{kirchenbauer2023watermark}, we also set $\delta = 2.0$ and $\gamma = 0.5$ for their method. 

% \begin{wrapfigure}{r}{0.40\textwidth}
% \vspace{-4mm}
% \centering
% \includegraphics[width=0.40\textwidth]{fig/ppl.pdf}
% \vspace{-4mm}
% \caption{Text perplexity of human text and text generated by different models on OpenGen dataset.}
% \label{fig:ppl}
% \vspace{-4mm}
% \end{wrapfigure}

% Figure \ref{fig:z_ppl}b shows the text perplexity of human, un-watermarked machine-generated, and two watermarking-generated texts, evaluated on the OpenGen dataset. 

Figure \ref{fig:z_ppl}b demonstrates the text perplexity of human, un-watermarked machine-generated, and two watermarking-generated texts, evaluated on the OpenGen dataset. The perplexity of human text is significantly lower, likely due to the expertise contributed in the Wikipedia-based dataset used to train GPT3. We observe that 
\begin{wraptable}{r}{0.35\textwidth}
%\vspace{-3mm}
% \small
\centering
\setlength{\tabcolsep}{1.pt}  
\begin{tabular}{lcc}
\toprule
     & Avg Score & STD \\
\midrule
Un-watermarked & 3.660 & 0.655 \\
\midrule
Watermarked & 3.665 & 0.619 \\
\bottomrule
\end{tabular}
% \vspace{-2mm}
\caption{Human evaluation result.}
\label{tab:humaneval}
% \vspace{-4mm}
\end{wraptable}
the perplexity of the watermarked text is comparable to that of human-generated text, especially with the use of the largest model LLaMA-7B. This finding further supports the effectiveness of our method in preserving linguistic characteristics and coherence, ensuring seamless integration of watermarks 
without compromising overall text quality. One example of the prompt questions and machine-generated answers can be found in Table \ref{tab:examples}.
We also conduct human evaluations to assess text quality. We enlist crowd workers from Amazon Mechanical Turk 
(AMT) to evaluate the quality of both watermarked and unwatermarked texts. From the LLaMA-7B model on the OpenGen dataset, we select 100 watermarked and 100 unwatermarked texts, anonymize the sentences, and ask workers to rate the quality on a scale of 1 (poor) to 5 (excellent). Each sentence undergoes two evaluations. The average score and standard deviation are computed and presented in Table \ref{tab:humaneval}.

% \begin{table}[htbp]
% \setlength{\tabcolsep}{3.5pt} 
% \label{table:res1}
% \centering
% \resizebox{\textwidth}{!}{
% \begin{tabular}{llcccccccccc}
% \toprule
% & & &  &\multicolumn{4}{c}{$z = 6.0$} & \multicolumn{4}{c}{$z=7.0$}\\
% \cmidrule(r){5-12}
% Dataset & Model & NW-PPL & W-PPL & FPR & TNR & TPR & FNR & FPR & TNR & TPR & FNR \\
% \midrule
% \multirow{3}{*}{OpenGen} & GPT2-XL & \fullres{20.87}{7.70} & \fullres{30.84}{12.09}  & 0.0 & 1.0 & 0.943 & 0.057  & 0.0 & 1.0 & 0.832 & 0.168 \\ 
%  & OPT-1.3B & \fullres{17.30}{6.00} & \fullres{19.32}{8.81}  & 0.0 & 1.0 & 0.997 & 0.003  & 0.0 & 1.0 & 0.991 & 0.009\\
%  & LLaMA-7B & \fullres{12.29}{6.00} & \fullres{16.73}{8.92} & 0.0 & 1.0 & 0.974 & 0.026 & 0.0 & 1.0 & 0.911 & 0.089\\
% \midrule
% \multirow{3}{*}{LFQA} & GPT2-XL & \fullres{22.64}{10.83} & \fullres{28.05}{12.95}  & 0.0 & 1.0 & 0.948 & 0.052 & 0.0 & 1.0 & 0.889 & 0.111 \\ 
%  & OPT-1.3B & \fullres{16.96}{6.02} & \fullres{19.35}{11.68} & 0.0 & 1.0 & 1.000 & 0.000  & 0.0 & 1.0 & 0.996 & 0.004\\
%  & LLaMA-7B & \fullres{16.14}{9.75} & \fullres{17.25}{10.40}  & 0.0 & 1.0 & 0.976 & 0.024  & 0.0 & 1.0 & 0.942 & 0.058 \\
% \bottomrule
% \end{tabular}}
% \caption{Empirical error rates for watermark detection}
% \end{table}

\begin{figure}[t]
\centering
\begin{subfigure}{0.9\linewidth}
\centering\includegraphics[width=1.0\linewidth]{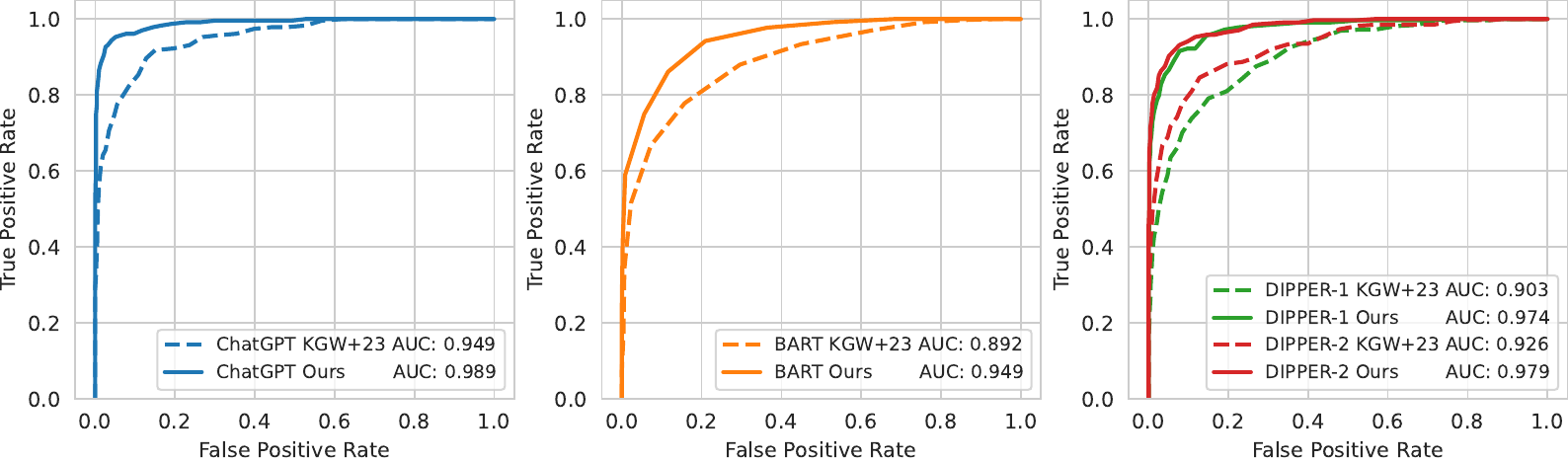}
    \caption{\method against paraphrasing attacks on OpenGen dataset with LLaMA-7B.}
\vspace{1em}
\end{subfigure}\label{fig:pp-open}

\begin{subfigure}{0.9\linewidth}
\centering\includegraphics[width=1.0\linewidth]{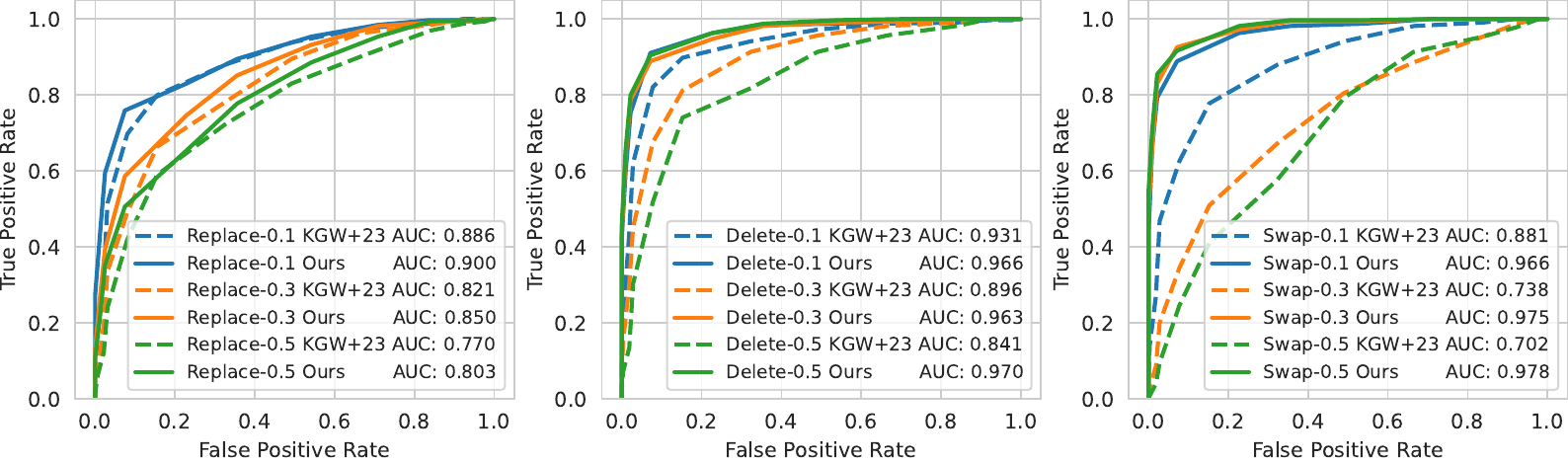}
    \caption{\method against editing attacks on LFQA dataset with LLaMA-7B. We vary the rates of synonym replacement, random deletion, and random swapping (0.1, 0.3, 0.5) to demonstrate different attack scenarios.}
\end{subfigure}\label{fig:edit-lfqa}
\caption{ROC curves with corresponding AUC values for watermark detection against various attack methods. Complete results can be found in the Appendix \ref{sec:appendix_exp}.}
% Our method (\method) exhibits superior robustness compared to the baseline (\baseline) across both datasets and all attack scenarios.}
\label{fig:pp}
% \vspace{-1.6em}
\end{figure}

\subsection{Robustness results}
One of the key advantages of our method is its robustness. To provide comprehensive evidence of its resilience, we conduct experiments to test its resilience against various attacking methods. 

\textbf{Paraphrasing attack.} 
% In the soft watermarking scheme proposed by \cite{kirchenbauer2023watermark}, the selection of an output token from the language model's green list relies on the token's prefix. However, this approach is vulnerable to paraphrase attacks that aim to remove the watermark signature. 
To demonstrate the superior robustness of our method, supported by our theorem, we devise experiments to compare its performance against  \cite{kirchenbauer2023watermark}. We employ different paraphrase attack techniques targeting the removal of the watermark.  Firstly, we utilized two versions of the DIPPER model~\citep{Krishna2023ParaphrasingED}, we denote them as ``DIPPER-1'' and ``DIPPER-2''. DIPPER-2 has greater diversity than DIPPER-1. Additionally, we leverage the ChatGPT API, generating paraphrased text by providing prompts such as ``\textit{Rewrite the following paragraph:}''. Furthermore, we employ BART \citep{Lewis2019BARTDS} (\texttt{bart-large-cnn}, a large-sized model fine-tuned on the CNN Daily Mail dataset \citep{hermann2015teaching}) for text summarization as another type of paraphrasing attack. The results of our experiments are shown in Figure \ref{fig:pp} and Table \ref{table:main}. 
% We also show the true positive rate, F1 score for false positive rates at 1\% and 10\%. 
The results illustrate the substantial improvement in robustness achieved by our method compared to \cite{kirchenbauer2023watermark}. Notably, our method achieves an accuracy rate of over 85\% with a false positive rate of 10\%.

\textbf{Editing attack.}
% To further evaluate the robustness of our method against edit attacks, we examine its performance when subjected to synonym replacement, random deletion, and random swapping. These edit attack scenarios represent common techniques used to manipulate text and potentially remove watermarks.
To further evaluate the robustness of \method against edit attacks, we examine its performance when subjected to synonym replacement, random deletion, and random swapping. These edit attack scenarios represent common techniques used to manipulate text and potentially remove watermarks. 
We conduct these attacks for the watermarked text of \method and \baseline. The results are shown in Figure \ref{fig:pp}. In each scenario, our method consistently outperforms  \cite{kirchenbauer2023watermark} watermarking scheme, showcasing its enhanced resilience and effectiveness in protecting the integrity of the embedded watermarks.

% \begin{figure}[h]
% \centering
% \includegraphics[width=0.50\textwidth]{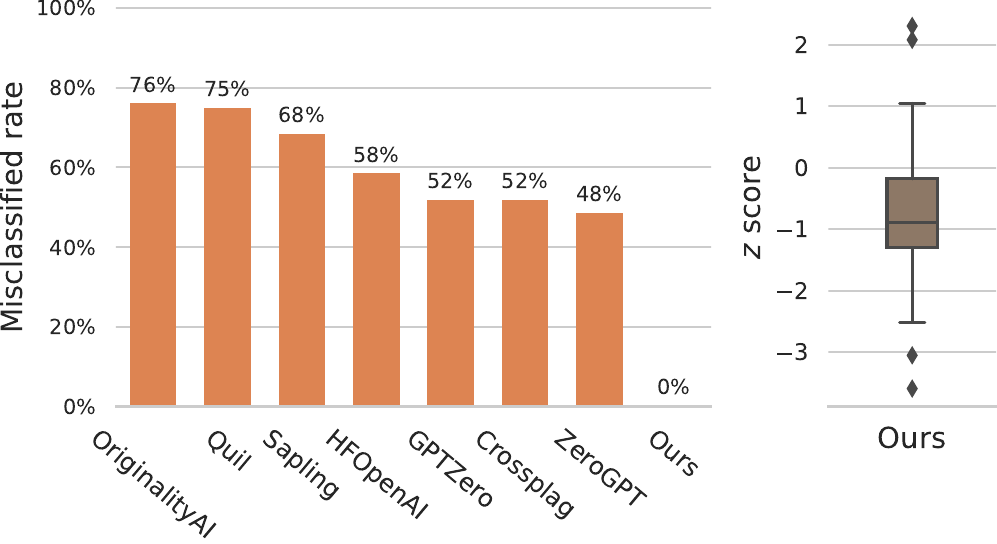}
%     \caption{Distinguishing human-written text on TOEFL dataset.}
%     \label{fig:bias}
% \end{figure}

\subsection{Distinguishing human-written text}

\begin{wrapfigure}{r}{0.42\textwidth}
\vspace{-4mm}
\centering
\includegraphics[width=0.42\textwidth]{}
\vspace{-4mm}
\caption{Distinguishing human-written text on TOEFL dataset.}
\label{fig:bias}
\vspace{-3mm}
\end{wrapfigure}

An interesting observation emphasized by \cite{Liang2023GPTDA} is the misclassification of non-native English writing samples as AI-generated by existing AI content detectors. 
% In light of this, our method has the unique capability to effectively establish the origin of suspicious text and maintain its robustness against distribution shifts.
Our method can effectively establish text origin and maintain robustness to distribution shifts. 
We evaluate \method in distinguishing human-written text on a dataset of human-written TOEFL essays collected by \cite{Liang2023GPTDA}.
% (more details are deferred to the Appendix).
% This dataset consists of 91 TOEFL essays (year <= 2020) from a Chinese educational forum (https://toefl.zhan.com/). 
Our method demonstrates a remarkable ability to accurately classify human-written text, as evidenced by significantly lower $z$-scores compared to the empirical threshold of $z=6.0$. This outcome underscores the effectiveness of our watermark in discerning text generated by human authors, further enhancing its practical utility and reliability.

% \begin{figure}[htbp]
%     \centering
%     \includegraphics[width=0.6\linewidth]{fig/bias.pdf}
%     \caption{Bias on TOEFL dataset.}
%     \label{fig:bias}
% \end{figure}

% \begin{figure}
%     \centering
%     \includegraphics[width=1.0\linewidth]{}
%     \caption{Comparison of perplexity scores between our watermarking scheme and human-written text. Both K et al. and ours use the $\delta=2.0, \gamma=0.5$ and for nucleus sampling with $p=0.9$.}
%     \label{fig:com_ppl}
% \end{figure}

\section{Conclusion and discussion}
\label{sec:dis}

%\textbf{Conclusion.} 
In this paper, we have addressed the concerns surrounding the potential misuse of large language models and proposed an effective watermarking approach, \method, for detecting machine-generated text from a \emph{specific} language model. Our contributions include the development of a rigorous theoretical framework, designing a provable effective, and robust watermarking scheme under this framework, as well as conducting extensive experiments to demonstrate the effectiveness and robustness of our method in practice.  We anticipate that our work will inspire future research to develop more resilient watermarking methods capable of withstanding a broader range of attacks.

\noindent\textbf{Applicability to general $K$-Gram watermark.} While we focused on \method{}, most of our results apply to $K$-Gram watermarks with $K\geq 2$ too. These include the Type I error bound, security properties (Robustness to edits), as well as the ``Unique'' alternative detector which we presented in Appendix~\ref{sec:detect_unique}. While our Type II error bound does not \emph{directly} work for $K\geq 2$, some of our intermediate steps can be applied.

\noindent\textbf{Limitations.} While our watermarking method, \method, demonstrates improved robustness against edits, its reliance on a fixed Green-Red split may not be universally optimal. The performance and robustness of watermarking methods can vary depending on the specific characteristics of the LLM and the generated text. Additionally, although our method enhances detection capabilities, it is not immune to all possible attacks.

\noindent\textbf{Future work.} Future work includes constructing \emph{unlearnable} watermarks, understanding the robustness-learnability tradeoff as well as unifying cryptographical and statistical watermarks.

% \section{Conclusion}
% \label{sec:conclusion}
% \input{070conclusion}

% \subsection*{Acknowledgments}
% YW was partially supported by NSF Award \#2048091. XZ was partially supported by a UCSB Chancellor's Fellowship. We appreciate the discussion with Sam Gunn about the connection and differences to their concurrent work \citep{christ2023undetectable} and the authors of \citet{kirchenbauer2023watermark} for making their code available.

\bibliography{custom}
\bibliographystyle{unsrtnat}

\appendix

\newpage

\section{More on related work}
\label{sec:related}
% \citep{Jawahar2020AutomaticDO, Clark2021AllT, gptzero, Mitchell2023DetectGPTZM}
\textbf{Watermarking natural languages.}
The concept of watermarking, which involves hiding identifying information within data, has a long history. However, watermarking digital text has been challenging due to its discrete nature \citep{Kutter2000InformationHT}. Early approaches relied on techniques such as synonym substitution \citep{Topkara2006TheHV}, syntactic structure restructuring \citep{Atallah2001NaturalLW}, or paraphrasing \citep{Atallah2002NaturalLW}. Later, advancements in modern neural language models led to improved methods that move away from rule-based approaches. Different approaches have been proposed, such as encoding messages by context-aware lexical substitution \citep{yang2022tracing} or using mask-infilling models for editing text \citep{ueoka-etal-2021-frustratingly}. Recent studies \citep{Zhao2023ProtectingLG, kirchenbauer2023watermark} explore modifying the logits of language models during token generation and embedding invisible watermarks in the decoding process. Our objective is to develop a robust watermarking technique for natural language models that maintain high text quality while effectively concealing identifying information.

\textbf{Post-hoc detection.}
Rather than watermarking, an alternative approach involves developing detection models for post-hoc analysis of machine-generated text. Some detection methods use statistical outlier detection techniques without requiring additional training. For example, GLTR \citep{Gehrmann2019GLTRSD} assesses the expected probability of individual tokens and applies thresholding to identify AI-generated content. DetectGPT \citep{Mitchell2023DetectGPTZM} suggests that AI-generated passages tend to reside in the negative curvature of the log probability of texts. Another set of methods relies on classifiers that are fine-tuned to distinguish between human-written and machine-generated text. Initial efforts in this domain focus on detecting fake reviews \citep{Hovy2016TheEI} and fake news \citep{zellers2019defending}. More recently, OpenAI releases a web interface that uses a finetuned GPT model for this discrimination task \citep{OpenAI2023Detect}. However, as language models improve, AI-generated text is becoming increasingly similar to human-generated text, making it more challenging to detect. \cite{Gambini2022OnPD} find that existing detection strategies designed for GPT-2 struggle with GPT-3. Moreover, known detectors are found to be fragile to adversarial attacks \citep{Wolff2020AttackingNT} and biased towards non-native English writers \citep{Liang2023GPTDA}.

\textbf{Impossibility results? } \citet{Sadasivan2023CanAT} poses the question of whether detecting machine-generated text is possible and argue that as the human distribution and LLM distribution of texts get closer, any classifier will have to either have a large Type I error or a large Type II error. The authors also argue that (in Corollary 2) if the watermarking scheme can be learned then paraphrasing attacks either evade the detector or also classify humans with a similar distribution as false positives. This does not invalidate our results as we made no theoretical claim about paraphrasing. We do claim that in Theorem~\ref{thm:quality} that the watermarked LM $\hat{\cM}$ and original LM $\cM$ is statistically close --- in fact, indistinguishable in the ``differential privacy'' sense. But the indistinguishability is for each token. As the number of tokens gets larger, they will eventually become distinguishable, that is why our Theorem~\ref{thm:no_false_positive} and Theorem~\ref{thm:only_true_detection} are not contradicting Theorem~\ref{thm:quality}. This argument was initially pointed out by \citet{chakraborty2023possibilities}, showing that detection is possible.
%Also, the learnability of the watermarking scheme is questionable too since the green-red lists are generated randomly --- these can be seen as injecting a very special ``style'' to an LLM. The style being randomly generated makes sure that it is extremely unlikely for any human to develop the same style by chance. 

\textbf{Language model watermarks with provable guarantees.} Concurrent to our work, \citet{christ2023undetectable} consider the problem of formally defining watermarking language models and propose a construction with provable guarantees. The main differences between their work and ours are: 
\begin{itemize}[leftmargin=*]
    \item In \citet{christ2023undetectable}, the watermarked distribution is computationally indistinguishable (i.e., indistinguishable against probabilistic polynomial-time algorithms) from the un-watermarked distribution whereas in our case, we insist that the watermarked distribution is statistically close to the un-watermarked distribution (of each token). %\pnote{I think this has to be carefully phrased. A reader might wonder how this does not contradict either Type-I or Type-II errors}. 
    The Type-I/Type-II error guarantees and the security properties are qualitatively different in both works. 
    \item We both use different approaches to achieve our definitions. The advantage of our construction is that it satisfies robustness to edits property whereas they have no such guarantees. On the other hand, our construction uses a very different set of assumptions (e.g., high entropy) on the language model and prompt that appears to be incompatible with theirs.%On the other hand, our construction makes a number of assumptions on the model and the input prompt whereas they only make an assumption on the entropy of the output text. 
    \item Finally, we implement our construction and conduct a thorough empirical evaluation to demonstrate its practicality while 
    they don't provide any implementation of their construction. 
\end{itemize}

\noindent\textbf{Statistical vs Cryptographic Watermarks.} \citet{christ2023undetectable} and \citet{aaronson} are examples of \emph{cryptographic} watermarks, while \citet{kirchenbauer2023watermark} and this paper study \emph{statistical} watermarks. There are several prominent differences that make it a bit challenging to compare the two kinds, but we will try. To start, we argue that both \citet{christ2023undetectable} and \citet{aaronson} use a similar definition of language model watermarks as Definition~\ref{def:wm} and considered a similar set of properties.  Specifically, the ``soundness'', ``completeness'' from \citet{christ2023undetectable} directly map to our ``Type I error'' and ``Type II error'' requirements. 
As we understand from the materials in \citet{aaronson}'s talk, their ``indistinguishability'' is a form of performance guarantee for $\hat{\cM}$. The difference to ours is that they require (in our notation) 
$$\P_{\hat{\cM}(\text{prompt})}[\text{Next token}] = \E_{\ek}\left[\P_{\hat{\cM}(\text{prompt})}\left[\text{Next token} \middle| \ek \right]\right] =  \P_{\cM(\text{prompt})}[\text{Next token}]$$
where the random key $\ek$ is marginalized out.
while our results require that for every $\ek$ the next token 
$$\P_{\hat{\cM}(\text{prompt})}[\text{Next token}|\ek] \approx_\delta \P_{\cM(\text{prompt})}[\text{Next token}]$$ to be statistically close (in the same sense of $\delta$-differential privacy). By our metric, however, \citet{aaronson}'s watermark does not appear to satisfy any nontrivial $\delta$ guarantee, since it only requires \emph{unbiasedness}. For that reason, the detection guarantee and its tradeoff with quality that we discussed in Remark~\ref{rmk:optimality} is not applicable to the cryptographic watermarks.

\section{Additional experiment results}\label{sec:appendix_exp}

% \subsection{More on experiment setup}\label{sec:more_exp_setup}

\subsection{Empirical error rates}
We perform experiments on two datasets (OpenGen and LFQA) using three different models (GPT2-XL, OPT-1.3B, and LLaMA-7B). Table \ref{table:res1} presents the error rates, showcasing the sensitivity of the resulting hypothesis test based on observed $z$-scores. The results demonstrate that there are no Type-I (false positive) errors for all models, with true positive rates exceeding 0.94 for a threshold of $z=6.0$.
\begin{table}[htbp]
\small
\centering
% \resizebox{\textwidth}{!}{
\begin{tabular}{llcccccccc}
\toprule
& &\multicolumn{4}{c}{$z = 6.0$} & \multicolumn{4}{c}{$z=7.0$}\\
\cmidrule(r){3-10}
Dataset & Model & FPR & TNR & TPR & FNR & FPR & TNR & TPR & FNR \\
\midrule
\multirow{3}{*}{OpenGen} & GPT2-XL  & 0.0 & 1.0 & 0.943 & 0.057  & 0.0 & 1.0 & 0.832 & 0.168 \\ 
 & OPT-1.3B & 0.0 & 1.0 & 0.998 & 0.002  & 0.0 & 1.0 & 0.996 & 0.004\\
 & LLaMA-7B & 0.0 & 1.0 & 0.974 & 0.026 & 0.0 & 1.0 & 0.911 & 0.089\\
\midrule
\multirow{3}{*}{LFQA} & GPT2-XL & 0.0 & 1.0 & 0.948 & 0.052 & 0.0 & 1.0 & 0.889 & 0.111 \\ 
 & OPT-1.3B & 0.0 & 1.0 & 1.000 & 0.000  & 0.0 & 1.0 & 0.997 & 0.003\\
 & LLaMA-7B & 0.0 & 1.0 & 0.976 & 0.024  & 0.0 & 1.0 & 0.942 & 0.058 \\
\bottomrule
\end{tabular}
% }
\vspace{1em}
\caption{Empirical error rates for watermark detection using different models on two datasets. All models employ nucleus sampling with $\delta=2.0$ and $\gamma=0.5$. No Type-I (false positive) errors are observed across all models.}
\label{table:res1}
\end{table}

\subsection{Different watermark parameters}

We conduct an analysis to understand the impact of changing watermark strength ($\delta$), green list size ($\gamma$), and sampling methods on two datasets. The results are summarized in Table \ref{table:res2}. When using nucleus sampling with a fixed $\gamma=0.5$, increasing the watermark strength resulted in higher true positive rates (TPR), but it also led to an increase in perplexity (lower quality). Furthermore, for the same watermark strength $\delta$, varying the green list ratio from 0.25 to 0.5 and 0.75 showed improved detection results with smaller $\gamma$. Additionally, we explore different decoding methods, transitioning from nucleus sampling to multinomial sampling and beam search. Remarkably, watermark detection performed effectively with all decoding methods. It is worth noting that the perplexity score for beam search is significantly lower than that of nucleus sampling. However, beam search tends to generate shorter sequences with repeated words.

\begin{table}[htbp]
\setlength{\tabcolsep}{3pt}
\centering
\resizebox{0.8\textwidth}{!}{
\begin{tabular}{llccccccccccc}
\toprule
& &  & &  &\multicolumn{4}{c}{$z = 6.0$} & \multicolumn{4}{c}{$z=7.0$}\\
\midrule
% \cmidrule(r){1-2}
Dataset & decoding & $\delta$ & $\gamma$  & PPL & FPR & TNR & TPR & FNR & FPR & TNR & TPR & FNR \\
\midrule
\multirow{10}{*}{OpenGen} 
 & nucleus & 1.0 & 0.5 & \fullres{18.37}{6.45} & 0.0 & 1.0 & 0.576 & 0.424 & 0.0 & 1.0 & 0.310 & 0.690 \\
 & nucleus & 2.0 & 0.5 & \fullres{19.42}{8.78} & 0.0 & 1.0 & 0.998 & 0.002  & 0.0 & 1.0 & 0.996 & 0.004 \\
 & nucleus & 5.0 & 0.5 &\fullres{19.44}{15.02}& 0.0 & 1.0 & 1.000 & 0.000 & 0.0 & 1.0 & 1.000 & 0.000 \\ 
 & nucleus & 10.0 & 0.5 & \fullres{19.20}{18.01}  & 0.0 & 1.0 & 1.000 & 0.000 & 0.0 & 1.0 & 1.000 & 0.000 \\
 & nucleus & 2.0 & 0.25 & \fullres{17.96}{9.54} & 0.0 & 1.0 & 1.000 & 0.000 & 0.0 & 1.0 & 1.000 & 0.000\\
 & nucleus & 2.0 & 0.75 & \fullres{20.03}{7.67} & 0.0 & 1.0 & 0.820 & 0.180 & 0.0 & 1.0 & 0.485 & 0.515 \\
 & m-nom. & 2.0 & 0.5 & \fullres{1.75}{0.59}  & 0.0 & 1.0 & 0.951 & 0.049 & 0.0 & 1.0 & 0.924 & 0.076\\
 & 4-beams & 2.0 & 0.5 & \fullres{1.83}{0.97} & 0.0 & 1.0 & 0.992 & 0.008 & 0.0 & 1.0 & 0.982 & 0.018 \\
 & 6-beams & 2.0 & 0.5 & \fullres{1.89}{1.10}  & 0.0 & 1.0 & 0.984 & 0.016 & 0.0 & 1.0 & 0.982 & 0.018\\
 & 8-beams & 2.0 & 0.5 & \fullres{1.96}{1.23} & 0.0 & 1.0 & 0.986 & 0.014 & 0.0 & 1.0 & 0.984 & 0.016\\
\midrule
\multirow{10}{*}{LFQA} 
 & nucleus & 1.0 & 0.5 & \fullres{18.63}{7.19} & 0.0 & 1.0 & 0.455 & 0.545 & 0.0 & 1.0 & 0.199 & 0.801 \\  
 & nucleus & 2.0 & 0.5 & \fullres{19.14}{11.11}  & 0.0 & 1.0 & 1.000 & 0.000  & 0.0 & 1.0 & 0.997 & 0.003 \\
 & nucleus & 5.0 & 0.5 & \fullres{16.37}{15.39} & 0.0 & 1.0 & 1.000 & 0.000 & 0.0 & 1.0 & 1.000 & 0.000 \\
 & nucleus & 10.0 & 0.5 & \fullres{16.07}{14.25} & 0.0 & 1.0 & 0.998 & 0.002 & 0.0 & 1.0 & 0.998 & 0.002 \\
 & nucleus & 2.0 & 0.25 & \fullres{15.27}{10.00} & 0.0 & 1.0 & 1.000 & 0.000 & 0.0 & 1.0 & 1.000 & 0.000 \\
 & nucleus & 2.0 & 0.75 & \fullres{19.44}{8.20} & 0.0 & 1.0 & 0.893 & 0.107 & 0.0 & 1.0 & 0.582 & 0.418\\
 & m-nom. & 2.0 & 0.5 & \fullres{3.17}{2.39}  & 0.0 & 1.0 & 0.934 & 0.066 & 0.0 & 1.0 & 0.914 & 0.086\\
 & 4-beams & 2.0 & 0.5 & \fullres{3.24}{2.85} & 0.0 & 1.0 & 0.990 & 0.010 & 0.0 & 1.0 & 0.986 & 0.014 \\
 & 6-beams & 2.0 & 0.5 & \fullres{3.20}{2.52} & 0.0 & 1.0 & 0.994 & 0.006 & 0.0 & 1.0 & 0.994 & 0.006 \\
 & 8-beams & 2.0 & 0.5 & \fullres{3.13}{2.37} & 0.0 & 1.0 & 0.994 & 0.006 & 0.0 & 1.0 & 0.992 & 0.008 \\
\bottomrule
\end{tabular}
}
\vspace{1em}
\caption{Comparison of empirical error rates for watermark detection using nucleus sampling, multinomial decoding, and beam search. Each row represents the average of 500 sequences. While sequences generated with beam search exhibit lower perplexity, they tend to favor shorter outputs, potentially resulting in less diverse text.}
\label{table:res2}
\end{table}

\subsection{Additional robustness results}
In addition to the previously discussed robustness evaluations, we provide further analysis of our method's resilience against paraphrasing attacks and editing attacks. The results are presented in Figure \ref{fig:pp_appendix}. Notably, our proposed method (\method) consistently outperforms the baseline approach (\baseline) across various datasets and attack scenarios. This demonstrates the superior robustness of our method in accurately detecting watermarked text.

\begin{figure}[htbp]
\centering
\begin{subfigure}{0.8\linewidth}
    \centering\includegraphics[width=1.0\linewidth]{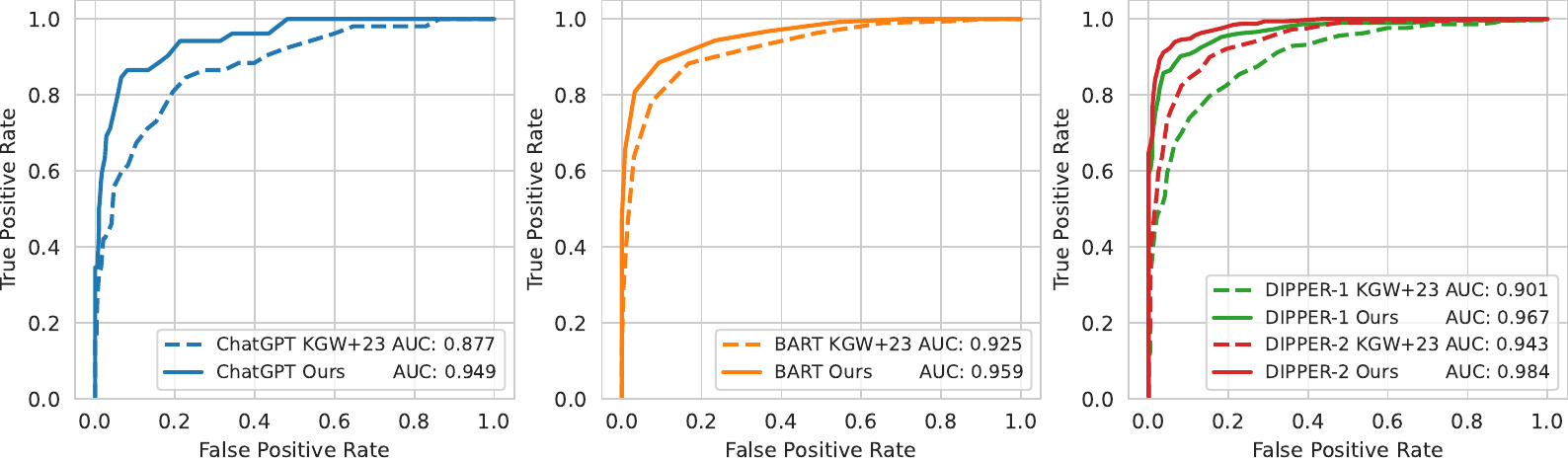}
    \caption{\method against paraphrasing attacks on LFQA dataset with LLaMA-7B.}
\end{subfigure}
% \begin{subfigure}{1.0\linewidth}
%     \centering\includegraphics[width=1.0\linewidth]{fig/lfqa-llama-pp.pdf}
%     \caption{LLaMA against paraphrasing attacks on LFQA dataset}
% \end{subfigure}\label{fig:pp-lfqa}
% \begin{subfigure}{1.0\linewidth}
%     \centering\includegraphics[width=1.0\linewidth]{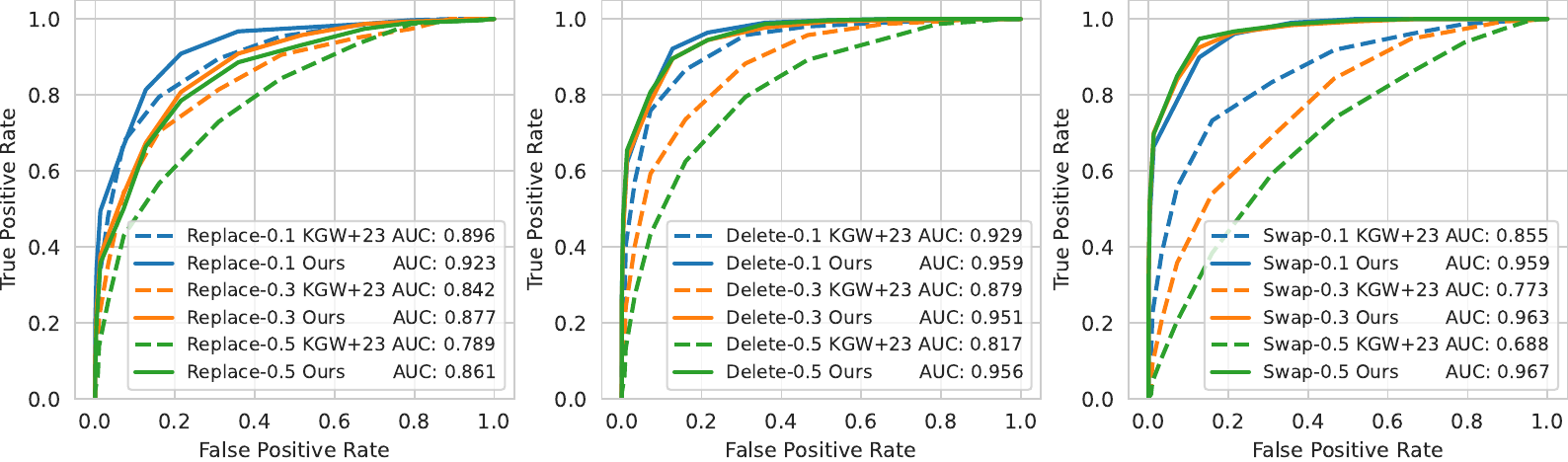}
%     \caption{LLaMA against edit attacks on OpenGen dataset}
% \end{subfigure}\label{fig:edit-open}
\begin{subfigure}{0.8\linewidth}
    \centering\includegraphics[width=1.0\linewidth]{fig/open-generation-llama-edit.pdf}
    \caption{\method against editing attacks on OpenGen dataset with LLaMA-7B. We vary the rates of synonym replacement, random deletion, and random swapping (0.1, 0.3, 0.5) to demonstrate different attack scenarios.}
\end{subfigure}
\caption{ROC curves with corresponding AUC values for watermark detection against various attack methods. }
\label{fig:pp_appendix}
% \vspace{-1em}
\end{figure}

\subsection{White-box attack}\label{sec:guess_green}
A potential attack for \method is to estimate the fixed green and red list. Then the adversary may attempt to bypass detection using these estimated lists. We conduct experiments on white-box attacks and we find that it is difficult to accurately estimate the green list. Even if the green list is known, our watermark is still somewhat effective thanks to our added robustness.

\subsubsection{Estimating the Green List tokens}
The question arises: how can the adversary estimate the green list?  We simulate an adversary attempting to learn the green list tokens by querying the model multiple times. The adversary collects token distributions from watermarked text and compares them to natural human distributions.

In our experiment, we query the LLaMA-13B watermarked model with watermark strength 
$ \delta = 2.0 $, watermark ratio $\gamma = 0.5$ (same setting in the paper) 2500 times, collecting 0.7 million tokens of watermarked text generated from the prompts in LFQA and OpenGen dataset.

Then we simulate three human data distributions:
\begin{enumerate}[leftmargin=*]
\item The human response from the same prompt (LFQA and OpenGen dataset). The corresponding human output is 0.4 million tokens. We denote it as the ``LFQA \& OpenGen dataset''

\item Most times, human responses are not known. So we collect 2000 samples from the C4 \citep{raffel2020exploring} dataset to form an approximate human dataset with 1 million tokens. We denote it as the “C4 dataset”.

\item To simulate the distribution from non-native speakers. We also collect a non-native speaker (TOEFL essay) dataset from \citet{Liang2023GPTDA} with 12k tokens. We denote it as the “Non-native dataset”.
\end{enumerate}

We calculate token frequencies for the three ``human'' datasets and the watermarked dataset. We use the following decision rule (Algorithm \ref{alg:guessgreen}) to decide whether a token is green or red.
\begin{algorithm}[tbh]
   \caption{Estimating the Green List tokens}
   \label{alg:guessgreen}
\begin{algorithmic}[1]
    \FOR{every token $v$ in the vocabulary $\mathcal{V}$}
        \STATE $\Delta(v) \gets \text{ Frequency($v$ in watermarked text) } - \text{ Frequency($v$ in human text)}$
        \IF{$\Delta(v) \ge 0$}
            \STATE $v$ is in the Green List.
        \ELSE
            \STATE $v$ is in the Red List.
        \ENDIF
    \ENDFOR
\end{algorithmic}
\end{algorithm}

The estimation results for the green list tokens are shown in the table below.
\begin{table}[h]
\centering
\begin{tabular}{lcccc}
\toprule
Dataset & TPR & FPR & FNR & F1 \\
\midrule
LFQA \& OpenGen dataset & 0.692 & 0.830 & 0.170 & 0.755 \\
C4 dataset & 0.591 & 0.806 & 0.194 & 0.609 \\
Non-native dataset & 0.323 & 0.923 & 0.077 & 0.463 \\
\bottomrule
\end{tabular}
\end{table}

The results suggest that while it is possible to make non-trivial inferences about which token is green, it is hard to say for sure. Notice that we are using a rather big watermark strength. For smaller and more esoteric contexts (prompt, e.g., Non-native TOEFL dataset), such determination is harder.

\subsubsection{Evasion attack (white-box and estimated)}
In situations where the adversary has either an estimated version or full knowledge of the green and red lists, they can formulate an evasion strategy. We simulate this by assuming the adversary employs WordNet from NLTK to identify token synonyms. Tokens identified as in the green list are replaced with red list synonyms, noting that some tokens may not have synonyms or may only have green synonyms.

\begin{table}[h]
\small
\centering
\begin{tabular}{lcc}
\toprule
Green List & Detect AUC & Avg PPL (eval by GPT-3) \\
\midrule
No attack & 1.000 & 45.413 \\
Know all green tokens & 0.8413 & 193.410 \\
Estimated from LFQA \& OpenGen dataset & 0.9397 & 189.423 \\
Estimated from C4 dataset & 0.9291 & 189.070 \\
Estimated from Non-native dataset & 0.9998 & 125.380 \\
\bottomrule
\end{tabular}
\caption{Evasion attack results: analysis of detection AUC and perplexity.}
\label{tab:evasion_attack}
\end{table}
The results in Table \ref{tab:evasion_attack} show it is difficult to evade detection even with known green list tokens. The detection AUC for the watermarked text is still somewhat high. In addition, the honest attempt to evade the attack by automatic synonym replacement has led to a significant drop in the text quality.

\subsection{Testing on scaled language models}
\begin{table}[h]
\small
\centering
\begin{tabular}{lcc}
\toprule
 & \textbf{OpenGen} & \textbf{LFQA} \\
\midrule
\textbf{LLaMA-13B} & & \\
No attack & 1.000 & 1.000 \\
ChatGPT attack & 0.783 & 0.854 \\
\midrule
\textbf{LLaMA-65B} & & \\
No attack & 1.000 & 1.000 \\
ChatGPT attack & 0.831 & 0.697 \\
\bottomrule
\end{tabular}
\caption{Detection results (TPR at 1\% FPR) for scaled models LLaMA-13B and LLaMA-65B.}
\label{tab:scaled_models}
\end{table}
We conduct supplementary experiments on the scaled models LLaMA-13B and LLaMA-65B. Using the same experimental settings as in the main paper, our preliminary results show that our method maintains effectiveness on these larger models. For LLaMA-13B, we are able to use the same test set size as in the original paper. For LLaMA-65B, due to computational constraints, we test on a sample of 100 sentences. The results (TPR at 1\% FPR) are shown in Table \ref{tab:scaled_models}.

\subsection{Results for deduplicated detection}
\begin{table}[h]
\small
\centering
\begin{tabular}{lcc}
\toprule
 & \textbf{OpenGen} & \textbf{LFQA} \\
\midrule
\textbf{LLaMA-13B} & & \\
No attack - Unique Detector & 1.000 & 1.000 \\
ChatGPT attack - Unique Detector & 0.679 & 0.773 \\
\midrule
\textbf{LLaMA-65B} & & \\
No attack - Unique Detector & 1.000 & 1.000 \\
ChatGPT attack - Unique Detector & 0.783 & 0.682 \\
\bottomrule
\end{tabular}
\caption{Detection results (TPR at 1\% FPR) with ``Unique'' detector.}
\label{tab:dedup_res}
\end{table}
An alternative detector, named ``Unique'' demonstrates improved robustness in detection and offers advantages in controlling false positives with ease (Section \ref{sec:detect_unique}).  We conduct experiments to evaluate deduplicated detection performance, with the outcomes presented in Table \ref{tab:dedup_res}.

%\pagebreak
\section{Main theoretical results with proofs}\label{sec:appendix_proof}

In this section, we state and prove
%the full proof details for 
the guarantees for \method which certifies the required quality, correctness, and security properties of a language model watermarking scheme from Definition~\ref{def:wm}.

\textbf{Symbols and mathematical notations.}  We use $\P[\cdot]$, $\E[\cdot]$, $\P[\cdot | \cdot]$ and $\E[\cdot |\cdot]$ to denote the probability, expectation operator, conditional probability and conditional expectation respectively. Whenever there is ambiguity on which distribution the random variables are drawn from, we explicitly state them, e.g.,  $\P_{(X,Y)\sim\cD}[X < 3 | Y=y]$,  or equivalently $\P[X< 3 | Y=y \ ;\ (X,Y)\sim \cD]$. To avoid clutter, we do not distinguish between random variables and constants as the distinctions are clear from the context. 
Boldface symbols denote a vector, e.g., a probability mass function $\mathbf{p}$ or a sequence of tokens $\boldsymbol{y}$. $\|\cdot\|_2,\|\cdot\|_\infty$ denotes the standard $\ell_2$ and $\ell_\infty$-norms of a vector. In addition, $[n]$ is a shorthand for $\{1,2,...,n\}$. 
Other symbols and their meanings will be defined as we encounter them. 

\subsection{Quality guarantees}
We start by providing a strong utility analysis of the watermarked language model than the ``perplexity'' bound from \citep{kirchenbauer2023watermark}. Our results work for the entire family of R\'{e}nyi-divergence and imply guarantees in Kullback-Leibler (KL) divergence and Total Variation-distance.

The Renyi-divergence of two distributions $P$, $Q$ is defined as
$$
D_{\alpha}\big(P \| Q\big) = \frac{1}{\alpha-1}\log \E_{x\sim Q}\left[ (\frac{dP}{dQ})^\alpha\right]
$$
where $\frac{dP}{dQ}$ is the Radon–Nikodym derivative. When $\alpha\rightarrow 1$, the Renyi divergence converges to the KL-divergence. Additionally, when $\alpha = 0.5$, it serves as an upper bound for the TV-distance.

On the technical level, we leverage a surprising connection to a modern machinery developed in the differential privacy literature known as ``bounded range'' analysis \citep{dong2020optimal} of the classical exponential mechanism \citep{mcsherry2007mechanism}. 

%of the correctness guarantee, focusing on the Kullback-Leibler (KL) divergence. We establish upper bounds for the KL divergence, ensuring the consistency and effectiveness of our watermarking design.

\begin{theorem}[Restatement of Theorem~\ref{thm:quality}]
Consider $\boldsymbol{h}$ as the input to the language model at step $t$, denoted as $\boldsymbol{h} = [\boldsymbol{x}, \boldsymbol{y}_{1:t-1}]$. Fix green list $G$. Let $\delta$ represent the watermark strength. For any $\boldsymbol{h}$, the $\alpha$-th order Renyi-divergence between the watermarked probability distribution $\hat{\mathbf{p}}_t = \hat{\mathbf{p}}_t(\cdot|\boldsymbol{h})$ at time step $t$ and the original probability distribution $\mathbf{p}_t= \mathbf{p}_t(\cdot|\boldsymbol{h})$ satisfies:
$$\forall \boldsymbol{h}, \max \big(D_{\alpha}\big(\hat{\mathbf{p}}_t \| \mathbf{p}_t\big), D_{\alpha}\big(\mathbf{p}_t \| \hat{\mathbf{p}}_t\big) \big) \leq \min\{ \delta, \alpha\delta^2/8\}.$$
\end{theorem}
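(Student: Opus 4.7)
The key structural observation is that Algorithm~\ref{alg:wm} is an instance of the classical \emph{exponential mechanism} from differential privacy. Defining the score function $f:\cV\to\{0,\delta\}$ by $f(v)=\delta\mathbf{1}(v\in G)$, one can write $\hat{\mathbf{p}}_t(v)=\mathbf{p}_t(v)\,e^{f(v)}/Z$ with $Z:=\E_{v\sim \mathbf{p}_t}[e^{f(v)}]$, so the likelihood ratio $\hat{\mathbf{p}}_t(v)/\mathbf{p}_t(v)=e^{f(v)}/Z$ is controlled by $f$, whose range is exactly $\delta$. An equally useful symmetric observation is that $\mathbf{p}_t$ is itself the exponential mechanism applied to $\hat{\mathbf{p}}_t$ with the score $-f$ (also of range $\delta$) and normalizer $\E_{\hat{\mathbf{p}}_t}[e^{-f}]=1/Z$, so it is enough to bound $D_\alpha(\hat{\mathbf{p}}_t\|\mathbf{p}_t)$; the reverse direction follows by swapping the roles of $\mathbf{p}_t$ and $\hat{\mathbf{p}}_t$ and invoking the same bound.

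For the $\delta$ bound, both $e^{f(v)}$ and $Z$ lie in $[1,e^\delta]$, so the likelihood ratio $\hat{\mathbf{p}}_t(v)/\mathbf{p}_t(v)$ lies in $[e^{-\delta},e^\delta]$. This gives $D_\infty(\hat{\mathbf{p}}_t\|\mathbf{p}_t)\le\delta$ and hence $D_\alpha(\hat{\mathbf{p}}_t\|\mathbf{p}_t)\le\delta$ for every $\alpha\ge 1$ by monotonicity of R\'enyi divergence in $\alpha$.

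For the $\alpha\delta^2/8$ bound, expand the definition of R\'enyi divergence to reduce the claim to the inequality
$$\log\E_{\mathbf{p}_t}\!\left[e^{\alpha f}\right]-\alpha\log\E_{\mathbf{p}_t}\!\left[e^{f}\right]\le \frac{\alpha(\alpha-1)\delta^2}{8},$$
which is the ``bounded-range Hoeffding inequality'' at the heart of the tight R\'enyi-DP analysis of the exponential mechanism (Dong--Durfee--Rogers, 2020). Dividing by $\alpha-1$ then delivers the claim.

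The main technical obstacle is establishing this inequality with the sharp constant $1/8$: a direct application of the ordinary Hoeffding lemma to $f-\E_{\mathbf{p}_t}[f]$ yields only the weaker $\alpha^2\delta^2/[8(\alpha-1)]$, which blows up as $\alpha\to 1^+$. The tight bound instead comes from a convex-analysis argument on the cumulant generating function $M(\tau):=\log\E_{\mathbf{p}_t}[e^{\tau f}]$. Since $M''(\tau)$ equals the variance of $f$ under the exponentially tilted measure $\mathbf{p}_t^{(\tau)}(v)\propto\mathbf{p}_t(v)e^{\tau f(v)}$, Popoviciu's inequality (applied to the $\delta$-range variable $f$) gives $M''(\tau)\le\delta^2/4$ uniformly in $\tau$. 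Setting $\phi(\alpha):=M(\alpha)-\alpha M(1)$, one checks $\phi(0)=\phi(1)=0$ and $\phi''=M''\le\delta^2/4$; comparing $\phi$ with the quadratic $q(\alpha):=\delta^2\alpha(\alpha-1)/8$ (which shares the same boundary values and has $q''=\delta^2/4$) makes $\phi-q$ concave, so $\phi(\alpha)\le q(\alpha)$ for $\alpha\ge 1$ and $\phi(\alpha)\ge q(\alpha)$ for $\alpha\in(0,1)$. In both cases this translates, after dividing by $\alpha-1$, to $D_\alpha(\hat{\mathbf{p}}_t\|\mathbf{p}_t)\le\alpha\delta^2/8$. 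Taking the minimum with the $\delta$ bound and applying the symmetry argument from the first paragraph to the reverse direction completes the proof of Theorem~\ref{thm:quality}.
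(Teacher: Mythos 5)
Your proof is correct, and it rests on the same structural insight as the paper's: the watermarked distribution is an exponential tilt (exponential mechanism) of the original by a score $f=\delta\mathbf{1}(\cdot\in G)$ of range $\delta$, the reverse direction follows by tilting back with $-f$, and the $\min\{\delta,\alpha\delta^2/8\}$ bound is exactly the bounded-range analysis of that mechanism. The difference is in how the key inequality is established. The paper's proof is a citation chain: it invokes Proposition 1 of \citet{dong2020optimal} to assert $\delta$-bounded range and Lemma 3.2 of \citet{cesar2021bounding} to convert bounded range into $\delta^2/8$-concentrated divergence. You instead reduce $D_\alpha(\hat{\mathbf{p}}_t\|\mathbf{p}_t)$ to $\frac{1}{\alpha-1}\bigl(M(\alpha)-\alpha M(1)\bigr)$ for the cumulant generating function $M$ of $f$, and prove the needed inequality from scratch via $M''=\mathrm{Var}_{\mathbf{p}_t^{(\tau)}}(f)\le\delta^2/4$ (Popoviciu) together with a concavity comparison against the quadratic $\delta^2\alpha(\alpha-1)/8$ sharing the boundary values at $0$ and $1$ — which is precisely the content of the lemma the paper cites. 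What your route buys is a self-contained argument that also transparently covers $\alpha\in(0,1)$ (where the sign of $\alpha-1$ flips and the concavity inequality reverses in the right way); as a bonus, your $\delta$ bound via $e^{f},Z\in[1,e^{\delta}]$ is cleaner than the paper's intermediate $e^{\pm 2\delta}$ likelihood-ratio computation. What the paper's route buys is brevity and an explicit pointer to the differential-privacy provenance of the constant $1/8$.
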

% \begin{theorem}\label{thm:kl_correctness}
% Consider $\boldsymbol{h}$ as the input to the language model at step $t$, denoted as $\boldsymbol{h} = [\boldsymbol{x}, \boldsymbol{y}_{1:t-1}]$. Let $\delta$ represent the watermark strength. For any $\boldsymbol{h}$, the KL-divergence between the watermarked probability distribution $\hat{\mathbf{p}}_t = \hat{\mathbf{p}}_t(\cdot|\boldsymbol{h})$ at time step $t$ and the original probability distribution $\mathbf{p}_t= \mathbf{p}_t(\cdot|\boldsymbol{h})$ satisfies:
% $$\forall \boldsymbol{h}, \max \big(D_{\mathrm{KL}}\big(\hat{\mathbf{p}}_t \| \mathbf{p}_t\big), D_{\mathrm{KL}}\big(\mathbf{p}_t \| \hat{\mathbf{p}}_t\big) \big) \leq \min\{ \delta, \delta^2/8\}$$
% \end{theorem}
\begin{proof}
We define $\delta_v = 0$ when $v \in R$ and $\delta_v = \delta$ when $v \in G$. Using this definition, we have:
$$
\hat{\mathbf{p}}(v | \boldsymbol{h}) = \frac{\exp(\boldsymbol{\ell}_v + \delta_v)}{\sum_w \exp(\boldsymbol{\ell}_w + \delta_w)} \leq \frac{\exp(\delta)\exp(\boldsymbol{\ell}_v)}{\exp(-\delta)\sum_w \exp(\boldsymbol{\ell}_w)} = e^{2\delta} \mathbf{p}(v | \boldsymbol{h})
$$
Similarly, $\hat{\mathbf{p}}(v | \boldsymbol{h}) \geq e^{-2\delta} \mathbf{p}(v | \boldsymbol{h})$.

Consequently, $\hat{\mathbf{p}}$ and $\mathbf{p}$ are $2\delta$-close in terms of max-divergence, which can be interpreted as $(\epsilon, \tilde{\delta})$-indistinguishable, similar to the concept of Differential Privacy \citep{dwork2006calibrating} with $\tilde{\delta} = 0$ and $\epsilon = 2\delta$.

Additionally, $\hat{\mathbf{p}}(v | \boldsymbol{h})$ and $\mathbf{p}(v | \boldsymbol{h})$ satisfy $\delta$-BoundedRange (Proposition 1 in \cite{dong2020optimal}) with parameter $\delta$, since the changes to $\boldsymbol{\ell}_v $ is monotonic.   Lemma 3.2 in \cite{cesar2021bounding} shows that $\delta$-Bounded Range implies $\delta^2/8$-concentrated differential privacy, which says that $D_{\alpha}(\hat{\mathbf{p}}\|\mathbf{p}) \leq \frac{\delta^2 \alpha}{8}$ for all $\alpha \geq 1$ (where $D_\alpha$ represents R\'{e}nyi Divergence of order $\alpha$). Specifically, when $\alpha = 1$, the KL-divergence satisfies $D_{\mathrm{KL}}(\hat{\mathbf{p}}\|\mathbf{p}) \leq \frac{\delta^2}{8}$.

Furthermore, $\delta$-BoundedRange implies $\delta$-DP (or rather $(\delta,0)$-indistinguishability, since we are dealing with just two distributions rather than a family of neighbor distributions). It follows from the that 
$$D_{\mathrm{KL}}(\hat{\mathbf{p}}\|\mathbf{p}) \leq D_{\infty}(\hat{\mathbf{p}}\|\mathbf{p}) \leq \delta$$ 
\end{proof}

\begin{corollary}
For any prompt $\boldsymbol{x}$, the KL-divergence between the probability distribution of the watermarked sequence and the original sequence satisfies:
$$\forall \boldsymbol{x}, \max\{D_{\mathrm{KL}}\big(\hat{\mathbf{p}}(\boldsymbol{y}_{1:n}|\boldsymbol{x}) \| \mathbf{p}(\boldsymbol{y}_{1:n}|\boldsymbol{x})\big), D_{\mathrm{KL}}\big(\mathbf{p}(\boldsymbol{y}_{1:n}|\boldsymbol{x})\|\hat{\mathbf{p}}(\boldsymbol{y}_{1:n}|\boldsymbol{x}) \big) \}\leq \alpha \min\{n\delta,n\delta^2/8\}$$
\end{corollary}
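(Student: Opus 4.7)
The plan is to lift the per-token bound of Theorem~\ref{thm:quality} to the joint distribution over $\boldsymbol{y}_{1:n}$ by a standard chain-rule/composition argument. The core observation is that both $\mathbf{p}(\boldsymbol{y}_{1:n}|\boldsymbol{x})$ and $\hat{\mathbf{p}}(\boldsymbol{y}_{1:n}|\boldsymbol{x})$ factor autoregressively as products of conditionals $\mathbf{p}_t(\cdot|\boldsymbol{h}_t)$ and $\hat{\mathbf{p}}_t(\cdot|\boldsymbol{h}_t)$ with $\boldsymbol{h}_t = [\boldsymbol{x}, \boldsymbol{y}_{1:t-1}]$, and Theorem~\ref{thm:quality} supplies a bound on the per-token divergence that is \emph{uniform} in $\boldsymbol{h}_t$.

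Concretely, I would first invoke the chain rule for KL divergence:
$$
D_{\mathrm{KL}}\!\left(\hat{\mathbf{p}}(\boldsymbol{y}_{1:n}|\boldsymbol{x}) \,\big\|\, \mathbf{p}(\boldsymbol{y}_{1:n}|\boldsymbol{x})\right) = \sum_{t=1}^n \E_{\boldsymbol{y}_{1:t-1}\sim \hat{\mathbf{p}}(\cdot|\boldsymbol{x})}\!\left[ D_{\mathrm{KL}}\!\left(\hat{\mathbf{p}}_t(\cdot|\boldsymbol{h}_t) \,\big\|\, \mathbf{p}_t(\cdot|\boldsymbol{h}_t)\right)\right].
$$
Theorem~\ref{thm:quality} with order $\alpha=1$ bounds each inner term by $\min\{\delta,\delta^2/8\}$ for every fixed $\boldsymbol{h}_t$, so the expectation over prefixes is bounded by the same quantity. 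Summing the $n$ terms yields the desired KL bound of $n\min\{\delta,\delta^2/8\}$. The reverse direction $D_{\mathrm{KL}}(\mathbf{p}\|\hat{\mathbf{p}})$ follows identically, with the outer expectation now taken under $\mathbf{p}$; again Theorem~\ref{thm:quality} gives the uniform per-step bound in this direction as well.

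For the $\alpha$ that appears in the statement of the corollary, the natural reading is that the bound is meant at the level of R\'{e}nyi divergence of order $\alpha$ rather than pure KL. In that case I would replace the chain rule above by the adaptive composition property of R\'{e}nyi divergence, which is exactly the concentrated-DP/bounded-range composition machinery already invoked in the proof of Theorem~\ref{thm:quality}: if each step satisfies $D_\alpha(\hat{\mathbf{p}}_t\|\mathbf{p}_t)\le \alpha\delta^2/8$ (resp.\ $D_\infty \le \delta$) uniformly over prefixes, then adaptive composition across $n$ steps adds additively, giving $\alpha n\delta^2/8$ (resp.\ $n\delta$), and therefore $\alpha \min\{n\delta, n\delta^2/8\}$ after taking the minimum.

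I do not anticipate a main obstacle — the entire content is supplied by Theorem~\ref{thm:quality}, and the only thing to verify carefully is that the per-token bound is pointwise in $\boldsymbol{h}_t$ (which it is, since the statement of Theorem~\ref{thm:quality} begins with $\forall \boldsymbol{h}$). The sole subtlety is the adaptive nature of the composition on the R\'{e}nyi side: standard R\'{e}nyi composition is stated for product distributions, but since the per-step bound is worst-case over prefixes, the usual reduction (conditioning on $\boldsymbol{y}_{1:t-1}$ and applying the bound pathwise before taking log-expectations) goes through without modification.
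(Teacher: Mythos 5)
Your proposal is correct and follows essentially the same route as the paper's own (one-line) proof: an autoregressive decomposition of the joint distributions combined with the adaptive composition theorem for R\'{e}nyi/max-divergence, invoking the uniform-in-prefix per-token bound of Theorem~\ref{thm:quality} at each step. Your treatment is in fact more explicit than the paper's, and your observation that the stray $\alpha$ in the statement is best read at the level of order-$\alpha$ R\'{e}nyi divergence (with the KL case being $\alpha=1$) correctly identifies the only point of ambiguity.
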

\begin{proof}
The proof follows from the adaptive composition theorem for Renyi-divergence, and max-divergence (from the DP literature) for the autoregressive decomposition of $\hat{\mathbf{p}}(\boldsymbol{y}_{1:n}|\boldsymbol{x}) $ and $\mathbf{p}(\boldsymbol{y}_{1:n}|\boldsymbol{x})$ and then invoke Theorem~\ref{thm:quality} for each factor. 
\end{proof}

\subsection{Robustness / Security guarantees}
In this section, we provide the proof for Theorems \ref{thm:fixed}, \ref{thm:robust_base}, and \ref{thm:quality} to ensure completeness and precision. We begin by restating the theorems and providing the corresponding proofs with necessary modifications.

\begin{theorem}[Robustness to editing (Restatement of Theorem~\ref{thm:fixed}) ] 
Let $\boldsymbol{y} = \left[y_1, \ldots, y_n\right]$ represent the watermarked sequence. 
Suppose the adversary $\mathcal{A}$ follows Definition \ref{def:wm} and outputs a modified text $\boldsymbol{u} = \left[u_1, \ldots, u_m\right]$. Following Equation \ref{eq:z}, we calculate $z$-score $z_{\boldsymbol{y}}$ and $z_{\boldsymbol{u}}$. Assume edit distance between $\boldsymbol{y}$ and $\boldsymbol{u}$ (denoted as $\eta$) satisfies $\eta< n$. Then we have 
$$
z_{\boldsymbol{u}} \geq z_{\boldsymbol{y}} - \max\{ \frac{(1+\gamma/2) \eta}{\sqrt{n}},  \frac{(1-\gamma/2)\eta}{\sqrt{n-\eta}} \}.
% \sum_{t=1}^m \mathbf{1}(u_t \in \text{Green})  \geq \sum_{t=1}^m \mathbf{1}(y_t \in \text{Green}) - \eta(|\boldsymbol{u}|). 
$$
In particular, when $\eta \leq \frac{2\gamma n}{(1+\gamma/2)^2}$, we can drop the second term in the max.
\end{theorem}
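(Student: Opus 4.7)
\textbf{Proof plan for Theorem \ref{thm:fixed}.}

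The plan is to track how each elementary edit operation simultaneously affects the ``excess green count'' $|\boldsymbol{u}|_G - \gamma m$ in the numerator of $z_{\boldsymbol{u}}$ and the length $m$ in its denominator. Because \method uses a \emph{fixed} Green/Red partition (as opposed to the prefix-dependent partitions of \baseline), every edit contributes additively to these two aggregates, which is precisely where the factor-of-two improvement originates. I would enumerate the six elementary edits and record their effect on the pair $(|\boldsymbol{u}|_G - \gamma m,\; m)$:  insertion of a green token gives $(+(1-\gamma),\, +1)$, insertion of a red token gives $(-\gamma,\, +1)$, deletion of a green token gives $(-(1-\gamma),\, -1)$, deletion of a red token gives $(+\gamma,\, -1)$, a green-to-red replacement gives $(-1,\, 0)$, and a red-to-green replacement gives $(+1,\, 0)$. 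Writing $k, l, r$ for the numbers of insertions, deletions, and replacements with $k+l+r \leq \eta$, the adversary's best achievable configuration satisfies
\[
|\boldsymbol{u}|_G - \gamma m \;\geq\; (|\boldsymbol{y}|_G - \gamma n) \;-\; \bigl(\gamma k + (1-\gamma)\, l + r\bigr), \qquad m = n + k - l.
\]

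Second, I would lower-bound $z_{\boldsymbol{u}}$ by optimizing over $(k,l,r)$, using the case split on the sign of $m - n$ that produces the two arguments of the max. In the ``net insertion'' case $m \geq n$, the denominator $\sqrt{m\gamma(1-\gamma)}$ grows, so the worst adversary mixes red insertions (which damage the numerator by $\gamma$ per edit but pay into $m$) with green-to-red replacements (which damage the numerator by $1$ per edit but leave $m$ fixed). In the ``net deletion'' case $m \leq n$, the denominator shrinks below $\sqrt{n\gamma(1-\gamma)}$, down to $\sqrt{(n-\eta)\gamma(1-\gamma)}$ in the extreme. Using the telescoping decomposition
\[
z_{\boldsymbol{y}} - z_{\boldsymbol{u}} \;=\; \frac{(|\boldsymbol{y}|_G-\gamma n)\bigl(\sqrt{m} - \sqrt{n}\bigr)}{\sqrt{nm\,\gamma(1-\gamma)}} \;+\; \frac{(|\boldsymbol{y}|_G-\gamma n) - (|\boldsymbol{u}|_G-\gamma m)}{\sqrt{m\gamma(1-\gamma)}},
\]
together with the tangent-line bounds $\sqrt{n+\eta}-\sqrt{n} \leq \eta/(2\sqrt{n})$ and $\sqrt{n}-\sqrt{n-\eta} \leq \eta/(2\sqrt{n-\eta})$, I would collapse the two contributions into closed-form coefficients $(1+\gamma/2)$ and $(1-\gamma/2)$ respectively.

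Finally, the ``in particular'' claim — that the second term of the max disappears when $\eta \leq 2\gamma n/(1+\gamma/2)^2$ — reduces to an elementary algebraic comparison: the first expression exceeds the second exactly when $1 - \eta/n \geq \bigl((1-\gamma/2)/(1+\gamma/2)\bigr)^2$, which rearranges to the stated threshold.

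The main obstacle I foresee is step 2: the adversary's optimization is over a \emph{ratio} whose numerator and denominator both depend on $(k,l,r)$, so the trade-off between damaging the numerator (best done by replacements) and dilating the denominator (done only by insertions) is non-trivial. I expect the cleanest route is to reparameterize by $s = k-l$ and $r$ with $|s| + r \leq \eta$, note that the ratio is monotone in $r$ along the boundary $|s|+r = \eta$, and verify that the two-variable mixture at the boundary attains the extremum; a direct convexity/tangent argument on $\sqrt{\cdot}$ then produces the claimed $(1\pm\gamma/2)$ factors without needing to solve the full Lagrangian.
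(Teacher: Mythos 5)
Your plan is correct and is, at its core, the same argument as the paper's: split the change in the detection statistic into a numerator (green-count) part and a length-renormalization part, control the latter with a tangent-line bound on $\sqrt{\cdot}$, and case-split on whether the length grew or shrank; your closing algebra for the ``in particular'' clause is exactly the intended one. The two write-ups differ in presentation and in sharpness. The paper works with the bivariate function $f(x,y)=(x-\gamma y)/\sqrt{y}$ and the mean value theorem, feeding in only the crude combinatorial input that each edit changes the green count by at most $1$ and the length by at most $1$; your discrete telescoping identity together with the six-case per-edit bookkeeping ($\gamma$ of numerator damage per insertion, $1-\gamma$ per deletion, $1$ per replacement) is a genuine refinement. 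If you carry it through you will not literally recover the $(1\pm\gamma/2)$ coefficients --- those are artifacts of the paper's coarser accounting --- but rather the stronger bound $\max\{\eta/\sqrt{n},\,(1-\gamma)\eta/\sqrt{n-\eta}\}$ for the un-normalized statistic, which implies the stated one; so do not be alarmed when the tangent-line step fails to ``collapse'' to the advertised constants. A second point in your favor: your first telescoping term $(|\boldsymbol{y}|_G-\gamma n)(\sqrt{m}-\sqrt{n})/\sqrt{nm\,\gamma(1-\gamma)}$ is exactly the $-x/(2y^{3/2})$ piece of $\partial_y f$ that the paper's displayed gradient omits; it does need to be controlled in the net-insertion case (use $|\boldsymbol{y}|_G-\gamma n\le(1-\gamma)n$, so that a red insertion contributes at most $\gamma+(1-\gamma)/2\le 1$ per edit overall), and your plan handles this where the paper is silent. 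One bookkeeping caution: your decomposition carries the factor $\sqrt{\gamma(1-\gamma)}$ in the denominators while the theorem's right-hand side has bare $\sqrt{n}$ and $\sqrt{n-\eta}$, so fix once and for all which normalization of the statistic you are bounding before matching constants.
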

\begin{proof}

Define bivariate function $f(x,y) = \frac{x  - \gamma y}{ \sqrt{y}}$. By Taylor's theorem
$$
f(x - k_x, y - k_y) =  f(x,y) + \begin{bmatrix}
\partial_x f(x - \tilde{k}_x
y - \tilde{k}_y)\\
\partial_y f(x - \tilde{k}_x
y - \tilde{k}_y)
\end{bmatrix}^T\begin{bmatrix}-k_x\\-k_y\end{bmatrix}=  f(x,y)  - \left(\frac{k_x}{\sqrt{y-\tilde{k}_y}} -\frac{\gamma k_y}{ 2\sqrt{y -\tilde{k}_y }}\right)
$$
where $ \tilde{k}_x $ is between $0$ and $k_x$ and $\tilde{k}_y$ is between $0$ and $k_y$.  We also know that $|k_x|\leq k$ and $|k_y| \leq k$.

A lower bound of the above can be obtained by finding an upper bound to 
$$
\frac{k_x}{\sqrt{y-\tilde{k}_y}} - \frac{\gamma k_y}{ 2\sqrt{y -\tilde{k}_y }} = \frac{k_x -\frac{\gamma}{2}k_y }{\sqrt{y-\tilde{k}_y}}
$$
First observe that we can always choose $k_x=k$.  Next we discuss two possibilities of $k_y$. If $k_y$ is negative, then choosing $k_y = -k$ and $\tilde{k}=0$ maximizes the bound, which gives $\frac{(1+\gamma/2)k}{\sqrt{y}}$.  

If $k_y$ is positive, then we should always choose $\tilde{k}_y = k_y$ to maximize the expression, which gives us an upper bound of
$$
\frac{k - \frac{\gamma}{2}k_y}{ \sqrt{y-k_y}} = \frac{k + \frac{\gamma}{2}(y-k_y) - \frac{\gamma}{2}y}{\sqrt{y-k_y}} = \frac{k-\frac{\gamma}{2}y}{\sqrt{y-k_y}} + \frac{\gamma \sqrt{y-k_y}}{2}.
$$
We will discuss two cases again, the first case is when $k-\gamma y/2 \leq 0$. In this case, the function $g(u) = a/u + bu$ with $a\leq 0$ has a derivative of $-a/u^2 + b \geq 0$, thus $g$ is monotonically increasing.  Thus we should choose $k_y = 0$.   The second case is when $k-\gamma y/2 > 0$, in this case the $a>0$ in the above $g(u)$ and $g(u)$ is convex, thus $\max_{u_{\min} \leq u\leq u_{\max}} g(u) = \max\{g(u_{\max}), g(u_{\min})\}$. Thus we should just compare the two cases when $k_y=0$ and $k_y = k$, i.e.,
$\max\{ \frac{k}{\sqrt{y}}, \frac{(1-\gamma/2)k}{\sqrt{y-k}} \}$.

Collect everything together, we get an upper bound o $$\max\{ \frac{(1+\gamma/2)k}{\sqrt{y}}, \frac{k}{\sqrt{y}}, \frac{(1-\gamma/2)k}{\sqrt{y-k}}\} =  \max\left\{ \frac{(1+\gamma/2)k}{\sqrt{y}}, \frac{(1-\gamma/2)k}{\sqrt{y-k}}\right\}$$
i.e.,
$$
f(x - k_x, y - k_y) - f(x,y)  \geq -  \max\left\{ \frac{(1+\gamma/2)k}{\sqrt{y}}, \frac{(1-\gamma/2)k}{\sqrt{y-k}}\right\}.
$$

Now notice that our $z$-score has the same form as the $f(x,y)$ function. We can take $y = n$ and $x =|\boldsymbol{y}|_G$. Instantiate $k$ be the maximum number of edits $\eta$.
Observe that given that the adversary has a bounded edit distance, each operation of ``insertion'', ``deletion'', or ``edit'' can, at most, alter one token from the green list to the red list. They also can only alter the length by the number of edits. The above result translates into
$$
z_{\boldsymbol{u}} \geq  z_{\boldsymbol{y}}  - \max\{ \frac{(1+\gamma/2) \eta}{\sqrt{n}},  \frac{(1-\gamma/2)\eta}{\sqrt{n-\eta}} \},
$$
where $\eta$ denotes the edit distance between $y$ and $u$.
% $$
% z_{\boldsymbol{y}} = \frac{|\boldsymbol{y}|_G-\gamma n}{\sqrt{n \gamma(1-\gamma)}}, 
% z_{\boldsymbol{u}} = \frac{|\boldsymbol{u}|_G-\gamma m}{\sqrt{m \gamma(1-\gamma)}}
% $$
% $$
% n-\eta(n) \leq m \leq n + \eta(n)
% $$
% $$
% |\boldsymbol{y}|_G - \eta(n) \leq |\boldsymbol{u}|_G \leq |\boldsymbol{y}|_G + \eta(n)
% $$
% \begin{align*}
% z_{\boldsymbol{y}} - z_{\boldsymbol{u}} &= \frac{\sqrt{m}|\boldsymbol{y}|_G - \sqrt{n}|\boldsymbol{u}|_G - \gamma(n\sqrt{m} - m\sqrt{n})}{\sqrt{mn}\sqrt{\gamma(1-\gamma)}} \\
% & \leq \frac{(\sqrt{n + \eta(n)} - \sqrt{n})|\boldsymbol{y}|_G + \eta(n)\sqrt{n} - \gamma(n\sqrt{n+\eta(n)} - (n+\eta(n))\sqrt{n} )}{\sqrt{n(n+\eta(n))}\sqrt{\gamma(1-\gamma)}}
% \end{align*}
\end{proof}

The robustness theorem above implies the security guarantees as we discussed in Corollary~\ref{cor:security_guarantee}.

\subsection{No false positive (Type I error guarantees)}

\begin{theorem}[No false positives 
%(Restatement of Theorem~\ref{thm:no_false_positive}) 
]\label{thm:no_false_positive}
Consider $\boldsymbol{y} = \boldsymbol{y}_{1:n}$ as any \emph{fixed} suspect text. Let $N =: |\cV|$ and $G\subset |\cV|$ satisfying $|G| = \gamma N$. $G$ is selected through Algorithm~\ref{alg:wm}, using a uniform random choice. Let $|\boldsymbol{y}|_G$ denote the number of tokens in $G$ and $z_{\boldsymbol{y}}:= \frac{|\boldsymbol{y}|_G - \gamma n}{\sqrt{n \gamma(1-\gamma)}}$ as in Algorithm~\ref{alg:detect}. Then the following statements hold true:
\begin{enumerate}[leftmargin=*]
    \item Assume $n\geq 1$, then $$\E[|\boldsymbol{y}|_G | \boldsymbol{y}] = \gamma n \quad \text{ and }\quad \E[ z_{\boldsymbol{y}} | \boldsymbol{y}] = 0.$$
    
    \item Define $C_{\max}(\boldsymbol{y}) :=\max_{i\in[N]} \sum_{j=1}^n \mathbf{1}(y_j= i)$ and $V(\boldsymbol{y}) := \frac{1}{n}\sum_{i=1}^N (\sum_{j=1}^n \mathbf{1}(y_j= i))^2$, then with probability $1-\alpha$ (over only the randomness of $G$), 
    $$
 \P\left[|\boldsymbol{y}|_G \geq \gamma n + \sqrt{64\gamma n V\log(9/\alpha)} + 16C_{\max} \log(9/\alpha) \middle| \boldsymbol{y}\right]\leq \alpha
    $$
    or equivalently (when $n\geq 1$)
    $$
        \P\left[z_{\boldsymbol{y}} \geq  \sqrt{\frac{64 V \log(9/\alpha) }{ 1-\gamma}} + \frac{16C_{\max} \log(9/\alpha)}{\sqrt{n \gamma(1-\gamma)}} \middle| \boldsymbol{y}\right] \leq \alpha.
    $$
\end{enumerate}
\end{theorem}
\begin{proof}
To prove the first statement, observe that any fixed token has a probability $\gamma$ to be included in the green list, thus by the linearity of the expectation and the independence of $\boldsymbol{y}$ in $G$.
$$
\E[|\boldsymbol{y}|_G | \boldsymbol{y}] = \sum_{i=1}^n\E[ \mathbf{1}(y_i \in G) | \boldsymbol{y}] = \sum_{i=1}^n \gamma = \gamma n.
$$

Next, we will prove the second statement by applying Lemma~\ref{lem:perm_bernstein} to obtain the result stated in the third statement. 
Let $a_{i,j} = \mathbf{1}(j\leq \gamma N)  \sum_{\ell=1}^n \mathbf{1}(y_\ell = i) $. By our assumption $0 \leq a_{i,j}\leq C_{\max}$ for all $i,j$.
Observe that $\sum_{i=1}^N a_{i,\Pi_N(i)}$ is \emph{identically distributed} with $|\boldsymbol{y}|_G$.

By Lemma~\ref{lem:perm_bernstein} with $t=16\log(8e^{1/16}/\alpha)$, we get that with probability $1-\alpha$,
    $$ 
    \left||\boldsymbol{y}|_G - \gamma n\right|< 2\sqrt{\frac{16\log(9/\alpha)}{N} N\gamma n V} + 16C_{\max} \log(9/\alpha)  
    $$
    where we used that $8e^{1/16}\leq 9$ and the fact that only $\gamma N$ columns of the $a_{i,j}$ matrix $a_{i,j}$ is nonzero, and for each non-zero column L2-norm of the column is bounded by $\sqrt{n V}$ by our definition of $V$. The result for the $z$-score follows trivially.
\end{proof}

\begin{remark}[Wide applicability]
Note that the theorem does not impose assumptions on how $\boldsymbol{y}$ is generated. It covers any procedure (including human generation) that produces $\boldsymbol{y}$ in a manner \emph{independently} of the secret partition $G$. In cases where $\boldsymbol{y}$ is generated by a language model, it could be the output of greedy search from $\mathbf{p}(y_t|\boldsymbol{x},\boldsymbol{y}_{1:t-1})$, nucleus sampling, beam search, or any other decoding methods.
\end{remark}
\begin{remark}[Diversity parameters]
    The $V$ and $C_{\max}$ parameters in Theorem~\ref{thm:no_false_positive} measure the \emph{diversity} of the suspect text $\boldsymbol{y}$ and are necessary for the high-probability bound. As an example, if the prompt says ``\texttt{Repeat ``Goal'' for a hundred thousand times like a soccer commentator.}'' Then the resulting generated sequence will  be ``\texttt{Goal goal goal ...}'', and has either $n$ green tokens or $0$ green tokens. No meaningful Type I error bound can be obtained.
\end{remark}

\begin{remark}[Controlling false positive rate]
The theorem implies that  if we choose $\tau > \sqrt{\frac{64 V \log(9/\alpha) }{1-\gamma}} + \frac{16C_{\max} \log(9/\alpha)}{\sqrt{n \gamma(1-\gamma)}}$, then the false-positive rate is smaller than $\alpha$. Note that $V$ and $C_{\max}$ can be computed directly from $\boldsymbol{y}$, allowing us to choose an input-dependent $\tau$ as a function of $V, C_{\max}$ that achieves a $\alpha$-Type I error guarantee with a fixed $\alpha$ for all inputs. In particular, the Type I error $\alpha$ decreases exponentially as we increase the threshold $\tau$.
\end{remark}

\subsection{Only true detection (Type II error guarantees)}
For bounding the Type II error, i.e., false negative rates, we will work with our proposed method that generates $\boldsymbol{y}$ from the language model, i.e., sampling from the watermarked distribution $\hat{\mathbf{p}}$ recursively one token at a time. 

Let's first recall a few notations.  $\boldsymbol{h}$ is the input to the language model at step $t$, i.e., $\boldsymbol{h} = [\boldsymbol{x}, \boldsymbol{y}_{1:t-1}]$. Let $\delta$ represent the watermark strength from Equation \ref{eq:wm}.  The green list $G \subset [N]$ is a random index set of the vocabulary of size $\gamma N$. The watermarked probability distribution $\hat{\mathbf{p}}_t = \hat{\mathbf{p}}_t(\cdot|\boldsymbol{h})$ at time step $t$. The process of generating the sentence $y_1, y_2, \ldots, y_n$ involves recursively sampling from $\hat{\mathbf{p}}_t$, which we refer to as a ``roll-out'' procedure.

We need to make a few assumptions about   the language model's probability distribution $\mathbf{p}$ and the prompt $\boldsymbol{x}$. We will first state them and then explain why these are natural and arguably needed for the Type II error to be small.

\subsubsection{On-average high entropy assumption}\label{sec:highentropy}
The first such assumption requires the probability of the roll-out to be ``sufficiently diverse'' on average.  We will introduce the notation $\|\mathbf{p}\|_2 := \sqrt{\sum_{i=1}^N \mathbf{p}[i]^2}$.  

\begin{assumption}[On-average-high-entropy]\label{def:high_entropy}
We say a language model's probability distribution $\mathbf{p}$ with a prompt $\boldsymbol{x}$ satisfies $\xi$-on-average-high-entropy if
$$\frac{1}{n}\sum_{t=1}^n\E_{\boldsymbol{y}_{1:t-1}\sim \mathbf{p}(\cdot|\boldsymbol{x})}[\|\mathbf{p}_t\|^2] \leq \xi.$$
\end{assumption}
This assumption requires the distribution of the roll-out to be sufficiently diffuse on average (either in expectation or with high probability). 

The purpose of these assumptions is to rule out the cases when $\boldsymbol{y}_{1:n}$ is almost deterministic under $\mathbf{p}$ and perturbing the logits by $\delta$ does not change the distribution much at all.

For example, if the prompt writes 
\begin{center}
    ``\texttt{Generate the English alphabet in capital letters for 200 times please.}''
\end{center} 
Then the language model would generate 
\begin{center}
``\texttt{ABC...XYZ, ABC...XYZ, ...}''.
\end{center}
Despite that the generated sequence  is very long, i.e., $n$ is as large as $5,200$, the added watermark does not change the distribution very much at all. To see this, if $\mathbf{p}(y_3 = \text{``C''} | \boldsymbol{x},\boldsymbol{h}) \geq 1 - \epsilon$ for a tiny $\epsilon$, and then by our quality guarantee, $\hat{\mathbf{p}}(y_3 = \text{``C'' }| \boldsymbol{x},\boldsymbol{h}) \geq 1 - \epsilon e^{\delta}$.

Quantitatively, for nearly uniform $\mathbf{p}_t$, $\xi=O(1/N)$, if $\mathbf{p}_t$ concentrates on a single token for all $t$, e.g., when a football commentator exclaims ``\texttt{Goal goal goal goal ....}'', then we cannot obtain a better bound than the trivial $\xi\leq 1$.  In the alphabet example above $\xi \leq 1/26$.

% To see why this is needed, consider the example where we prompt the language model with 
% \begin{center}
%     $x=$``\textsf{Please recite the Declaration of Independence.}'', 
% \end{center}
% then a good language model will generate this long document verbatim and our watermark has no effect at all. 
 
\noindent\textbf{Why is it called entropy?} Assumption~\ref{def:high_entropy} is related to the ``high-entropy'' assumption in \citet{kirchenbauer2023watermark} but for a slightly different kind of entropy. In a more formal sense, the quantity $\|\mathbf{p}_t\|^2$ is connected to the Tsallis entropy of order 2, defined as
$
S_2(\mathbf{p}_t) = k_B(1-\|\mathbf{p}_t\|^2)
$
where $k_B$ is known as the Boltzmann constant. Our assumption requires the expected Tsallis entropy of the conditional distribution $\mathbf{p}_t$ over the roll-out of $\mathbf{p}$ to be larger than $k_B(1-\xi)$ on average among $t=1,...,n$.

For a high-probability result, we also need a stronger version.
\begin{assumption}[On-average-high-entropy (high probability)]\label{def:highprob_entropy}
We say that a language model's probability distribution $\mathbf{p}$ with a prompt $\boldsymbol{x}$ satisfies $(\xi,\beta)$-on-average-high-entropy if with probability at least $1-\beta$ over the generated sequence $\boldsymbol{y}_{1:n}$,
$$\frac{1}{n}\max\left\{\left\|\sum_{t=1}^n \mathbf{p}_t\right\|,\sum_{t=1}^n\left\|\mathbf{p}_t\right\|^2,\left\|\sum_{t=1}^n \mathbf{p}_t\right\|_\infty,\sum_{t=1}^n\left\|\mathbf{p}_t\right\|_\infty^2\right\} \leq \xi.$$
\end{assumption}
The behavior is similar to that of the expectation version of the assumption. When $\mathbf{p}_t$ is nearly uniform, $\mathbf{p}_t[i]=O(1/N)$, then $\xi = O(1/\sqrt{N})$. When $\mathbf{p}_t$ is supported only on one token, then $\xi = 1$. In practice, $\xi$ is a small constant. As we will present in the main theorem, as long as $\xi\asymp \delta$, the number of green list tokens is guaranteed to grow faster $\gamma n$ as $n$ gets larger.

One may also ask whether it is necessary to make entropy assumptions on the conditional probabilities instead of the marginal probabilities induced by $\mathbf{p}$ or $\hat{\mathbf{p}}$, but this is unfortunately not sufficient as illustrated in the following example. 
\begin{example}[Marginal high entropy is insufficient]
    Let the prompt $\boldsymbol{x}$ be 
    \begin{center}
        ``\texttt{Generate the first token uniformly at random, then repeat the token you generated for the remaining $n-1$ tokens}''.
    \end{center}  
    In this case, a good language model that follows the instruction will have $\P_{\mathbf{p}}(y_t = i) = 1/N$ for all $i$ and all $t=1,...,n$ marginally, which implies that the entropy is the maximum and for any green list $G$, $\P_{\mathbf{p}}(y_t \in G) = \gamma$.  On the other hand, with probability $\gamma$, $|\boldsymbol{y}|_G = n$ and with probability $1-\gamma$, $|\boldsymbol{y}|_G = 0$.  There isn't any concentration around $\gamma n$ possible.  Moreover, check that if we apply watermark, then $\P_{\hat{\mathbf{p}}}(y_t \in G) = \frac{\gamma e^\delta}{\gamma e^\delta + (1-\gamma)}$
 for all $t$ and all $G$. This changes the probability of seeing  $|\boldsymbol{y}|_G = n$ slightly but the two world remains indistinguishable. 
 \end{example}

\subsubsection{A ``homophily'' assumption}\label{sec:homophily}
The second assumption that we need to make is called ``homophily'', which says that increasing the probability of a group of tokens by adding the watermarks will not decrease the probability of generating the same group of tokens in the future as the language model rolls out.
\begin{assumption}[``Homophily'']\label{def:homophily}
    We say a language model's probability distribution  $\mathbf{p}$ and prompt $\boldsymbol{x}$ satisfy ``homophily'' if for any $G$, the corresponding watermarked $\hat{\mathbf{p}}$ satisfies that
    $$\E_{\boldsymbol{h}\sim\hat{\mathbf{p}}(\cdot|\boldsymbol{x}) }\left[  \P_{y\sim \hat{\mathbf{p}}(\cdot|\boldsymbol{h},\boldsymbol{x})}(y\in G) \right] \geq \E_{\boldsymbol{h}\sim \mathbf{p}(\cdot|\boldsymbol{x}) }\left[  \P_{y\sim \hat{\mathbf{p}}(\cdot|\boldsymbol{h},\boldsymbol{x})}(y\in G) \right]$$
    where $\boldsymbol{h}$ denotes the generated sequence before $y$.
\end{assumption}
This assumption says that by increasing the probability of tokens in $G$, the induced distribution of the prefix $\boldsymbol{h}$ cannot counter-intuitively reduce the probability of tokens in $G$ in the future on average.

The assumption is not unreasonable, because we expect a language model to be more likely to refer to text it has generated in the prefix than those that did not appear in the prefix.

% This assumption is needed to rule out the unnatural situation where increasing the green list tokens initially ends up reducing the number of green list tokens in the long run.  We gave an example in Section~\ref{sec:method} to demonstrate the type of prompts that may lead to such a counter-intuitive sequence distribution. 

% Observe the following example that chooses the prompt
% \begin{center}
%     $\boldsymbol{x}=$``\texttt{Then write a short poem about it without naming this color at all.
% }''
% \end{center}

% The generated text from a commercial language model is
% \begin{center}
% \texttt{``Color choice: green\\
% Emerald whispers in the meadow's sway,\\
% Life's verdant rhythm in ceaseless play.\\
% It cradles the world in a leafy embrace,\\
% A silent serenade to nature's grace.''}
% \end{center}
% Notice that if the token ``green''$\in G$, it increases the probability of the language model generating ``green'' at the beginning. But then the remainder of the generated text will not contain ``green'' as instructed by the prompt. This may actually decrease the expected number of times Token  ``green'' appear.

%\textbf{Homophily.} We also require a ``homophily'' assumption about the distribution induced by the state-transitions of the language model $\cM$ which essentially implies that if we roll in with $\hat{\cM}$ to step $t$ instead of $\cM$, then it will not make the probabilities of seeing the green-list words less likely.

This ``homophily'' assumption is needed to rule out the unnatural situation where increasing the green list tokens initially ends up reducing the number of green list tokens in the long run. 
To illustrate this, consider the following example utilizing the prompt:
\begin{center}
    $\boldsymbol{x}=$ ``\texttt{Randomly select a color, state what it is. Then write a short poem about it without naming this color at all.}''
\end{center}

The generated text from a commercial language model is
\begin{center}
``\texttt{Color choice: green.
Emerald whispers in the meadow's sway, 
Life's verdant rhythm in ceaseless play.
It cradles the world in a leafy embrace,
A silent serenade to nature's grace.}''
\end{center}
Notice that if the token ``\texttt{green}'' $\in G$, it increases the probability of the language model generating ``\texttt{green}'' at the beginning. However, regardless of the text's length, the subsequent portion of the generated text will not contain the word ``\texttt{green}'', as instructed by the prompt. This decreases the expected number of times the token  ``\texttt{green}'' appears.

To hammer it home, consider the following more quantitative construction of that works no matter which random green list $G$ realizes.
\begin{center}
    $\boldsymbol{x}=$ ``\texttt{Choose the first k token by random sampling without replacement. Then sample from all but the token you choose uniformly for n-k rounds.}''
\end{center}
It's easy to calculate that the expected number of times any token appears in a language model that perfectly follows the instruction will be $n/N$. 
However, the watermarked language model, let's say we use a very large $\delta$ such that the first $k$ tokens are from the green list, then the expected number of times a green-list token appears is $\frac{k}{\gamma N} + \frac{\gamma N - k}{\gamma N} \frac{(n-k) (\gamma N - k)}{N-k}$ which is bounded by $1$ if $k=\gamma N$ instead of growing linearly in $n$ as in the original language model.

% \pnote{is the following below an assumption?}
% \yw{Yes. It is an assumption.  Let me summarize how it is used so maybe you can help thinking about whether we can get around it and assume only C.12.   So basically I started off wanting to work with the following decomposition
%     \begin{align}
%             |\boldsymbol{y}|_G =& |\boldsymbol{y}|_G - \sum_t \hat{p}_t(G|\hat{y}_{1:t-1}) \label{eq:term1_2}\\
%             &+ \sum_t \hat{p}_t(G|\hat{y}_{1:t-1}) -  \E[\sum_t \hat{p}_t(G|\hat{y}_{1:t-1})] \label{eq:term2_2}\\
%             &+ \E[\sum_t \hat{p}_t(G|\hat{y}_{1:t-1})]  - \sum_t \E[\hat{p}_t(G|\boldsymbol{y}_{1:t-1})] \label{eq:term3_2}\\
%             &+  \sum_t \E[\hat{p}_t(G|\boldsymbol{y}_{1:t-1})],\label{eq:term4_2}
%     \end{align}
%     Where the second line can be obtained by some kind of concentration but it is somewhat difficult because conditioning on $y$ the distribution of $G$ got changed in some complicated way and we can no longer apply Lemma~\ref{lem:perm_bernstein} and Lemma~\ref{lem:perm_mcdiarmid} (This is what I was asking you about on slack).  Then I end up deciding to make a stronger assumption as follows, and thus the alternative decomposition in the proof later. It will be amazing if we can do concentration for \eqref{eq:term2_2} as I originally envisioned and scrap the assumption below.
% }
To obtain a concentration bound, we also need a stronger version of the homophily assumption as follows.
\begin{assumption}[High probability on-average homophily]\label{def:highprob_homophilly}
There exists a coupling -- a joint distribution of $\boldsymbol{y}_{1:n}$ and $\hat{\boldsymbol{y}}_{1:n}$ where marginally $\boldsymbol{y}_{1:n}\sim \mathbf{p}(\cdot|\boldsymbol{x})$, $\hat{\boldsymbol{y}}_{1:n}\sim \hat{\mathbf{p}}(\cdot|\boldsymbol{x})$ -- such that for any $G$, with probability $1-\beta$ over the joint distribution, 
$$
\frac{1}{n}\sum_{t=1}^n \hat{\mathbf{p}}_t(G | \hat{\boldsymbol{y}}_{1:t-1})) \geq \frac{1}{n} \sum_{t=1}^n \hat{\mathbf{p}}_t(G | \boldsymbol{y}_{1:t-1})).
$$
\end{assumption}
The reason for defining the existence of a coupling is for technical reasons, but the purpose of the assumption is identical to that of the in-expectation version.

\subsection{Theorem statement on ``Only true detection''}

Now we are ready to state the main theorem.

\begin{theorem}[Only true detection]\label{thm:only_true_detection} 
 For a fixed language model $\cM$ and a prompt $\boldsymbol{x}$. The sentence $\boldsymbol{y}_{1:n}$ generated from $\hat{\cM}(\boldsymbol{x})$ where $\hat{\cM}$ is an output of our watermarking scheme $\mrk_{\delta,\gamma}(\cM)$ with parameter $\delta,\gamma$. Then the following statements are true.
    \begin{enumerate}[leftmargin=*]
        \item Assume homophily (Assumption~\ref{def:homophily}), then $$\E[|\boldsymbol{y}|_G] \geq \frac{n\gamma e^\delta}{1+(e^\delta-1)\gamma} - \gamma(1-\gamma)e^\delta\sum_{t=1}^n\E_{\boldsymbol{y}_{1:t-1}\sim \mathbf{p}(\cdot |\boldsymbol{x})}\|\mathbf{p}_t\|^2. $$
    %$$\E\left[\sum_{t=1}^n \P_{y_t\sim\hat{p}(\cdot|x,y_{1:t-1})} \left[y_t\in G\right]  \right] \geq n\gamma \left(\frac{e^\delta}{1+(e^\delta-1)\gamma} - \frac{(1-\gamma)e^\delta}{n}\sum_{t=1}^n\E_{y_{1:t-1}\sim p(\cdot |x)}\sum_{i=1}^Np_t[i]^2\right). $$
    In particular, if Assumption~\ref{def:high_entropy} condition is true with parameter $\xi\leq  (1-\kappa)\frac{e^\delta-1}{(1 + (e^\delta - 1) \gamma)e^\delta}$ for a parameter $0 < \kappa <1 $, then 
    $$\E[|\boldsymbol{y}|_G] \geq n\gamma \left( 1+\kappa\frac{(e^\delta-1)(1-\gamma)}{1 + (e^\delta -1)\gamma} \right) \text{ or equivalently } \E[z_{\boldsymbol{y}}] \geq \frac{\kappa(e^\delta-1)\sqrt{n \gamma(1-\gamma)}}{1 + (e^\delta-1)\gamma}.$$
    \item Assume high-probability version of homophily (Assumption~\ref{def:highprob_homophilly}). There exists a parameter $C_{\delta,\gamma}$ that depends only $\delta,\gamma$ such that with probability at least $1 - \beta$ for any $\beta>0$ (over both $G$ and $\boldsymbol{y}\sim \hat{\mathbf{p}}(\cdot|\boldsymbol{x},G)$ ),
\begin{align*}
\|\boldsymbol{y}\|_G \geq& \frac{n\gamma e^\delta}{1 + (e^\delta-1)\gamma} - \sqrt{2n\log(6/\beta)}\\
&- C_{\delta,\gamma}\log^2\frac{27(n+1)}{\beta}\left(\|\sum_{t=1}^n \mathbf{p}_t\| + \sum_{t=1}^n\|\mathbf{p}_t\|^2 + \|\sum_{t=1}^n \mathbf{p}_t\|_\infty +\sum_{t=1}^n\|\mathbf{p}_t\|_\infty^2\right).
\end{align*}
    In particular, if for a parameter $0<\kappa <1$,
   \begin{equation}
       n\geq \frac{8\log(6/\beta)(1-\gamma + e^\delta \gamma)^2}{(1-\kappa)^2 \gamma^2(1-\gamma)^2(e^\delta-1)^2} =\tilde{\Omega}(1/\delta^2)
   \end{equation} 
   and 
    Assumption~\ref{def:highprob_entropy} condition is true with parameter 
    $(\xi,\beta/3)$ where 
    \begin{equation}
            \xi \leq  \frac{(1-\kappa) \gamma(1-\gamma)(e^\delta-1)}{8C_{\delta,\gamma} (1-\gamma + e^\delta \gamma) \log^2\left(\frac{27(n+1)}{\beta}\right)} = \tilde{O}(\delta),
    \end{equation}
    then 
    $$
    \P\left[ \|\boldsymbol{y}\|_G < n\gamma (1 + \kappa\frac{(e^\delta-1)(1-\gamma)}{1-\gamma + \gamma e^\delta})\right] = \P\left[ z_{\boldsymbol{y}}< \frac{\kappa(e^\delta-1)\sqrt{n \gamma(1-\gamma)}}{1 + (e^\delta-1)\gamma}\right] \leq \beta.
    $$
    % or equivalently
    % $$
    % \P\left[ z_{\boldsymbol{y}}< \frac{\kappa(e^\delta-1)\sqrt{n \gamma(1-\gamma)}}{1 + (e^\delta-1)\gamma}\right]\leq \beta.
    % $$
    %for any threshold $\tau$ chosen to be $o(n)$.
    \end{enumerate}
\end{theorem}

\begin{remark}[Exponentially small Type I and Type II error guarantees]
    Recall that according to Theorem~\ref{thm:no_false_positive}, in order to have a false positive rate controlled at level $\alpha$, we need to set the threshold $\tau \gtrsim \sqrt{\log(1/\alpha)}$ for sufficiently high-entropy sequences. Theorem~\ref{thm:only_true_detection} says that if we want the false negative rate to be smaller than $\beta$, we only need the threshold $\tau \lesssim \kappa\delta n$ under similar (slightly different) high-entropy sequences for $n \gtrsim \log(1/\beta)/\delta^2$. Observe that there is a wide range of valid choices of $\tau$ for us to have a detection algorithm that does not make Type I or Type II error with high probability. These observations together suggest that we can afford to choose $\delta \asymp 1/\sqrt{n}$ if the sequence is sufficiently high-entropy.
    %and as long as the (sufficiently high-entropy)  sequence length $n$ is substantially longer than $\log(1/\beta)^2/\delta^2 + \log(1/\alpha)$ then the detection algorithm does not make Type I or Type II error with high probability.
\end{remark}
\begin{remark}[Information-theoretic optimality]\label{rmk:optimality}
    The sample complexity of $n\gtrsim 1/\delta^2$ is information-theoretically optimal (up to a logarithmic factor) in $\delta$  because, our accuracy guarantee (together with the composition theorem) indicates that the KL-divergence between a sequence of length $n$ generated from $\mathbf{p}$ and that generated from $\hat{\mathbf{p}}$ is $n\delta^2$ indistinguishable, i.e., $n > 1/\delta^2$ for any classifier --- even the uniform most-powerful Neyman-Pearson likelihood-ratio test (which requires additional information, e.g., $\boldsymbol{x}$ and $\mathbf{p}$ which we do not have) --- to make no mistakes with a constant probability.  
\end{remark}

\subsection{Proof of Theorem~\ref{thm:only_true_detection}}
In the false negative error cases, $\boldsymbol{y}$ is drawn from the watermarked language model $\hat{\cM}$. To be explicit, let us write $\boldsymbol{y}  = [\hat{y}_1,...,\hat{y}_n] = \hat{\boldsymbol{y}}_{1:n}$. Now let's also define a hypothetical (possibly coupled)  sequence $\boldsymbol{y}_{1:n}$ which is drawn from the original (un-watermarked) language model $\cM$. 

For convenience, we define the following shorthand $\mathbf{p}(G):= \P_{y\sim \mathbf{p}}[y\in G].$ for a probability mass function $\mathbf{p}$ defined on the vocabulary $\cV$. Specifically, $\hat{\mathbf{p}}_t(G | \hat{\boldsymbol{y}}_{1:t-1})$ means $
    \P_{y\sim \hat{\mathbf{p}}_t(\cdot|\boldsymbol{x},\hat{\boldsymbol{y}}_{1:t-1})}[y\in G],
    $
    parameterized by a fixed green list $G$. Similarly, $\mathbf{p}_t(G | \boldsymbol{y}_{1:t-1})$ denotes $
    \P_{y\sim \mathbf{p}_t(\cdot|\boldsymbol{x},\boldsymbol{y}_{1:t-1})}[y\in G].
    $

    The proof of Theorem~\ref{thm:only_true_detection} considers the following decomposition 
    %\pnote{I think it is important to be clear about the quantifiers below. For instance, bullet 1 could be true with good prob. over $\hat{y}_{1:t-1}$ and bullet 2 could be true with good prob. over $\hat{y}_{1:t-1}$ but at the same time we don't have any guarantees over $|\boldsymbol{y}|$. One way to prove this is by defining a good set of roll-outs, call it $S$, and then arguing that (a) probability that a roll-out comes from $S$ is high and, (b) as long as the roll-out comes from $S$ then bullets 1,2,3 and 4 is true. }\yw{Bullet 1 is kinda interesting.  Both $|y|_G$  and $\hat{p}_t(G)$ are random.  We are bounding their differences using Azuma inequality. I think there might be a misunderstanding on Bullet 1 from what you wrote above. }
    \begin{align}
            |\boldsymbol{y}|_G =& |\boldsymbol{y}|_G - \sum_t \hat{\mathbf{p}}_t(G|\hat{\boldsymbol{y}}_{1:t-1}) \label{eq:term1}\\
            &+ \sum_t \hat{\mathbf{p}}_t(G|\hat{\boldsymbol{y}}_{1:t-1}) -  \sum_t \hat{\mathbf{p}}_t(G|\boldsymbol{y}_{1:t-1}) \label{eq:term2}\\
            &+ \sum_t \hat{\mathbf{p}}_t(G|\boldsymbol{y}_{1:t-1}) \label{eq:term3}
            %- \sum_t \E[\hat{p}_t(G|y_{1:t-1})] \\
            %&+  \sum_t \E[\hat{p}_t(G|y_{1:t-1})],\label{eq:term4}
    \end{align}
    steps to prove a lower bound to each of the three terms. We will start with the high probability bound (the second statement in Theorem~\ref{thm:only_true_detection}) then deal with the expectation.

\subsubsection{Many green list tokens with high probability}
To obtain a high-probability lower bound, it requires us to obtain concentration for each of the three terms. Specifically,
\begin{enumerate}[leftmargin=*]
        \item To bound Term \eqref{eq:term1}, we use Lemma~\ref{lemma:apply_azuma} which invokes Martingale concentration over the randomness in $\boldsymbol{y}$ to show $|\boldsymbol{y}|_G$ is close to $\sum_t \hat{\mathbf{p}}_t(G | \hat{\boldsymbol{y}}_{1:t-1})$.
        \item We will show Term \eqref{eq:term2} is non-negative with high probability by using the homophily assumption  (Assumption~\ref{def:highprob_homophilly}). This allows us to study the roll-out $\hat{\boldsymbol{y}}_{1:t-1}$ under $\hat{\cM}(\boldsymbol{x})$ (or $\hat{\mathbf{p}}$) by studying a hypothetical alternative roll-out $\boldsymbol{y}_{1:t-1}$ sampled under $\cM(\boldsymbol{x})$ (or $\mathbf{p}$).
        \item  Then we control Term \eqref{eq:term3} by first Taylor expanding it into quantities involving $\mathbf{p}_t(G|\boldsymbol{y}_{1:t-1})$ instead of $\hat{\mathbf{p}}(G|\boldsymbol{y}_{1:t-1})$, then apply concentration inequalities for each expanded terms over the randomness of $G$ (while fixing $\boldsymbol{y}_{1:t-1}$) to obtain a high probability lower bound. Proposition~\ref{prop:large_num_of_green_highprob} gives the results.
        %\item Term~\eqref{eq:term4} can be shown to be $n\gamma(1+O(\delta))$ (for constant $\gamma$ and small $\delta$) under an on-average high-entropy assumption. 
        % \item As $n$ gets larger (by a logarithmic factor), we can subsume all residual terms inside a small linear (in $n$) factor.
    \end{enumerate}   

We start by tackling \eqref{eq:term1} via Martingale concentration.
\begin{lemma}\label{lemma:apply_azuma}
For any green list $G$ and prompt $\boldsymbol{x}$.
$$
\E\left[|\boldsymbol{y}|_G - \sum_{t=1}^n\P_{y_t\sim\hat{\mathbf{p}}(\cdot|\boldsymbol{x}, \boldsymbol{y}_{1:t-1})}\left[y_t\in G\right] \right] = 0.
$$
Moreover, with probability at least $1-\beta$ over the roll-out
$$
|\boldsymbol{y}|_G \geq \sum_{t=1}^n\P_{y_t\sim\hat{\mathbf{p}}(\cdot|\boldsymbol{x}, \boldsymbol{y}_{1:t-1})}\left[y_t\in G\right] - \sqrt{2n\log(2/\beta)}.
$$
\end{lemma}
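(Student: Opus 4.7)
The plan is to recognize this as a standard martingale concentration statement and apply the Azuma--Hoeffding inequality. Define the filtration $\mathcal{F}_t = \sigma(\boldsymbol{y}_{1:t})$ generated by the first $t$ sampled tokens, and introduce
\[
Z_t \;:=\; \mathbf{1}(y_t \in G) \;-\; \P_{y_t\sim \hat{\mathbf{p}}(\cdot\mid \boldsymbol{x}, \boldsymbol{y}_{1:t-1})}[y_t \in G].
\]
Since $y_t$ is sampled from $\hat{\mathbf{p}}(\cdot\mid \boldsymbol{x}, \boldsymbol{y}_{1:t-1})$ given $\mathcal{F}_{t-1}$, we have $\E[Z_t \mid \mathcal{F}_{t-1}] = 0$, so $\{Z_t\}_{t=1}^n$ is a martingale difference sequence. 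Furthermore, both $\mathbf{1}(y_t \in G)$ and the conditional probability lie in $[0,1]$, so $|Z_t| \le 1$ almost surely.

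For the first (expectation) claim, note that $\sum_{t=1}^n Z_t = |\boldsymbol{y}|_G - \sum_{t=1}^n \hat{\mathbf{p}}_t(G\mid \boldsymbol{y}_{1:t-1})$, so the tower property together with $\E[Z_t \mid \mathcal{F}_{t-1}] = 0$ immediately yields $\E\bigl[\sum_{t=1}^n Z_t\bigr] = 0$. For the second (high-probability) claim, I apply the one-sided Azuma--Hoeffding inequality to the martingale $S_n = \sum_{t=1}^n Z_t$ with bounded differences $|Z_t| \le 1$, obtaining
\[
\P\bigl[S_n \le -u\bigr] \;\le\; \exp\bigl(-u^2/(2n)\bigr).
\]
Setting the right-hand side equal to $\beta$ and solving gives $u = \sqrt{2n\log(1/\beta)}$ (the $\log(2/\beta)$ in the statement absorbs a constant factor to make room for potential two-sided use later or a slack coming from $\log 2$); rearranging $S_n \ge -u$ produces exactly the claimed lower bound on $|\boldsymbol{y}|_G$.

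There is no substantive obstacle here: the construction of the martingale and the boundedness check are routine, and Azuma--Hoeffding does the rest. The only minor subtlety is verifying that the conditional probability $\hat{\mathbf{p}}_t(G\mid \boldsymbol{y}_{1:t-1})$ is adapted to $\mathcal{F}_{t-1}$ (so that $Z_t$ is genuinely a martingale difference), which follows because $G$ is fixed and the conditional distribution $\hat{\mathbf{p}}_t$ is a deterministic function of the prefix $[\boldsymbol{x}, \boldsymbol{y}_{1:t-1}]$ under Algorithm~\ref{alg:wm}.
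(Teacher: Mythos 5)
Your proposal is correct and follows essentially the same route as the paper: the paper also constructs the martingale $X_t = X_{t-1} + \mathbf{1}(y_t\in G) - \P_{y_t\sim\hat{\mathbf{p}}(\cdot|\boldsymbol{x},\boldsymbol{y}_{1:t-1})}[y_t\in G]$ with the filtration generated by $\boldsymbol{y}_{1:t}$, obtains the expectation claim by the tower property, and applies Azuma--Hoeffding with increments bounded by $1$. The only cosmetic difference is that the paper invokes the two-sided bound (which directly yields the $\log(2/\beta)$ constant), whereas you use the one-sided version and correctly note that the stated constant is just slack.
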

\begin{proof}
    We fix $G$ and construct a martingale sequence $X_1,X_2,...,X_n$ where $X_0=0$ and: 
    %\pnote{are the r.v.s parameterized by $\boldsymbol{y}_{1:t-1}$ below? also, what is $y_t$ in $\mathbf{1}(y_t \in G)$? instead of $\mathbf{1}(y_t \in G)$, maybe you mean $\E[\mathbf{1}(y_t \in G)]$?}\yw{The indicator is the RV. $\boldsymbol{y}_{1:t-1}$ is the filter being conditioned on.}
$$
X_t = X_{t-1} + \mathbf{1}(y_t \in G) - \P_{y_t\sim\hat{\mathbf{p}}(\cdot|\boldsymbol{x}, \boldsymbol{y}_{1:t-1})}[y_t\in G].
$$
Check that $\E[X_t | \boldsymbol{y}_{1:t-1}]=X_{t-1}$. The underlying filtration is the sigma-field generated by $y_{1:t}$.

The claim about the expectation follows from that $X_0=0$ and an inductive argument following the tower property of conditional probabilities.

By the fact that $|X_t-X_{t-1}|\leq 1$ we can apply Azuma-Hoeffding's inequality and get
$$
\P\left[ |X_n - \E[X_n]| \geq u\right] \leq 2e^{-\frac{u^2}{2n}}.
$$
Check that by an inductive argument $\E[X_n] = 0$. So we get that with probability at least $1-\delta$
$$
|X_n| = \left|\sum_{t=1}^n\mathbf{1}(y_t \in G) - \sum_{t=1}^n\P_{y_t\sim\hat{\mathbf{p}}(\cdot|\boldsymbol{x}, \boldsymbol{y}_{1:t-1})}\left[y_t\in G\right]\right| \leq \sqrt{2n\log(2/\delta)}.
$$
\end{proof}

%Observe that given $x,G,y_{1:n-1}$, $\P_{y_t\sim\hat{p}(\cdot|x,\boldsymbol{y}_{1:t-1})}\left[y_t\in G\right]$ can be numerically computed.  
%If we choose the threshold $\tau$ accordingly, then we will be able to control the Type II error.  However, in our setting, we do not observe either the prompt $\boldsymbol{x}$ or $\hat{p}$ and we do not wish to incur the computation for having to evaluate $\hat{p}$ in the detection phase.  So instead, we will provide a high probability lower bound of $\P_{y_t\sim\hat{p}(\cdot|x,\boldsymbol{y}_{1:t-1})}\left[y_t\in G\right]$ by taking into account the randomness in the green list $G$.

%\noindent \pnote{how are you bounding $\sum_{t=1}^n\P_{y_t\sim\hat{p}(\cdot|x,\boldsymbol{y}_{1:t-1})}\left[y_t\in G\right]$?}\yw{By decomposing it into \eqref{eq:term2} and \eqref{eq:term3}. The remainder of the proof is just about bounding this term.}

\noindent To handle \eqref{eq:term2}, we apply Assumption~\ref{def:highprob_homophilly} with parameter $\beta/3$, which says that with probability $1-\beta/3$ \eqref{eq:term2}$\geq 0$.   This converts a roll-out from $\hat{y}\sim \hat{\mathbf{p}}$ to a roll-out from the original $p$.

Before we deal with \eqref{eq:term3}, let us write a lemma that rewrites $\hat{\mathbf{p}}_t(G|\boldsymbol{y}_{1:t-1})$  into a more convenient form.

\begin{lemma}\label{lem:hatpG_as_pG}
For any $t$, $\boldsymbol{h}_t$. Fix $G$.
Denote short hands $\hat{\mathbf{p}}(G):= \P_{y_t\sim \hat{\mathbf{p}}_t(\cdot|\boldsymbol{x},\boldsymbol{h}_t)}[y_t \in G]$ and $\mathbf{p}(G):= \P_{y_t\sim \mathbf{p}_t(\cdot|\boldsymbol{x},\boldsymbol{h}_t)}[y_t \in G]$.
$$
\hat{\mathbf{p}}(G)  = \frac{ e^\delta \mathbf{p}(G)}{1 + (e^\delta-1)\mathbf{p}(G)} =\left(1 + \frac{(e^\delta - 1)(1-\mathbf{p}(G))}{1+(e^\delta-1)\mathbf{p}(G)}\right) \mathbf{p}(G).
$$
%$$\P_{y_t\sim \hat{p}_t}[y_t\in G] \geq  \P_{y_t\sim p_t}[y_t\in G].$$
\end{lemma}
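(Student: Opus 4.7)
The plan is a direct calculation from the definition of the watermarked logits in Algorithm~\ref{alg:wm}. First I would write out the normalizing constant of $\hat{\mathbf{p}}_t$ explicitly: since $\hat{\boldsymbol{\ell}}_t[v] = \boldsymbol{\ell}_t[v] + \delta \mathbf{1}(v\in G)$, the partition function factors as
\[
\sum_{i\in\cV} \exp(\hat{\boldsymbol{\ell}}_t[i]) = e^\delta \sum_{i\in G}\exp(\boldsymbol{\ell}_t[i]) + \sum_{i\in R}\exp(\boldsymbol{\ell}_t[i]).
\]
Dividing through by $Z_t := \sum_i \exp(\boldsymbol{\ell}_t[i])$ rewrites this as $Z_t\bigl(1 + (e^\delta-1)\mathbf{p}(G)\bigr)$, using the definitions of $\mathbf{p}_t$ and $\mathbf{p}(G)$.

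Next I would compute $\hat{\mathbf{p}}(G) = \sum_{v\in G}\hat{\mathbf{p}}_t[v]$. For $v\in G$, each term becomes $e^\delta \mathbf{p}_t[v] / (1+(e^\delta-1)\mathbf{p}(G))$, and summing over $v\in G$ pulls out a factor of $\mathbf{p}(G)$ in the numerator, giving the first claimed identity
\[
\hat{\mathbf{p}}(G) = \frac{e^\delta \mathbf{p}(G)}{1+(e^\delta-1)\mathbf{p}(G)}.
\]
The second identity is purely algebraic: subtracting $\mathbf{p}(G)$ from both sides and combining over the common denominator yields $\hat{\mathbf{p}}(G) - \mathbf{p}(G) = \frac{(e^\delta-1)\mathbf{p}(G)(1-\mathbf{p}(G))}{1+(e^\delta-1)\mathbf{p}(G)}$, which after factoring out $\mathbf{p}(G)$ gives the stated form.

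There is no real obstacle here; the lemma is a one-line consequence of how the softmax rescales when a constant logit bump is applied uniformly to a fixed subset $G$. The only thing worth being careful about is not to confuse $\mathbf{p}(G)$ (a scalar, the total mass on $G$ under $\mathbf{p}_t$) with the vector $\mathbf{p}_t$ itself, and to note that conditioning on the prefix $\boldsymbol{h}_t$ is held fixed throughout so all quantities are deterministic functions of $\mathbf{p}(G)$ and $\delta$.
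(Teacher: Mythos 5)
Your proof is correct and follows essentially the same route as the paper's: both compute $\hat{\mathbf{p}}(G)$ directly from the shifted-logit softmax, normalize by $Z_t = \sum_i \exp(\boldsymbol{\ell}_t[i])$ to get $e^\delta\mathbf{p}(G)/\bigl(e^\delta\mathbf{p}(G)+1-\mathbf{p}(G)\bigr)$, and then the second identity is the same elementary algebra. No gaps.
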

\begin{proof}
%$$    \P_{y_t\sim \hat{p}_t}[y_t \in G] \geq \P_{y_t\sim p_t}[y_t\in G]$$

By definition, 
\begin{align*}
\hat{\mathbf{p}}(G) &= \frac{\sum_{y\in G} e^{\ell_y + \delta}}{\sum_{y\in G} e^{\ell_y + \delta}  + \sum_{y\notin G} e^{\ell_y}} \\
&= \frac{e^\delta \mathbf{p}(G)}{e^\delta \mathbf{p}(G) + 1- \mathbf{p}(G)} =  \frac{ e^\delta}{1 + (e^\delta-1)\mathbf{p}(G)} \mathbf{p}(G)\\
&=\left(1 + \frac{(e^\delta - 1)(1-\mathbf{p}(G))}{1+(e^\delta-1)\mathbf{p}(G)}\right) \mathbf{p}(G).
\end{align*}
\end{proof}
The lemma implies that $\hat{\mathbf{p}}(G)\geq \mathbf{p}(G)$ and that if $\mathbf{p}(G)$ is bounded away from $1$, $\hat{\mathbf{p}}(G) \geq (1+ O(\delta)) \mathbf{p}(G)$.

\begin{lemma}\label{lem:taylor}
For any $t$, $\boldsymbol{h}_t$. Fix $G$.
    $$
    \hat{\mathbf{p}}(G) \geq \frac{e^\delta \gamma}{1 + (e^\delta-1)\gamma} + \frac{e^\delta}{(1 + (e^\delta-1)\gamma)^2}(\mathbf{p}(G) - \gamma) - e^\delta(\mathbf{p}(G)-\gamma)^2
    $$
\end{lemma}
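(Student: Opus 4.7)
My plan is to reduce the claim to a pointwise scalar inequality for the one-variable function $f(x) := \frac{e^\delta x}{1+(e^\delta-1)x}$ and to establish it by an exact second-order algebraic identity. By Lemma~\ref{lem:hatpG_as_pG}, $\hat{\mathbf{p}}(G) = f(\mathbf{p}(G))$. Computing $f(\gamma) = \frac{e^\delta\gamma}{1+(e^\delta-1)\gamma}$ matches the constant term on the right-hand side, while $f'(x) = \frac{e^\delta}{(1+(e^\delta-1)x)^2}$ gives $f'(\gamma) = \frac{e^\delta}{(1+(e^\delta-1)\gamma)^2}$, matching the linear coefficient. Thus the lemma is equivalent to showing $f(p) \ge f(\gamma) + f'(\gamma)(p-\gamma) - e^\delta(p-\gamma)^2$ for all $p \in [0,1]$.

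Rather than appeal to a generic Taylor-remainder bound, I would derive the quadratic residual exactly. Setting $a := e^\delta - 1$ for brevity and combining fractions gives $f(p) - f(\gamma) = \frac{(a+1)(p-\gamma)}{(1+ap)(1+a\gamma)}$; subtracting the linear correction $f'(\gamma)(p-\gamma) = \frac{(a+1)(p-\gamma)}{(1+a\gamma)^2}$ and simplifying collapses the expression into the clean identity
\[
f(p) - f(\gamma) - f'(\gamma)(p-\gamma) \;=\; -\,\frac{a(a+1)(p-\gamma)^2}{(1+ap)(1+a\gamma)^2}.
\]
This identity is consistent with the concavity of $f$ on $[0,1]$, which also follows from $f''(x) = -\frac{2a(a+1)}{(1+ax)^3} \le 0$.

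It then remains to bound the quadratic coefficient $\frac{a(a+1)}{(1+ap)(1+a\gamma)^2}$ from above by $e^\delta = a+1$, i.e., to show $a \le (1+ap)(1+a\gamma)^2$. This is the main obstacle: the upstream algebra in the first two steps is routine, but squeezing the quadratic coefficient down to exactly $e^\delta$ (as opposed to the looser $e^\delta(e^\delta-1)$ one would get from a naive $|f''|$ bound) is tight. The case $e^\delta \le 2$ is immediate, since then $a \le 1$ and the right-hand side is at least $1$. For larger $\delta$, I would finish either by monotonicity of the right-hand side in $p$ and $\gamma$ together with the range constraints implicit in the surrounding proof of Proposition~\ref{prop:large_num_of_green_highprob}, or by exploiting the fact that the lemma is invoked only after taking expectations over the uniformly random green list $G$, where the linear term vanishes and the effective quadratic constant can be tightened.
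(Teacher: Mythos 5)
Your reduction via Lemma~\ref{lem:hatpG_as_pG} and your exact second-order identity are both correct: with $a = e^\delta-1$ one indeed has
\[
f(p) - f(\gamma) - f'(\gamma)(p-\gamma) \;=\; -\,\frac{a(a+1)(p-\gamma)^2}{(1+ap)(1+a\gamma)^2},
\]
so the lemma is equivalent to the pointwise inequality $a \le (1+ap)(1+a\gamma)^2$. This is exactly where your argument stops, and the gap is not closable: taking $p$ near $0$ the condition becomes $a \le (1+a\gamma)^2$, which fails whenever $a>1$ (i.e.\ $\delta > \ln 2$) and $\gamma$ is small. Concretely, for $\delta = 2$, $\gamma = 0.1$, $\mathbf{p}(G)=0$, the right-hand side of the lemma evaluates to roughly $0.45 - 0.28 - 0.07 \approx 0.10$ while $\hat{\mathbf{p}}(G) = 0$, so the stated inequality is violated. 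Neither of your proposed escape routes works: the lemma is invoked pointwise for each realization of $G$ (in Proposition~\ref{prop:large_num_of_green_highprob} before concentration over $G$, and in Proposition~\ref{prop:large_num_of_green_expectation} inside $\E_G$ before the linear term is killed), and no range constraint on $\mathbf{p}_t(G)$ is available there.

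For what it is worth, your computation is more careful than the paper's own proof, which writes the Lagrange remainder as $-\frac{e^\delta}{(1+(e^\delta-1)\tilde{x})^3}(x-\gamma)^2$; the correct coefficient is $\tfrac{1}{2}f''(\tilde{x}) = -\frac{e^\delta(e^\delta-1)}{(1+(e^\delta-1)\tilde{x})^3}$, so a factor of $e^\delta-1$ has been dropped, and this is precisely the discrepancy your identity exposes. Relaxing $\tilde{x}$ to $0$ in the corrected remainder (equivalently, bounding your quadratic coefficient by its supremum $a(a+1)$) yields the inequality with $e^\delta(e^\delta-1)(\mathbf{p}(G)-\gamma)^2$ in place of $e^\delta(\mathbf{p}(G)-\gamma)^2$. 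That weaker but correct constant is what both your approach and the paper's actually establish, and it suffices downstream, where the quadratic term is absorbed into $C_{\delta,\gamma}$ and into the variance bound of Lemma~\ref{lem:variance}, at the cost of an extra factor $e^\delta-1$ in those constants. So the right fix is not to search harder for a proof of the stated constant, but to carry your exact identity through with the coefficient $e^\delta(e^\delta-1)$.
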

\begin{proof}
By the second-order Taylor's theorem
$$
\frac{e^\delta x}{1 + (e^\delta-1)x} = \frac{e^\delta \gamma}{1 + (e^\delta-1)\gamma} + \frac{e^\delta}{(1 + (e^\delta-1)\gamma)^2}(x - \gamma) - \frac{e^\delta}{(1 + (e^\delta-1)\tilde{x})^3}(x-\gamma)^2
$$
where $\tilde{x}\in[x,\gamma]$ is a function of $x$. By relaxing $\tilde{x}$ to $0$ we obtain the lower bound as claimed.
\end{proof}
% \pnote{a minor comment about presentation. it makes sense to include the proposition right after Lemma C.1 since that is the only place }
% \yw{Agreed! Feel free clean it up and do what makes sense. We should just finish the proof for the second statement, then get back to the first statement about expectation.}

Now we are ready to handle \eqref{eq:term3} with high probability in the following proposition.
\begin{proposition}[Concentration]\label{prop:large_num_of_green_highprob} 
For any fixed sequence $\boldsymbol{y}_{1:n}$, and the corresponding language model's probability distribution $\mathbf{p}$ that gives conditional distributions $\mathbf{p}_1,...,\mathbf{p}_n$.  There exists a parameter $C_{\delta,\gamma}$ that depends only $\delta,\gamma$.
%(and can be chosen to be $\max\{ \frac{8\sqrt{\gamma}e^\delta}{(1-\gamma + e^\delta \gamma)^2}, 129\gamma e^\delta \}$).
%Assume the high-probability high-entropy assumption is true with $\sum_{t=1}^n\| p_t\|_2^2 \leq XXX$, $\sum_{t}\|p_t\|_\infty\leq YYY$, also  $\|\sum_{t=1}^n p_t\|_2^2 \leq ZZZ$. \yw{Figure out what XXX, YYY, ZZZ are from the proof.}
Then with probability at least $1 - \beta$ for any $\beta>0$ (over $G$),
\begin{align*}
\sum_{t=1}^n \P_{y_t\sim \mathbf{p}(\cdot|\boldsymbol{x}, \boldsymbol{y}_{1:t-1})} & \left[y_t\in G\right] \geq \frac{n\gamma e^\delta}{1 + (e^\delta-1)\gamma} \\
-& C_{\delta,\gamma}\log^2\frac{9(n+1)}{\beta}\left(\|\sum_{t=1}^n \mathbf{p}_t[\cdot]\| + \sum_{t=1}^n\|\mathbf{p}_t[\cdot]\|^2 + \|\sum_{t=1}^n \mathbf{p}_t[\cdot]\|_\infty +\sum_{t=1}^n\|\mathbf{p}_t[\cdot]\|_\infty^2\right).
\end{align*}
%    $$\P_G\left[\sum_{t=1}^n \P_{y_t\sim p(\cdot|x,\boldsymbol{y}_{1:t-1})} \left[y_t\in G\right] < n\gamma (1+\frac{1}{2}\frac{(e^\delta-1)(1-\gamma)}{1-\gamma + \gamma e^\delta})
%    \right] \leq \beta$$
\end{proposition}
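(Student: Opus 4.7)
The plan is to lower bound each summand $\hat{\mathbf{p}}_t(G\mid \boldsymbol{y}_{1:t-1})$ pointwise via the second-order Taylor expansion of Lemma~\ref{lem:taylor}, then control the resulting linear and quadratic fluctuations induced by the random green list $G$ separately through two applications of the permutation Bernstein inequality (Lemma~\ref{lem:perm_bernstein}). Applying Lemma~\ref{lem:taylor} at each step and summing over $t\in[n]$ yields
\begin{equation*}
\sum_{t=1}^n \hat{\mathbf{p}}_t(G\mid \boldsymbol{y}_{1:t-1}) \;\ge\; \frac{n\gamma e^\delta}{1+(e^\delta-1)\gamma} \;+\; \frac{e^\delta}{(1+(e^\delta-1)\gamma)^2}\,L \;-\; e^\delta\,Q,
\end{equation*}
where $L := \sum_{t=1}^n(\mathbf{p}_t(G)-\gamma)$ and $Q := \sum_{t=1}^n(\mathbf{p}_t(G)-\gamma)^2$. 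It therefore suffices to obtain a high-probability upper bound on $-L$ and on $Q$ whose sum, after absorbing the $\delta,\gamma$-dependent prefactors into a single constant $C_{\delta,\gamma}$, matches the stated error term.

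For the linear deviation, write $\bar{\mathbf{p}} := \sum_{t=1}^n \mathbf{p}_t$ so that $L = \langle \bar{\mathbf{p}}, \mathbf{1}_G\rangle - \gamma n$, a linear statistic of the uniformly random subset $G\subset[N]$ of size $\gamma N$. Invoking Lemma~\ref{lem:perm_bernstein} exactly as in the proof of Theorem~\ref{thm:no_false_positive} with failure probability $\beta/(n+1)$ will give
\begin{equation*}
|L| \;\le\; c\sqrt{\log\tfrac{9(n+1)}{\beta}}\;\Bigl\|\sum_{t=1}^n \mathbf{p}_t\Bigr\| \;+\; c\log\tfrac{9(n+1)}{\beta}\;\Bigl\|\sum_{t=1}^n \mathbf{p}_t\Bigr\|_\infty
\end{equation*}
for a universal constant $c$, accounting for the two ``marginal'' entropy contributions in the proposition.

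For the quadratic remainder $Q$, I will apply Lemma~\ref{lem:perm_bernstein} \emph{separately} to each coordinate $\mathbf{p}_t(G)-\gamma = \langle \mathbf{p}_t,\mathbf{1}_G\rangle - \gamma$, obtaining
\begin{equation*}
|\mathbf{p}_t(G)-\gamma| \;\le\; c\sqrt{\log\tfrac{9(n+1)}{\beta}}\,\|\mathbf{p}_t\| \;+\; c\log\tfrac{9(n+1)}{\beta}\,\|\mathbf{p}_t\|_\infty
\end{equation*}
with failure probability $\beta/(n+1)$ each. A union bound over the $n$ per-step events together with the linear event keeps the total failure probability at $\beta$. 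Squaring these inequalities via $(a+b)^2\le 2a^2+2b^2$ and summing over $t$ then yields
\begin{equation*}
Q \;\le\; 2c^2\log\tfrac{9(n+1)}{\beta}\sum_{t=1}^n\|\mathbf{p}_t\|^2 \;+\; 2c^2\log^2\tfrac{9(n+1)}{\beta}\sum_{t=1}^n\|\mathbf{p}_t\|_\infty^2,
\end{equation*}
which supplies the remaining two entropy terms. Assembling the three bounds and uniformly upper-bounding every $\sqrt{\log}$ and $\log$ factor by $\log^2\tfrac{9(n+1)}{\beta}$ finishes the argument, with $C_{\delta,\gamma}$ absorbing $e^\delta$, $(1+(e^\delta-1)\gamma)^{-2}$, and the absolute constants.

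The main obstacle is precisely the quadratic step: $Q$ is a quadratic form $\xi^\top\bigl(\sum_t \mathbf{p}_t\mathbf{p}_t^\top\bigr)\xi$ in the centered subset indicator $\xi := \mathbf{1}_G-\gamma\mathbf{1}$, for which a dedicated Hanson--Wright-type inequality under the uniform permutation measure would yield a cleaner bound with only a single $\log$ factor and Frobenius/operator-norm terms. I nevertheless pursue the coarser coordinate-wise union-bound route, which is the source of the squared-log factor in the proposition but has the advantage of reusing only the permutation Bernstein lemma already deployed in Theorem~\ref{thm:no_false_positive}, and is sharp enough to certify Assumption~\ref{def:highprob_entropy} at the $\tilde{O}(\delta)$ diversity threshold required by Theorem~\ref{thm:only_true_detection}.
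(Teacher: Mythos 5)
Your proposal is correct and follows essentially the same route as the paper's proof: the same second-order Taylor expansion from Lemma~\ref{lem:taylor}, the same two applications of the permutation Bernstein inequality (Lemma~\ref{lem:perm_bernstein}) — once to the aggregated linear statistic $\sum_t \mathbf{p}_t(G)-\gamma n$ and once per step to $|\mathbf{p}_t(G)-\gamma|$ before squaring — followed by the same union bound over $n+1$ events and absorption of all $\delta,\gamma$-dependent prefactors into $C_{\delta,\gamma}\log^2(\cdot)$. No substantive differences to report.
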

\begin{proof}
%In this lemma, we will prove a high-probability version of 

%Condition on $\hat{y}_{1:T}$ within the high probability event such that the high-probability homophily assumption applies. The coupling allows us to map to $y_{1:T}$ which is sampled from $p$ instead of $\hat{p}$.

By Lemma~\ref{lem:hatpG_as_pG} and \ref{lem:taylor}
\begin{align*}
&\sum_{t=1}^n \P_{y_t\sim \hat{\mathbf{p}}(\cdot|\boldsymbol{x}, \boldsymbol{y}_{1:t-1})} \left[y_t\in G\right] \\
=&\sum_t  \frac{e^\delta \mathbf{p}_t(G)}{1 + (e^\delta-1) \mathbf{p}_t(G)}\\ 
\geq& \sum_t \frac{e^\delta\gamma}{1+(e^\delta-1)\gamma} + \frac{e^\delta (\mathbf{p}_t(G)-\gamma)}{(1+(e^\delta-1)\gamma)^2} - e^\delta (\mathbf{p}_t(G)-\gamma)^2\\
=&\frac{n \gamma e^\delta }{1+(e^\delta-1)\gamma} + \frac{e^\delta}{(1+(e^\delta-1)\gamma)^2} \left(\underbrace{ \sum_t \sum_{i=1}^{N\gamma}\mathbf{p}_t[\pi[i]]-n\gamma}_{(*)}\right) - e^\delta \sum_t \left(\underbrace{\sum_{i=1}^{N\gamma}\mathbf{p}_t[\pi[i]]-\gamma}_{(**)} \right)^2\\
\end{align*}
where $\pi$ is a random permutation of the index set $\{1,...,N\}$. 

We will now apply Lemma~\ref{lem:perm_bernstein} to lowerbound $(*)$  with high probability and to bound the absolute value of $(**)$ with high probability.

\begin{remark}
    The reason why we can apply these lemmas even after we condition on $\boldsymbol{y}_{1:t-1}$ is due to the ``high-probability homophily'' assumption which allows us to use the fact that $\boldsymbol{y}_{1:t-1}$ is independent to $G$, i.e., the distribution of the green list remains uniform at random after we condition  on each qualifying $\boldsymbol{y}_{1:t-1}$ separately.
\end{remark}

Using a similar argument from the proof of Theorem~\ref{thm:no_false_positive}, we can apply Lemma~\ref{lem:perm_bernstein} and get that with probability $1-\beta$, 
$$(*) \geq   - \sqrt{64\gamma \|\sum_{t=1}^n \mathbf{p}_t(\cdot)\|^2 \log(9/\beta)} - \|\sum_{t=1}^n \mathbf{p}_t(\cdot)\|_{\infty}\log(9/\beta).$$

Similarly by Lemma~\ref{lem:perm_bernstein} again to bound 
$(**) = \sum_{i=1}^{N\gamma}\mathbf{p}_t[\pi[i]]-\gamma$ w.h.p for each $t$.
$$\big|(**)\big| \leq  \sqrt{64\gamma \|\mathbf{p}_t(\cdot)\|^2 \log(9/\beta)} + \|\mathbf{p}_t(\cdot)\|_{\infty}\log(9/\beta).$$

%  \yw{The lipschitz constant below is calculated wrongly. the mcdiarmid style bound is not strong enough for us.}
% Next we will apply Lemma~\ref{lem:perm_mcdiarmid} to bound 
% $(**) = (\sum_{i=1}^{N\gamma}p_t[\pi[i]]-\gamma)^2$ w.h.p for each $t$.

% Define function $\varphi(x_1,...,x_n) = (\sum_{i=1}^{\gamma N}x_i-\gamma)^2$. Verify that $\phi$ is a convex function in $x_1,...,x_n$ defined on $[0,1]^N$.
% The Lipschitz constant of this function in the domain can be bounded by $\sup_{x\in\mathcal{X}}\|\nabla\varphi(x)\|_2$.

% $$
% \nabla\varphi(x) = 2(\sum_{i=1}^{N\gamma}p_t[\pi[i]]-\gamma) \left(p_t[\cdot] \circ \mathbf{1}(\cdot \leq \gamma N)\right).
% $$
% where $\circ$ denotes pointwise multiplication. It follows that
% $$
% \|\nabla\varphi(x)\|_2 \leq 2\max\{\gamma,1-\gamma\} \| p_t[\cdot] \|_2.
% $$
% By Lemma~\ref{lem:perm_mcdiarmid}, we get
% $$
% \P_G\left[(\sum_{i=1}^{N\gamma}p_t[\pi[i]]-\gamma)^2 - \E[(\sum_{i=1}^{N\gamma}p_t[\pi[i]]-\gamma)^2] \geq 2\max\{\gamma,1-\gamma\} \| p_t[\cdot] \|_2\sqrt{\frac{\log(2/\beta)}{c}}\right] \leq \beta. 
% $$

To put things together, with probability $1-(n+1)\beta$,
\begin{align*}
\sum_{t=1}^n & \P_{y_t\sim \mathbf{p}(\cdot|\boldsymbol{x}, \boldsymbol{y}_{1:t-1})}  \left[y_t\in G\right] \\
\geq& \frac{n \gamma e^\delta }{1+(e^\delta-1)\gamma} - \frac{e^\delta}{(1+(e^\delta-1)\gamma)^2} \left( \sqrt{64\gamma \|\sum_{t=1}^n \mathbf{p}_t[\cdot]\|^2 \log(9/\beta)} 
+ \|\sum_{t=1}^n \mathbf{p}_t[\cdot]\|_{\infty}\log(9/\beta)\right)\\
& ~~~ - e^\delta\gamma (1-\gamma)  \sum_t \|\mathbf{p}_t[\cdot]\|^2 - 2e^\delta \left(64\gamma\sum_{t=1}^n\|\mathbf{p}_t[\cdot]\|_2^2\log(9/\beta) + \sum_{t=1}^n\|\mathbf{p}_t[\cdot]\|_\infty^2\log^2(9/\beta)\right)\\
\geq & \frac{n\gamma e^\delta}{1 + (e^\delta-1)\gamma}  - C_{\delta,\gamma}\log(9/\beta)^2\left(\|\sum_{t=1}^n \mathbf{p}_t[\cdot]\| + \sum_{t=1}^n\|\mathbf{p}_t[\cdot]\|^2 + \|\sum_{t=1}^n \mathbf{p}_t[\cdot]\|_\infty +\sum_{t=1}^n\|\mathbf{p}_t[\cdot]\|_\infty^2\right)
\end{align*}
for a constant $C_{\delta,\gamma}$ that depends only in $\delta,\gamma$.
The proof is complete by defining $\tilde{\beta} = 9(n+1)\beta$, and get the same result under probability $1-\tilde{\beta}$.
% $$
% \|\sum_{t=1}^n p_t[\cdot]\|^2 \leq n \sum_{t=1}^n\|p_t[\cdot]\|^2
% $$
% $$
% \max\{\|\sum_{t=1}^n p_t[\cdot]\|_{\infty}, \sum_{t=1}^n\|p_t[\cdot]\|_\infty^2 \} \leq \sum_{t=1}^n\|p_t[\cdot]\|_\infty
% $$
% Under our assumptions about diversity and average high entropy, we get that
% $$
% \sum_{t=1}^n \P_{y_t\sim p(\cdot|x,\boldsymbol{y}_{1:t-1})} \left[y_t\in G\right] \geq n\gamma (1+\frac{1}{2}\frac{(e^\delta-1)(1-\gamma)}{1-\gamma + \gamma e^\delta})
% $$
% with high probability.
% Assume $\|p_t[\cdot]\|$ is sufficiently small except for a small fraction of $t\in[n]$ ensures that
% $$
% \sum_{t=1}^n \P_{y_t\sim p(\cdot|x,\boldsymbol{y}_{1:t-1})} \left[y_t\in G\right] \geq n\gamma (1+\frac{1}{2}\frac{(e^\delta-1)(1-\gamma)}{1-\gamma + \gamma e^\delta})
% $$
% for parameter $\kappa'$ with high-probability.
\end{proof}
% In the best case, when $p_t[\cdot] = O(1/N) $, the above deviation term is proportional to
% $
% \frac{n}{\sqrt{N}}.
% $
% In the worst case when there is a single token that the language model keeps outputting,  we cannot do better than linear $n$, which is the desired behavior.

%\noindent \pnote{is the lemma below used anywhere? Also, it seems that the analysis of (13) is incomplete.. what bound do we get for $\sum_t \widehat{p}_t(\cdots)$? it seems that we show this in proposition C.21. It might make sense to include it here?}\yw{Prop C.21 was here.  then I realized that both C.20 and C.21 needed to use this. Feel free to move things around for clarity!}

\subsubsection{Many green list tokens in expectation}

To obtain the lower bound in expectation, we just need to bound the expectation of \eqref{eq:term1}, \eqref{eq:term2} and \eqref{eq:term3}.

\begin{enumerate}[leftmargin=*]
    \item  Observe that $\E[\text{Term } \eqref{eq:term1} | G] = 0$ (from Lemma~\ref{lemma:apply_azuma})
    \item Also, observe that $\eqref{eq:term2}\geq 0$ under the \emph{homophily} assumption (Assumption~\ref{def:homophily}).
    \item Term~\eqref{eq:term3} can be further lower bounded by a second-order Taylor expansion argument (Lemma~\ref{lem:taylor}) and a variance calculation for sampling without replacement (Lemma~\ref{lem:variance}), which ends up depending on the \emph{on-average high-entropy} parameter from Definition~\ref{def:high_entropy}. The formal result is stated in Proposition~\ref{prop:large_num_of_green_expectation}.
\end{enumerate}

\begin{lemma}\label{lem:variance}
Fix $\mathbf{p}_t$
    $$
    \E_G[(\mathbf{p}_t(G)-\gamma)^2] \leq \gamma(1-\gamma)\|\mathbf{p}_t[\cdot]\|^2.
    $$
\end{lemma}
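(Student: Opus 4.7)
The plan is to recognize that the left-hand side is exactly the variance of $\mathbf{p}_t(G)$ with respect to the random green list $G$, and then compute that variance directly using standard sampling-without-replacement formulas. First I would observe that since $G$ is a uniform random size-$\gamma N$ subset of $[N]$, each index $i$ lies in $G$ with marginal probability $\gamma$, so by linearity $\E_G[\mathbf{p}_t(G)] = \sum_i \mathbf{p}_t[i]\cdot \gamma = \gamma$. Hence $\E_G[(\mathbf{p}_t(G) - \gamma)^2] = \mathrm{Var}_G[\mathbf{p}_t(G)]$.

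Next I would expand $\mathbf{p}_t(G) = \sum_{i=1}^N \mathbf{p}_t[i]\, X_i$ with indicator variables $X_i = \mathbf{1}(i \in G)$. From sampling without replacement, $\mathrm{Var}(X_i) = \gamma(1-\gamma)$ and $\mathrm{Cov}(X_i, X_j) = -\gamma(1-\gamma)/(N-1)$ for $i \ne j$. Substituting,
\begin{align*}
\mathrm{Var}_G[\mathbf{p}_t(G)] &= \gamma(1-\gamma)\sum_i \mathbf{p}_t[i]^2 - \frac{\gamma(1-\gamma)}{N-1}\sum_{i\ne j}\mathbf{p}_t[i]\mathbf{p}_t[j]\\
&= \gamma(1-\gamma)\|\mathbf{p}_t\|^2 - \frac{\gamma(1-\gamma)}{N-1}\bigl(1 - \|\mathbf{p}_t\|^2\bigr),
\end{align*}
using $\sum_i \mathbf{p}_t[i] = 1$ so that $\sum_{i\ne j}\mathbf{p}_t[i]\mathbf{p}_t[j] = 1 - \|\mathbf{p}_t\|^2$. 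Since $\|\mathbf{p}_t\|^2 \le 1$, the second term is non-negative, and dropping it yields the claimed bound $\gamma(1-\gamma)\|\mathbf{p}_t\|^2$.

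There is no real obstacle here; the only subtlety is remembering to account for the negative pairwise covariance induced by sampling without replacement (as opposed to i.i.d. Bernoulli inclusion), which only helps us by tightening rather than loosening the bound. One could alternatively argue by a coupling with independent $\mathrm{Bernoulli}(\gamma)$ inclusions and citing that sampling without replacement is negatively associated, but the direct two-line variance computation above is the most self-contained route.
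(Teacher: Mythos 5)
Your proof is correct and takes essentially the same route as the paper: both identify the left-hand side as $\mathrm{Var}_G[\mathbf{p}_t(G)]$ under sampling without replacement and then drop the non-positive finite-population correction term to get the bound. The only cosmetic difference is that you derive the variance from the indicator covariances explicitly, whereas the paper cites the standard sampling-without-replacement variance formula.
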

\begin{proof}
First observe that $\E_G[\mathbf{p}_t(G)] = \gamma$ because every token has $\gamma$ probability to be included. By the variance formula for sampling without replacement ($N$ choose $N\gamma$),
$$ \mathrm{Var}_G[\mathbf{p}_t(G) | \boldsymbol{y}_{1:t-1}] = \gamma N \frac{1}{N}\sum_{i=1}^N(\mathbf{p}_t[i]^2- N^{-2}) (1 - \frac{\gamma N - 1}{ N-1})\leq \gamma (1-\gamma)  \sum_{i=1}^N\mathbf{p}_t[i]^2.$$
%Also, for all $p_t$, using the fact that $p_t[\cdot]$ sum to $1$, $\mathrm{Var}_G[p_t(G) | \boldsymbol{y}_{1:t-1}] \leq \gamma(1-\gamma).$
\end{proof}

\begin{proposition}\label{prop:large_num_of_green_expectation}
Assume homophily, then
    $$\E\left[\sum_{t=1}^n \P_{y_t\sim\hat{\mathbf{p}}(\cdot|\boldsymbol{x}, \boldsymbol{y}_{1:t-1})} \left[y_t\in G\right]  \right] \geq n\gamma \left(\frac{e^\delta}{1+(e^\delta-1)\gamma} - \frac{(1-\gamma)e^\delta}{n}\sum_{t=1}^n\E_{\boldsymbol{y}_{1:t-1}\sim \mathbf{p}(\cdot |\boldsymbol{x})}\sum_{i=1}^N\mathbf{p}_t[i]^2\right). $$
\end{proposition}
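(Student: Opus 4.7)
The plan is to mirror the high-probability argument in the paper, but now tracking only first and second moments, which makes several steps substantially simpler. I would decompose
\begin{equation*}
\sum_{t=1}^n \hat{\mathbf{p}}_t(G\mid \hat{\boldsymbol{y}}_{1:t-1}) \;=\; \underbrace{\sum_{t=1}^n \bigl(\hat{\mathbf{p}}_t(G\mid \hat{\boldsymbol{y}}_{1:t-1}) - \hat{\mathbf{p}}_t(G\mid \boldsymbol{y}_{1:t-1})\bigr)}_{(A)} \;+\; \underbrace{\sum_{t=1}^n \hat{\mathbf{p}}_t(G\mid \boldsymbol{y}_{1:t-1})}_{(B)},
\end{equation*}
where $\boldsymbol{y}_{1:n}\sim \mathbf{p}(\cdot\mid \boldsymbol{x})$ is a hypothetical un-watermarked roll-out. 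Taking expectations over $G$ (and over both roll-outs), Assumption~\ref{def:homophily} is exactly the statement that $\E[(A)]\geq 0$, so this term may be discarded.

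Next I would lower-bound $\E[(B)]$ term-by-term. Fixing $t$ and a prefix $\boldsymbol{y}_{1:t-1}$, Lemma~\ref{lem:taylor} gives
\begin{equation*}
\hat{\mathbf{p}}_t(G\mid \boldsymbol{y}_{1:t-1}) \;\geq\; \frac{e^{\delta}\gamma}{1+(e^{\delta}-1)\gamma} + \frac{e^{\delta}}{(1+(e^{\delta}-1)\gamma)^2}\bigl(\mathbf{p}_t(G)-\gamma\bigr) - e^{\delta}\bigl(\mathbf{p}_t(G)-\gamma\bigr)^2.
\end{equation*}
Crucially, $G$ is drawn independently of the roll-out $\boldsymbol{y}_{1:t-1}$ under $\mathbf{p}$, so taking $\E_G$ with $\boldsymbol{y}_{1:t-1}$ fixed, the linear term vanishes because $\E_G[\mathbf{p}_t(G)]=\gamma$, while Lemma~\ref{lem:variance} bounds the quadratic term by $\gamma(1-\gamma)\|\mathbf{p}_t\|^2$. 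Hence, after averaging over $\boldsymbol{y}_{1:t-1}\sim \mathbf{p}$ and summing over $t$,
\begin{equation*}
\E[(B)] \;\geq\; \frac{n\gamma e^{\delta}}{1+(e^{\delta}-1)\gamma} \;-\; \gamma(1-\gamma)e^{\delta}\sum_{t=1}^n \E_{\boldsymbol{y}_{1:t-1}\sim \mathbf{p}(\cdot\mid \boldsymbol{x})}\|\mathbf{p}_t\|^2.
\end{equation*}
Factoring $n\gamma$ out of the bracket and combining with $\E[(A)]\geq 0$ yields the claim.

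I do not anticipate any real obstacle here because all of the heavy lifting has been pre-packaged into the lemmas. The only point that warrants care is ensuring that when we invoke Lemma~\ref{lem:variance} inside the inner expectation over $G$, we have conditioned on $\boldsymbol{y}_{1:t-1}$ drawn from $\mathbf{p}$ (not from $\hat{\mathbf{p}}$), which is precisely why the homophily swap in step (A) must be performed first; had we tried to apply Lemma~\ref{lem:variance} directly to $\hat{\boldsymbol{y}}_{1:t-1}$, the independence between $G$ and the prefix would be broken since the watermarked roll-out depends on $G$.
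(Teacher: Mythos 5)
Your proposal is correct and follows essentially the same route as the paper: the (A)+(B) decomposition is just an explicit restatement of the homophily swap the paper performs, and the remaining steps (Taylor expansion via Lemma~\ref{lem:taylor}, vanishing of the linear term since $\E_G[\mathbf{p}_t(G)]=\gamma$, and the variance bound of Lemma~\ref{lem:variance}) match the paper's argument exactly. Your closing remark about why the prefix must be rolled out under $\mathbf{p}$ rather than $\hat{\mathbf{p}}$ before invoking the independence of $G$ is precisely the point the paper also emphasizes.
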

\begin{proof}
    By homophily,  
    \begin{align}
    &\E\left[\sum_{t=1}^n \P_{y_t\sim\hat{\mathbf{p}}(\cdot|\boldsymbol{x}, \boldsymbol{y}_{1:t-1})} \left[y_t\in G\right]  \right]\nonumber\\
    =&\sum_{t=1}^n\E_{G,\boldsymbol{y}_{1:t-1}\sim \hat{\mathbf{p}}(\cdot |\boldsymbol{x})}\left[ \P_{y_t\sim\hat{\mathbf{p}}(\cdot|\boldsymbol{x}, \boldsymbol{y}_{1:t-1})}\left[y_t\in G\right]  \right] \nonumber\\
    \geq& \sum_{t=1}^n \E_{G,\boldsymbol{y}_{1:t-1}\sim \mathbf{p}(\cdot|\boldsymbol{x})}\left[ \P_{y_t\sim\hat{\mathbf{p}}(\cdot|\boldsymbol{x}, \boldsymbol{y}_{1:t-1})}\left[y_t\in G\right]  \right] \nonumber\\
    =&  \sum_{t=1}^n  \E_{\boldsymbol{y}_{1:t-1}\sim \mathbf{p}(\cdot |\boldsymbol{x})}\E_{G} \left[  \frac{e^\delta \P_{y_t\sim \mathbf{p}_t(\cdot|\boldsymbol{y}_{1:t-1})}[y_t \in G]}{1 + (e^\delta-1)\P_{y_t\sim \mathbf{p}_t(\cdot|\boldsymbol{y}_{1:t-1})}[y_t \in G]}\middle| \boldsymbol{y}_{1:t-1}\right] \label{eq:interm_expect_bound}
    \end{align}

By Lemma~\ref{lem:taylor}, we can decompose \eqref{eq:interm_expect_bound}. Also observe that $\E_G\left[\mathbf{p}_t(G)\middle| \boldsymbol{y}_{1:t-1}\right] = \gamma$ where $\mathbf{p}_t(G) := \P_{y_t\sim \mathbf{p}_t(\cdot|\boldsymbol{y}_{1:t-1})}[y_t \in G]$ is short hand for clarity. To see the second observation, notice that $y_t$ is independent to $G$, thus we can apply Statement 1 of Theorem~\ref{thm:no_false_positive}). 

Apply the two observations to \eqref{eq:interm_expect_bound}, we have 
\begin{align*}
\eqref{eq:interm_expect_bound} &\geq \sum_{t=1}^n \E_{\boldsymbol{y}_{1:t-1}\sim \mathbf{p}(\cdot |\boldsymbol{x})}\E_{G} \left[ \frac{e^\delta\gamma}{1+(e^\delta-1)\gamma} + \frac{e^\delta (\mathbf{p}_t(G)-\gamma)}{(1+(e^\delta-1)\gamma)^2} - e^\delta (\mathbf{p}_t(G)-\gamma)^2\middle| \boldsymbol{y}_{1:t-1} \right] \\
&=\frac{e^\delta n\gamma}{1+(e^\delta-1)\gamma} + \sum_{t=1}^n \E_{\boldsymbol{y}_{1:t-1}\sim \mathbf{p}(\cdot |\boldsymbol{x})}\left[\frac{e^\delta (\E_{G}[ \mathbf{p}_t(G)|\boldsymbol{y}_{1:t-1}]-\gamma)}{(1+(e^\delta-1)\gamma)^2} - e^\delta \E_{G} [(\mathbf{p}_t(G)-\gamma)^2|\boldsymbol{y}_{1:t-1}]\right] \\
&= \frac{e^\delta n\gamma}{1+(e^\delta-1)\gamma} - \sum_{t=1}^n e^\delta \E_{\boldsymbol{y}_{1:t-1}\sim \mathbf{p}(\cdot |\boldsymbol{x})} \mathrm{Var}_G[\mathbf{p}_t(G) | \boldsymbol{y}_{1:t-1}].
\end{align*}

By the variance formula for sampling without replacement ($N$ choose $N\gamma$),
$$ \mathrm{Var}_G[\mathbf{p}_t(G) | \boldsymbol{y}_{1:t-1}] = \gamma N \frac{1}{N}\sum_{i=1}^N(\mathbf{p}_t[i]^2- N^{-2}) (1 - \frac{\gamma N - 1}{ N-1})\leq \gamma (1-\gamma)  \sum_{i=1}^N\mathbf{p}_t[i]^2.$$
%Also, for all $p_t$, using the fact that $p_t[\cdot]$ sum to $1$, $\mathrm{Var}_G[p_t(G) | \boldsymbol{y}_{1:t-1}] \leq \gamma(1-\gamma).$
Thus it follows that
\begin{align*}
\eqref{eq:interm_expect_bound} &\geq \frac{e^\delta n\gamma}{1+(e^\delta-1)\gamma} - \sum_{t=1}^n e^\delta \E_{\boldsymbol{y}_{1:t-1}\sim \mathbf{p}(\cdot |\boldsymbol{x})} \gamma(1-\gamma) \sum_{i=1}^N\mathbf{p}_t[i]^2 \\
&= n\gamma \left(\frac{e^\delta}{1+(e^\delta-1)\gamma} - \frac{(1-\gamma)e^\delta}{n}\sum_{t=1}^n\E_{\boldsymbol{y}_{1:t-1}\sim \mathbf{p}(\cdot |\boldsymbol{x})}\sum_{i=1}^N\mathbf{p}_t[i]^2\right).
%\geq \frac{e^\delta n\gamma}{1+(e^\delta-1)\gamma} - e^\delta n\gamma(1-\gamma) 
\end{align*}
\end{proof}

\subsection{Security property}
\begin{corollary}\label{cor:security_guarantee}
Algorithm~\ref{alg:detect} with threshold $\tau$ satisfies the \textbf{security property} from Definition~\ref{def:wm} with $\epsilon=0$ and 
$$\eta(\boldsymbol{y},\ek,\epsilon)  =   \frac{\sqrt{n}(z_{\boldsymbol{y}} - \tau)}{1+\gamma/2} \mathbf{1}\left(z_{\boldsymbol{y}}-\tau \geq \frac{\gamma \sqrt{n}}{1+\gamma/2}\right).$$
\end{corollary}
In comparison, the best bound on the security property parameter one can obtain for the scheme of \citet{kirchenbauer2023watermark} is (a formal statement and proof are included in Appendix~\ref{sec:baseline_robustness})
$$\eta(\boldsymbol{y},\ek,\epsilon)  =   \frac{\sqrt{n}(z_{\boldsymbol{y}} - \tau)}{2+\gamma/2} \mathbf{1}\left(z_{\boldsymbol{y}}-\tau \geq \frac{\gamma \sqrt{n}}{2+\gamma/2}\right).
$$
To say it differently, our method, \method, utilizing a fixed Green-Red split, achieves \emph{twice the robustness} to edits compared to \citet{kirchenbauer2023watermark}'s baseline approach. %it is roughly only half as robust as our watermark.

\section{Analysis of \citet{kirchenbauer2023watermark}}\label{sec:analyze_kirchenbauer}
\subsection{Soft watermarking scheme of \citet{kirchenbauer2023watermark}}
This section illustrates the soft watermarking scheme proposed by \citet{kirchenbauer2023watermark}.  This straightforward algorithm only requires access to the language model's logits at each time step. Let $\boldsymbol{y} = \left[y_1, \ldots, y_n\right]$ represent the output sentence of language model $\mathcal{M}$ given the prompt $\boldsymbol{x}$. The watermarking scheme generates $\boldsymbol{y}_{1:n}$ by hashing $y_{t-1}$ to a partition of the token space (Green List and Red List) and amplifies the probability of tokens on the Green List. Specifically, $\left[y_1, \ldots, y_n\right]$ is derived from the following Markov chain:
\begin{enumerate}
    \item $y_1 \sim  \text{Softmax}\big(\text{logits}_{\mathcal{M}}\big(y_1=\cdot | x\big)\big)$
    \item For $t=2:n$, 
    $$
    y_t \sim \text{Softmax}\big(\text{logits}_{\mathcal{M}}(y_t = \cdot | [\boldsymbol{x}, y_1 \ldots, y_{t-1}]) + \delta \mathbf{1}(\cdot \in \text{Green}(y_{t-1}))\big)
    $$
\end{enumerate}
Typically, $\gamma|\mathcal{V}|$ tokens are selected to form a Green List, where $\gamma$ symbolizes the fraction of tokens to be watermarked (by default, $\gamma = 0.5$). The logit value for each green token is augmented by a constant $\delta$ (default value $= 2$), which denotes the watermark strength. This elevation enhances the likelihood of sampling green, watermarked tokens, particularly for high-entropy distributions. 

Validation of whether a text was generated by a watermarked language model is achievable given knowledge of the hash function and tokenizer.  The adversary constructs $\boldsymbol{u} = \left[u_1, \ldots, u_m\right]$ from $\boldsymbol{x}, \boldsymbol{y}_{1:n}$ and any auxiliary input. The detection algorithm calculates the quantity of green tokens $|\boldsymbol{u}|_G = \sum_{t=2}^m \mathbf{1}(u_t \in \text{Green}(u_{t-1}))$. One can assume the null hypothesis, denoted as \textit{$H_0$: The text sequence is produced independently of the green list rule}. Following this, a $z$-statistic score is computed as $z=\left(|\boldsymbol{u}|_G-\gamma m\right) / \sqrt{m \gamma(1-\gamma)}$. If the $z$-score exceeds a predetermined threshold, the algorithm declares, ``This was generated from $\hat{\mathcal{M}}$!''.

\subsection{Security property of \citet{kirchenbauer2023watermark}}\label{sec:baseline_robustness}
We also demonstrate the robustness property of the soft watermarking algorithm in \citet{kirchenbauer2023watermark} in the following Theorem \ref{thm:robust_base} 

% Let $\boldsymbol{u} = \left[u_1, \ldots, u_n\right]$ represent the modified text by the adversary and $\boldsymbol{y} = \left[y_1, \ldots, y_n\right]$ the watermarked sequence. $\eta$ is the edit distance where $\ed{(\boldsymbol{y},\boldsymbol{u})} < \eta(|\boldsymbol{u}|)$. If the adversary does not evades detection, then the soft watermarking of \cite{kirchenbauer2023watermark} satisfy:

% \begin{theorem}[Robustness to editing in the watermarking scheme of \citet{kirchenbauer2023watermark}]\label{thm:robust_base}
% Let $\boldsymbol{y} = \left[y_1, \ldots, y_n\right]$ represent the watermarked sequence. 
% Suppose the adversary $\mathcal{A}$ follows the definition \ref{def:wm} and outputs a modified text $\boldsymbol{u}$. Then the soft watermarking of \cite{kirchenbauer2023watermark} works when the following holds:
% $$
% \sum_{t=2}^n \mathbf{1}(u_t \in \text{Green}(u_{t-1}))  \geq \sum_{t=2}^n \mathbf{1}(y_t \in \text{Green}(y_{t-1})) - 2\eta(|\boldsymbol{y}|). 
% $$
% \end{theorem}

%\yw{To concretely show that our scheme is twice as robust, we need to construct an example to show that the analysis is tight for Theorem~\ref{thm:robust_base}... of course we should show the tightness for Theorem~\ref{thm:fixed} too.}

\begin{theorem}[Robustness to editing in the watermarking scheme of \citet{kirchenbauer2023watermark}]\label{thm:robust_base}
Let $\boldsymbol{y} = \left[y_1, \ldots, y_n\right]$ represent the watermarked sequence. 
Suppose the adversary $\mathcal{A}$ follows the definition \ref{def:wm} and outputs a modified text $\boldsymbol{u} = \left[u_1, \ldots, u_m\right]$. Following Equation \ref{eq:z}, we calculate the $z$-score of the soft watermarking \cite{kirchenbauer2023watermark} $z_{\boldsymbol{y}}$ and $z_{\boldsymbol{u}}$. Then we have 
$$
% \sum_{t=2}^n \mathbf{1}(u_t \in \text{Green}(u_{t-1}))  \geq \sum_{t=2}^n \mathbf{1}(y_t \in \text{Green}(y_{t-1})) - 2\eta(|\boldsymbol{u}|). 
z_{\boldsymbol{u}} \geq z_{\boldsymbol{y}} - \max\{ \frac{(2+\gamma/2) \eta}{\sqrt{n}},  \frac{(2-\gamma/2)\eta}{\sqrt{n-\eta}} \}.
$$
\end{theorem}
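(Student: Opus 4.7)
The plan is to mirror the proof of Theorem~\ref{thm:fixed} step by step, adjusting only for the one structural difference between the two schemes: in the hash-based scheme of \citet{kirchenbauer2023watermark}, the green list at position $t$ is a function of the previous token $y_{t-1}$, rather than being fixed. The consequence for a robustness analysis is immediate: a single insert/delete/substitute operation at position $t$ can flip at most \emph{two} green/red membership checks (the check at the edit site, which reads off the modified token, and the check at position $t+1$, whose green list is indexed by the modified token), while still changing the sequence length by at most one.

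Concretely, I would let $f(x,y) = (x-\gamma y)/\sqrt{y}$ so that $z_{\boldsymbol{u}} - z_{\boldsymbol{y}} = f(|\boldsymbol{u}|_G, m) - f(|\boldsymbol{y}|_G, n)$, and write $k_x := |\boldsymbol{y}|_G - |\boldsymbol{u}|_G$, $k_y := n - m$. The first step is a combinatorial lemma: for any $\boldsymbol{u}$ obtained from $\boldsymbol{y}$ by $\eta$ edits under the \citet{kirchenbauer2023watermark} scheme, $|k_x| \le 2\eta$ and $|k_y| \le \eta$. I would justify this by a case analysis over the three edit types, observing that in each case the ``ripple'' extends at most one position beyond the edit site (and is in fact smaller at the sequence boundary, which only helps).

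The second step is to redo the Taylor-expansion calculation in the proof of Theorem~\ref{thm:fixed} verbatim, but with the new bound $|k_x| \le 2\eta$ in place of $|k_x| \le \eta$ (the bound on $|k_y|$ is unchanged). Expanding $f$ to first order yields
\begin{equation*}
f(|\boldsymbol{y}|_G - k_x,\, n - k_y) = f(|\boldsymbol{y}|_G, n) - \frac{k_x - (\gamma/2)\,k_y}{\sqrt{n - \tilde{k}_y}}
\end{equation*}
for some $\tilde{k}_y$ between $0$ and $k_y$. Maximizing the second term subject to $|k_x| \le 2\eta$, $|k_y| \le \eta$ splits into the same two cases as before (sign of $k_y$), with the positive-$k_y$ case further split according to the sign of $k_x - \gamma y/2$ to exploit the monotonicity/convexity of $a/u + bu$. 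Substituting $2\eta$ everywhere $\eta$ appeared in the worst-case $k_x$ slot produces the constants $(2+\gamma/2)$ and $(2-\gamma/2)$ instead of $(1+\gamma/2)$ and $(1-\gamma/2)$, giving exactly the bound in the theorem statement.

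I do not anticipate a serious obstacle: the Taylor-optimization machinery is already carried out in full in the proof of Theorem~\ref{thm:fixed}, and the only nontrivial content is the per-edit ``factor of two'' bound on $k_x$. The one subtle point to state cleanly is that a substitution at position $t$ can flip the checks at both $t$ and $t+1$ simultaneously, and likewise that an insertion/deletion shifts the sequence by one position so that the check whose prefix token has changed is at position $t+1$ (or $t$ after a deletion) — a careful enumeration of these cases is the only place where the argument genuinely differs from the fixed-partition proof.
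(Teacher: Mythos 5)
Your proposal is correct and follows essentially the same route as the paper: the paper likewise reduces the claim to the observation that each edit perturbs $|\boldsymbol{y}|_G$ by at most $2$ (phrased there as each edit destroying at most two elements of the set of bigrams $\{u_{t-1},u_t\}$, with the same case analysis over insertion, deletion, and substitution), and then reuses the Taylor-expansion optimization from Theorem~\ref{thm:fixed} with $2\eta$ in place of $\eta$ in the $k_x$ slot while keeping $|k_y|\leq\eta$. Your per-edit accounting of which membership checks flip is just a restatement of that bigram argument, so there is no gap.
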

\begin{proof}
The proof is similar to that of Theorem~\ref{thm:fixed} except that the maximum perturbation to $|\mathbf{y}|_G$ is now $2\eta$ rather than $\eta$.   We now justify that the maximum perturbation has really doubled below, but ignore the part that is the same as in the proof of Theorem~\ref{thm:fixed}.

Let $\mathsf{BiGrams}(\boldsymbol{u}) = \{ \{u_1,u_2\}, \{u_2,u_3\},...,\{u_{n-1},u_n\}  \}$ and similarly $\mathsf{BiGrams}(\boldsymbol{y})$ enumerates the set of all two grams in sequence $\boldsymbol{y}_{1:m}$.

We claim that each edit can modify at most two elements in the above set. To see this, consider ``insertion'', ``deletion'', and ``edit'' separately. 
\begin{itemize}[leftmargin=*]
    \item If we ``insert'' one token $\tilde{u}$ at $t$, then $\{u_{t-1},u_{t}\}$ and $\{u_{t},u_{t+1}\}$ become $\{u_{t-1},\tilde{u}\}$, $\{\tilde{u},u_t\}$ and $\{u_{t},u_{t+1}\}$. Only one element of $\mathsf{BiGrams}(\boldsymbol{u})$ is modified --- $\{u_{t-1},u_{t}\}$. 
    \item For ``deletion'' at $t$, $\{u_{t-1},u_{t}\}$ and $\{u_{t},u_{t+1}\}$ become $\{u_{t-1},u_{t+1}\}$. So two elements from $\mathsf{BiGrams}(\boldsymbol{u})$ are gone.
\item For ``edit'' at $t$, $\{u_{t-1},u_{t}\}$ and $\{u_{t},u_{t+1}\}$ become $\{u_{t-1},\tilde{u}\}$ and $\{\tilde{u},u_{t+1}\}$. Thus again only two elements from $\mathsf{BiGrams}(\boldsymbol{u})$ are gone.
\end{itemize}

It follows that when $\boldsymbol{y}$ is obtained after up to $\eta$ edits
$$ | \mathsf{BiGrams}(\boldsymbol{u}) \cap \mathsf{BiGrams}(\boldsymbol{y}) | \geq | \mathsf{BiGrams}(\boldsymbol{u})| - 2\eta$$

Observe that $\sum_{t=2}^n \mathbf{1}(u_t \in \text{Green}(u_{t-1}))$ counts the number of qualifying elements in $\mathsf{BiGrams}(\boldsymbol{u})$, which completes the proof.
\end{proof}

\textbf{For this reason, our watermark is twice as robust as that of \citet{kirchenbauer2023watermark}. This provides the theoretical guarantee to our empirical results presented in the experiments!}

\begin{remark}
We can view our watermark as a trivial Markovian watermarking scheme with $k=0$, and what \citet{kirchenbauer2023watermark} proposed to be $k=1$. 
    For the more general $k$-Markovian watermarking scheme that depends on a prefix of length $k$, the robustness deteriorates by a factor of $k$, as the maximum perturbation will become  $\frac{((k+1)+\gamma/2) \eta}{\sqrt{n}}$.
    To say it differently, choosing $k=0$ gives the maximum robustness and maximum simplicity at the same time, and the benefit leads to significant gains in our experiments, especially against paraphrasing attacks.
\end{remark}

\section{Alternative detector ``Unique'' and its desirable properties}\label{sec:detect_unique}

Our theoretical analysis suggests a promising alternative $\extract{}$ algorithm for \method that simply involves calling Algorithm~\ref{alg:detect} with a deduplicated $\boldsymbol{y}$. 
\begin{algorithm}[tbh]
   \caption{\method: $\extract$ (Alternative)}
   \label{alg:detect2}
\begin{algorithmic}[1]
   \STATE {\bfseries Input:} suspect text $\boldsymbol{y}$, watermark detection key $\ek$, %language model $\mathcal{M}$, 
   threshold $\tau$. \\
   \STATE {\bfseries Output:} 1 or 0 (whether the text is watermarked).
  %of the language model $\mathcal{M}$.
   \STATE {\bfseries Return} Algorithm~\ref{alg:detect} with suspect text $\mathrm{Unique}(\boldsymbol{y})$, detection key $\ek$ and threshold $\tau$. 
\end{algorithmic}
\end{algorithm}

The simple change actually results in a number of interesting new properties. For example, we can state its Type I error bound a lot more cleanly now as a Corollary of Theorem~\ref{thm:no_false_positive}
\begin{corollary}[No false positive for Deduplicated Detection]
    Consider $\boldsymbol{y} = \boldsymbol{y}_{1:n}$ as any \emph{fixed} suspect text. Let $m = |\mathrm{Unique}(\boldsymbol{y})|$ be the number of unique tokens in $\boldsymbol{y}$. Let $G$ be selected through Algorithm~\ref{alg:wm}, using a uniform random choice. Then the following statements hold true:
\begin{enumerate}[leftmargin=*]
    \item Assume $m\geq 1$, then $$\E[|\mathrm{Unique}(\boldsymbol{y})|_G | \boldsymbol{y}] = \gamma n \quad \text{ and }\quad \E[ z_{\mathrm{Unique}(\boldsymbol{y})} | \boldsymbol{y}] = 0.$$
    
    \item With probability $1-\alpha$ (over only the randomness of $G$), 
    $$
 \P\left[|\mathrm{Unique}(\boldsymbol{y})|_G \geq \gamma m + \sqrt{64\gamma m \log(9/\alpha)} + \log(9/\alpha) \middle| \boldsymbol{y}\right]\leq \alpha
    $$
    or equivalently (when $n\geq 1$)
    $$
        \P\left[z_{\mathrm{Unique}(\boldsymbol{y})} \geq  \sqrt{\frac{64 \log(9/\alpha) }{(1-\gamma)}} + \frac{\log(9/\alpha)}{\sqrt{m \gamma(1-\gamma)}} \middle| \boldsymbol{y}\right] \leq \alpha.
    $$
\end{enumerate}
\end{corollary}
The above gives a clean finite-sample concentration bound of the Type I error using Algorithm~\ref{alg:detect2}. Notably, while deduplicating reduces the length of the suspect text, i.e., $m<n$, it improves the bound by ensuring both $C_{\max
}$ and $V$ are $1$. 

\begin{remark}[Asymptotic choice of $\tau$ for controlling false positives]
    Lemma~\ref{lem:variance} gives that 
$$\mathrm{Var}\left[|\mathrm{Unique}(\boldsymbol{y})|_G \middle| \boldsymbol{y} \right] = m \gamma(1-\gamma) (1-\frac{m-1}{N-1})$$
i.e., the conditional variance of $z_{\mathrm{Unique}(\boldsymbol{y})}$ is $(1-\frac{m-1}{N-1})$. This means that if we want to control the asymptotic false positive rate to $\alpha$, all we have to do is to choose the threshold $\tau$ to be 
\begin{equation}\label{eq:choice_tau}
    \tau = \sqrt{1-\frac{m-1}{N-1}}\Phi^{-1}(1-\alpha)
\end{equation}
where $\Phi$ is the standard normal CDF.
\end{remark}

\noindent\textbf{Type II error.} How about Type II error? Our results in Theorem~\ref{thm:only_true_detection} are still applicable but require us to apply that with a special language model derived from the original that directly generates $\mathrm{Unique}(\boldsymbol{y})$ (ordered in the same order they appear in $\boldsymbol{y}$). This is still a valid autoregressive language model but has different roll-out probabilities. 

\textbf{Robustness to Edits.} Observe that adding/removing/replacing one token to $\boldsymbol{y}$ in the results in adding/removing/replacing one token to $\mathrm{Unique}(\boldsymbol{y})$ respectively, the robustness of the $z$-score for $\mathrm{Unique}(\boldsymbol{y})$ thus directly follows Theorem~\ref{thm:robust_base}.

\textbf{``Unique'' in $K$-gram watermark section with $K\geq 2$.} Clearly, the same idea of deduplication works for the whole family of $K$-gram watermark proposed in \citet{kirchenbauer2023watermark}. In fact, it was briefly mentioned in a remark from their paper as a mitigation measure to reduce correlation. All arguments we make about Type I error and Robustness to Edits above work for $K\geq 2$.  We defer the Type II error bound for this family to a longer version of the paper.

\textbf{Emperical analysis on controlling false positives.} We conduct experiments to demonstrate the results for the asymptotic choice of $\tau$ in controlling false positives. The negative examples are sampled from diverse datasets, including human data in LFQA and OpenGen dataset \citep{Krishna2023ParaphrasingED}, C4 dataset \citep{raffel2020exploring}, and TOEFL dataset \citep{Liang2023GPTDA}. In total, we collect 6,200 unwatermarked text samples with varied lengths. We then use the dynamic threshold $\tau$ with different choices of $\alpha$ as shown in Equation~\ref{eq:choice_tau}. By choosing different random seeds, we obtain different green lists. The results in Figure~\ref{fig:type1} show the empirical false positive rate aligns well with the theoretical $\alpha$.

\begin{figure}
    \centering
    \includegraphics[width=0.7\linewidth]{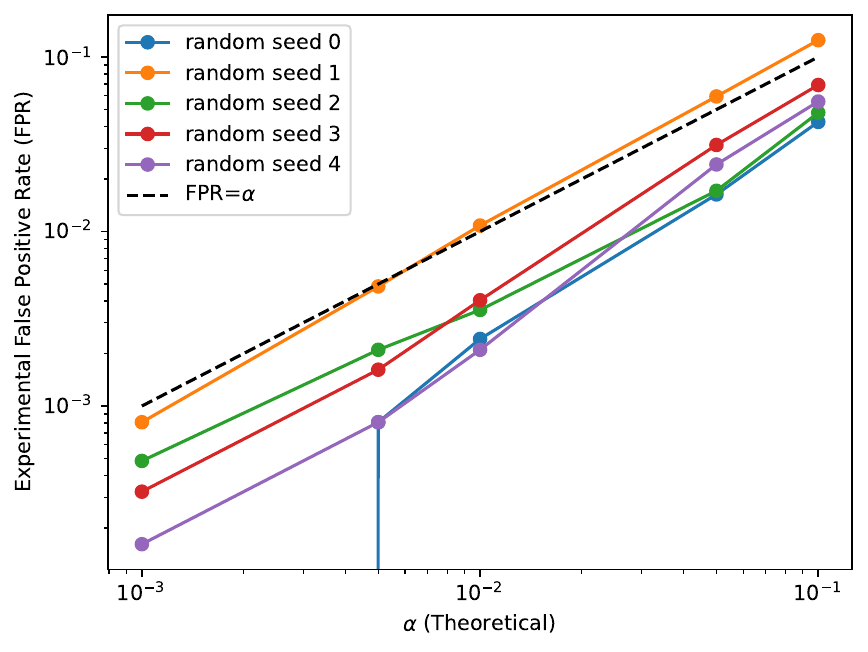}
    \caption{Empirical vs. theoretical false positive rates across various $\alpha$ values, using multiple green list initializations.}
    \label{fig:type1}
\end{figure}

\subsection{Alternative detection ``Unique'' is robust to Emoji attack and other tricky attacks} \label{sec:tricky_attacks}

\citet{kirchenbauer2023watermark} discussed a number of interesting attacks on the $K$-gram watermarks. In this section, we inspect the robustness of \method (with both Algorithm~\ref{alg:detect} and ~\ref{alg:detect2} as $\extract{}$) to these attacks.

We will focus on those trickier generative attacks, as those non-generative attacks on the surface level (e.g., synonym substitution, Unicode substitution) were rather satisfactorily addressed in \citet{kirchenbauer2023watermark}. The same arguments work for \method{}. However, there are trickier ones that break $K$-gram watermarks for $K\geq 2$ but not for $K=1$, especially when we use Algorithm~\ref{alg:detect2} for detection.

\begin{description}
    \item[Emoji attack]  the Emoji attack,  also known as the Pineapple attack, asks the language model to inject a special symbol, e.g., an Emoji, in between the actual text that the LM is supposed to generate in response to a prompt. For example, a user of the language model can prompt an LM with
    ``\texttt{Write my college admission essay. Insert an emoji in between every word.}''. Then the user can simply remove the artificially injected symbol before submitting the essay. Clearly, this attack breaks all $K$-gram watermarks for $K>1$ \citep{kirchenbauer2023watermark} (but also \citep{aaronson}).  Our \method{} remains effective because half of the tokens in $\boldsymbol{y}$ are still watermarked.  The repeated Emoji symbol with length $n/2$ is a problem for both Type I and Type II error, but after ``Unique'', this corresponds to just an Edit Distance of $1$ on $\mathrm{Unique}(\boldsymbol{y})$! 
    \item[Alphabet attack]  We can also make the Emoji attack stronger by injecting a known sequence of ``alphabets'' instead of one single symbol.  For example, ``\texttt{Write my college admission essay. Insert, in the order of the vocabulary, a token in between every word you generate.}''. The alphabet attack breaks $K$-gram watermarks in the same way an Emoji attack does, but since the added tokens are different, ``Unique'' does not solve it right away.  Interestingly, despite $m/2$ of the tokens in $\mathrm{Unique}(\boldsymbol{y})$ are not watermarked, as long as the Alphabet being used is independent of the secret key, it does not change the Type I error and only slightly reduces the power (i.e., 1-Type II error) since the expected number of Green tokens in that $m/2$ injected tokens is $m \gamma/2$. 
    \item[Stegnography attack] One may extend the attack even further by asking the language model to encode a message, which swaps each token in the vocabulary with another token through a secret codebook. For example, whenever you want to output Token $i$, output Token $\mathrm{mod}(i+1,N)$ instead.  If the ``code book'' is supplied in the prompt with an instruction for the LM to follow the code book when generating the text, then it really breaks all watermarks including ours, while allowing the user who knows the code book to easily revert it to the original text. The issue of such an attack is that it requires significantly heavy-lifting for the language model to predict outside the typical distribution it is trained on.  There is no real risk of such an attack being employed as it is likely to significantly reduce the quality of the generated text.
\end{description}

To be clear, these attacks are, in fact, not post-processing-based evasion attacks, but rather hacks into prompts. Nevertheless, our watermark that is robust to edits turns out to be quite resilient to them.

\section{Technical lemmas}

\begin{lemma}[Bernstein-style inequality for random permutation {\citep[Proposition 2.2]{albert2019concentration}}]\label{lem:perm_bernstein}
    Let $\{a_{i,j}\}_{1\leq i,j\leq n }$ be a collection of non-negative numbers and $\Pi_n$ be a random uniform permutation. Let $Z_n = \sum_{i=1}^na_{i,\Pi_n(i)}$. Then, for any $t> 0$
    $$
    \P\left[ |Z_n - \E[Z_n]| \geq 2\sqrt{\frac{t}{n}\sum_{i,j=1}^n a_{i,j}^2} + \max_{1\leq i,j\leq n}\{a_{i,j}\} t  \right] \leq 8 e^{1/16} e^{-\frac{t}{16}}.
    $$
\end{lemma}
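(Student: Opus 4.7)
The plan is to derive (or simply cite) the Bennett-type concentration bound of Albert (2019, Proposition 2.2). The summands $a_{i,\Pi_n(i)}$ are not independent — each row and each column of $(a_{i,j})$ contributes exactly once — so the classical Bernstein inequality does not apply directly. The standard route is Chernoff's exponential Markov inequality combined with a tailored bound on $\log \E \exp(\lambda(Z_n - \E Z_n))$ that exploits the structure of the uniform measure on the symmetric group $S_n$.

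The main step is to establish a sub-gamma MGF bound of the form $\log \E \exp(\lambda(Z_n - \E Z_n)) \leq \lambda^2 v / (2(1 - c\lambda))$ valid for $0 < \lambda < 1/c$, with variance proxy $v \asymp (1/n)\sum_{i,j} a_{i,j}^2$ and scale $c \asymp \max_{i,j} a_{i,j}$. A natural route uses a Doob martingale on the filtration $\mathcal{F}_k = \sigma(\Pi_n(1), \ldots, \Pi_n(k))$: the differences $M_k - M_{k-1}$ with $M_k = \E[Z_n \mid \mathcal{F}_k]$ are bounded by $2 \max_{i,j} a_{i,j}$, and the sum of their conditional variances is $O(\sum_{i,j} a_{i,j}^2 / n)$ since, conditional on $\mathcal{F}_k$, the remaining assignment is uniform over permutations of the unused indices. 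The tighter constants in the stated bound are obtained via a Talagrand-style convex distance inequality on $S_n$, or equivalently a modified log-Sobolev inequality on the symmetric group, which is Albert's approach.

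Once the sub-gamma MGF estimate is in hand, inverting it by the standard Bennett/Bernstein optimization gives $\P[Z_n - \E Z_n \geq \sqrt{2vu} + cu] \leq e^{-u}$; applying the same argument to $-Z_n$, specializing $u = t/16$, and matching the values of $v$ and $c$ reproduces both the two-term deviation $2\sqrt{(t/n)\sum a_{i,j}^2} + (\max a_{i,j}) t$ and the probability bound $8 e^{1/16} e^{-t/16}$ (the numerical prefactor absorbs losses from the Talagrand bracketing step). The main obstacle is the MGF bound itself: from first principles one needs either a delicate conditional-variance computation along the Doob martingale — non-trivial because the law of $Z_n$ given $\mathcal{F}_k$ is not a product — or to import the log-Sobolev / convex-distance technology on $S_n$. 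Since the lemma is a verbatim restatement of \citep[Proposition 2.2]{albert2019concentration}, the cleanest proof is a direct citation; a fully self-contained derivation would essentially reproduce Section 2 of that paper.
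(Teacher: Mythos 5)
The paper offers no proof of this lemma at all---it is imported verbatim by citation to Proposition 2.2 of \citet{albert2019concentration}---and your conclusion that a direct citation is the right ``proof'' matches exactly what the paper does. Your sketch of the internal machinery is a reasonable outline of how such bounds are obtained (though Albert's actual argument goes through a Bennett-type inequality derived via Chatterjee's exchangeable-pairs method rather than a Talagrand convex-distance or log-Sobolev route), but since the paper treats this as a black-box external result, nothing more than the citation is required.
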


% \begin{lemma}[McDiarmid-Style inequality for random permutation {\citep[Theorem 3.1]{adamczak2016circular}}]\label{lem:perm_mcdiarmid}
%     Consider $x_1,...,x_n\in[0,1]$ and let $\varphi: [0,1]^n \rightarrow \R$ be an $L$-Lipschitz convex function. Let $\Pi_n$ be a random uniform permutation of the set $\{1,...,n\}$ and denote $Y=\varphi(x_{\Pi_n(1)},...,x_{\Pi_n(n)})$. Then, there exists some positive absolute constant $c$ such that for all $t>0$,
%     $$
%     \P[ Y - \E[Y] \geq t] \leq 2 \exp(-\frac{ct^2}{L^2}).
%     $$
% \end{lemma}

\begin{lemma}[Variance for sampling without replacement]
Let $x_1,...,x_N\in \R$. For any sample size $1 \leq n\leq N$, and $\pi$ be a random permutation of $\{1,2,...,N\}$. The variance of $X = \frac{1}{n}\sum_{i=1}^n x_{\pi(i)}$ satisfies
$$
\mathrm{Var}(X) =\frac{1}{nN}\sum_{i=1}^N (x_i-\bar{x})^2 (1 - \frac{n-1}{N-1}).
$$
\end{lemma}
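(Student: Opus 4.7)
The plan is to reduce the variance of $X$ to a sum of variances and covariances of the entries $x_{\pi(i)}$, exploit the symmetry of the random permutation to collapse these into a single unknown covariance, then pin down that covariance via the degenerate case $n=N$.

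First I would observe that by symmetry $\E[x_{\pi(i)}] = \bar{x}$ for every $i$, so $\E[X] = \bar{x}$, and that $\mathrm{Var}(x_{\pi(i)}) = \sigma^2 := \frac{1}{N}\sum_{k=1}^N(x_k - \bar{x})^2$. Expanding the variance of the sum gives
\[
\mathrm{Var}\!\left(\sum_{i=1}^n x_{\pi(i)}\right) = n\sigma^2 + n(n-1)\,\mathrm{Cov}(x_{\pi(1)}, x_{\pi(2)}),
\]
where symmetry of the uniform permutation guarantees that $\mathrm{Cov}(x_{\pi(i)}, x_{\pi(j)})$ takes the same value for every $i \neq j$.

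Next I would determine this common covariance by the standard trick of setting $n = N$: the sum $\sum_{i=1}^N x_{\pi(i)}$ is the deterministic quantity $N\bar{x}$, so its variance is zero. Plugging into the identity above yields $N\sigma^2 + N(N-1)\,\mathrm{Cov}(x_{\pi(1)}, x_{\pi(2)}) = 0$, hence $\mathrm{Cov}(x_{\pi(1)}, x_{\pi(2)}) = -\sigma^2/(N-1)$.

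Finally I would substitute back, obtaining
\[
\mathrm{Var}\!\left(\sum_{i=1}^n x_{\pi(i)}\right) = n\sigma^2 - \frac{n(n-1)\sigma^2}{N-1} = n\sigma^2 \cdot \frac{N-n}{N-1},
\]
divide by $n^2$ to pass from the sum to $X$, and rewrite $(N-n)/(N-1) = 1 - (n-1)/(N-1)$ and $\sigma^2/n = \frac{1}{nN}\sum_{i=1}^N (x_i-\bar{x})^2$ to match the claimed form. There is no real obstacle here — this is a classical calculation — the only thing to be careful about is bookkeeping the factor $1/n$ versus $1/n^2$ and ensuring the finite-population correction factor is written in the form stated in the lemma.
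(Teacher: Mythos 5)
Your proof is correct. The paper states this lemma without proof (it appears in the Technical Lemmas appendix as a standard fact), so there is no argument to compare against; your exchangeability-plus-degenerate-case derivation is the classical one and fills the gap cleanly. All the steps check out: the common covariance $-\sigma^2/(N-1)$ does not depend on $n$, the $n=N$ case legitimately pins it down, and $\frac{\sigma^2}{n}\cdot\frac{N-n}{N-1}$ matches the stated form after rewriting $\frac{N-n}{N-1} = 1 - \frac{n-1}{N-1}$.
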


\begin{definition}[Martingale]
A sequence of random variables $(X_n)_{n\in\mathbb{N}}$ is called a \textit{martingale} if it satisfies the following conditions:
\begin{enumerate}
  \item $\E[|X_n|] < \infty$ for all $n \in \mathbb{N}$.
  \item $\E[X_{n+1} | \mathcal{F}_n] = X_n$ for all $n \in \mathbb{N}$.
\end{enumerate}
where $\mathcal{F}_1\subseteq \mathcal{F}_2 \subseteq ... \subseteq \mathcal{F}_n \subseteq \mathcal{F}_{n+1}\subseteq ...$ is a filtration. 
Specifically, $\mathcal{F}_{n}$ can be the sigma-algebra generated by another sequence of random variable $Y_1,...,Y_n$, i.e., $\mathcal{F}_{n} = \sigma(Y_{1:n})$ and $X_n$ can be a function of $Y_{1:n}$.
\end{definition}

\begin{lemma}[Azuma-Hoeffding Inequality]
Let $(X_n)_{n\in\mathbb{N}}$ be a martingale such that $|X_{n+1} - X_n| \leq c_n$ for some constants $c_n$ and all $n \in \mathbb{N}$. Then for all $t > 0$ and $n \in \mathbb{N}$, we have 

\[
\P\left( |X_n - X_0| \geq t \right) \leq 2\exp\left(-\frac{t^2}{2\sum_{i=1}^{n}c_i^2}\right).
\]

\end{lemma}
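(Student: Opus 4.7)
The plan is to prove the Azuma--Hoeffding inequality via the classical Chernoff-plus-Hoeffding-lemma route. First, I would decompose $X_n - X_0 = \sum_{i=1}^n D_i$ where $D_i := X_i - X_{i-1}$ is the martingale difference sequence, noting that $\E[D_i \mid \cal F_{i-1}] = 0$ and $|D_i| \leq c_i$ almost surely by the bounded-increment hypothesis. Then, for any $\lambda > 0$, I would apply Markov's inequality to the exponential moment in the standard Chernoff fashion:
\[
\P(X_n - X_0 \geq t) \;\leq\; e^{-\lambda t} \, \E\!\left[\exp\!\bigl(\lambda \textstyle\sum_{i=1}^n D_i\bigr)\right].
\]

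Next, the key lemma I would use (Hoeffding's lemma) states that for any random variable $Y$ with $\E[Y \mid \cal G] = 0$ and $|Y| \leq c$ almost surely, one has $\E[e^{\lambda Y} \mid \cal G] \leq e^{\lambda^2 c^2 / 2}$. The one-line proof is by convexity of $y \mapsto e^{\lambda y}$, bounding it by the linear interpolant on $[-c, c]$, taking conditional expectation (which kills the linear part since $\E[Y \mid \cal G] = 0$), and then bounding the remaining symmetric function of $\lambda c$ by $e^{\lambda^2 c^2 / 2}$ using a Taylor expansion of its logarithm. I would then iterate by conditioning on $\cal F_{n-1}, \cal F_{n-2}, \ldots, \cal F_0$ in turn, using the tower property: at step $i$, the innermost factor $\E[e^{\lambda D_i} \mid \cal F_{i-1}]$ is bounded by $e^{\lambda^2 c_i^2 / 2}$ (a constant w.r.t.\ the remaining randomness), which can be pulled out of subsequent conditional expectations. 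This yields
\[
\E\!\left[\exp\!\bigl(\lambda \textstyle\sum_{i=1}^n D_i\bigr)\right] \;\leq\; \exp\!\Bigl(\tfrac{\lambda^2}{2}\textstyle\sum_{i=1}^n c_i^2\Bigr).
\]

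Finally, I would optimize over $\lambda$: substituting the bound above into the Chernoff inequality gives $\P(X_n - X_0 \geq t) \leq \exp\bigl(-\lambda t + \tfrac{\lambda^2}{2}\sum c_i^2\bigr)$, and choosing $\lambda = t / \sum_{i=1}^n c_i^2$ yields the one-sided tail $\exp\bigl(-t^2 / (2\sum c_i^2)\bigr)$. Applying the identical argument to the martingale $-X_n$ (which also has increments bounded by $c_i$) gives the matching lower-tail bound, and a union bound over the two tails produces the factor of $2$ in the final statement.

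The main obstacle, and the only step that requires genuine care rather than routine manipulation, is Hoeffding's lemma itself --- specifically, verifying the inequality $\log \cosh(\lambda c) \leq \lambda^2 c^2 / 2$ (or the analogous bound for asymmetric intervals) via an examination of the second derivative of the cumulant generating function. Everything else is bookkeeping: the tower property for iterated conditional expectations, the optimization in $\lambda$, and the symmetrization via the two-sided union bound.
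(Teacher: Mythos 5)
Your proof is correct: it is the standard Chernoff-bound argument for Azuma--Hoeffding (martingale-difference decomposition, conditional Hoeffding lemma, tower-property iteration, optimization in $\lambda$, and a two-sided union bound), and every step you outline goes through. The paper states this lemma as an off-the-shelf technical tool without providing any proof, so there is nothing to compare against; your write-up would simply supply the standard textbook argument that the paper implicitly relies on.
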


\end{document}